\newtheorem{theorem}{Theorem}
\newtheorem{lemma}{Lemma}
\newtheorem{proposition}{Proposition}
\newtheorem{assumption}{Assumption}
\newtheorem{remark}{Remark}
\newcommand{\w}{\mathbf{w}}
\newcommand{\bu}{\mathbf{u}}
\newcommand{\defeq}{:=}
\newcommand{\norm}[1]{\left\lVert#1\right\rVert}
\newcommand{\normop}[1]{\left\lVert#1\right\rVert_{\textup{op}}}
\newcommand{\npq}[1]{\vvvert #1 \vvvert_{p,q}}
\newcommand{\npp}[1]{\vvvert #1 \vvvert_{p,p}}
\newcommand{\np}[1]{\norm{#1 }_{p}}
\newcommand{\eps}{\epsilon}
\newcommand{\iid}{\textup{iid}} 
\newcommand{\R}{\mathbb{R}}
\newcommand{\bas}[1]{\begin{align*}#1\end{align*}}
\newcommand{\ba}[1]{\begin{align}#1\end{align}}
\newcommand{\bbb}[1]{\left[#1\right]}
\newcommand{\bk}{\color{black}}
\newcommand{\E}{\mathbb{E}}
\newcommand{\Var}{\textup{Var}}
\newcommand{\Tr}{\textup{Tr}}
\newcommand{\id}{\mathbf{I}}
\newcommand{\bb}[1]{\left(#1\right)}
\definecolor{burntorange}{rgb}{0.8, 0.33, 0.0}
\newcommand{\Abs}[1]{\left|#1\right|}
\newcommand{\Prob}{\mathbb{P}}
\newcommand{\Nu}{\mathcal{V}}
\newcommand{\M}{\mathcal{M}}
\newcommand{\vv}{\mathbf{v}}
\newcommand{\vu}{\mathbf{u}}
\newcommand{\vz}{\mathbf{z}}
\newcommand{\vw}{\mathbf{w}}
\newcommand{\vx}{\mathbf{x}}
\newcommand{\vy}{\mathbf{y}}
\newcommand{\vxi}{{\boldsymbol{\xi}}}
\newcommand{\by}{\mathbf{y}}
\newcommand{\bZ}{\mathbf{Z}}
\newcommand{\bY}{\mathbf{Y}}
\newcommand{\bA}{\mathbf{A}}
\newcommand{\bX}{\mathbf{X}}
\newcommand{\bD}{\mathbf{D}}
\newcommand{\vs}{\mathbf{s}}
\newcommand{\nlerr}{\delta_0\sqrt{d}+\zeta}
\newcommand{\bSig}{\mathbf{\Sigma}}
\newcommand{\bg}{\mathbf{g}}
\newcommand{\Fi}{\mathcal{F}_{i-}}
\newcommand{\abs}[1]{\left|#1\right|}
\newcommand{\q}{\mathcal{Q}}
\newcommand{\vXi}{\boldsymbol{\Xi}}
\newcommand{\ql}{\mathcal{Q}_{L}}
\newcommand{\qnl}{\mathcal{Q}_{NL}}
\newcommand{\vp}{\mathbf{V_{\perp}}}
\DeclareMathOperator{\quant}{\textsf{Q}}
\newcommand{\eigengap}{\lambda_1-\lambda_2}
\newcommand{\qsin}{\tilde{\rho}}
\newcommand{\SuccessBoost}{\text{SuccessBoost}}
\title{Low-Precision Streaming PCA}
\date{}
\author{Sanjoy Dasgupta\thanks{University of California San Diego,\texttt{sadasgupta@ucsd.edu}} 
\and Syamantak Kumar\thanks{University of Texas at Austin, \texttt{syamantak@utexas.edu}}
\and Shourya Pandey\thanks{University of Texas at Austin, \texttt{shouryap@utexas.edu}} \and Purnamrita Sarkar\thanks{University of Texas at Austin, \texttt{purna.sarkar@utexas.edu}}}
\begin{document}

\maketitle
\vspace{-5pt}
\begin{abstract}
\vspace{-2pt}
Low-precision Streaming PCA estimates the top principal component in a streaming setting under limited precision.  We establish an information‐theoretic lower bound on the quantization resolution required to achieve a target accuracy for the leading eigenvector. We study Oja's algorithm for streaming PCA under linear and nonlinear stochastic quantization. The quantized variants use unbiased stochastic quantization of the weight vector and the updates. Under mild moment and spectral-gap assumptions on the data distribution, we show that a batched version achieves the lower bound up to logarithmic factors under both schemes. This leads to a nearly \textit{dimension-free} quantization error in the nonlinear quantization setting. Empirical evaluations on synthetic streams validate our theoretical findings and demonstrate that our low-precision methods closely track the performance of standard Oja’s algorithm.
\end{abstract}

\vspace{-5pt}
\section{Introduction}
\vspace{-2pt}

Quantization (or discretization) is the mapping of a continuous set of values to a small, finite set of outputs close to the original values; standard methods for quantization include rounding and truncation. The current popularity of training large-scale Machine Learning models has brought a renewed focus on quantization, though its origins go back to the 1800s. Some early examples include least-squares methods applied to large-scale data analysis in the early nineteenth century~\cite{QHStigler1986History}. In 1867, discretization was introduced for the approximate calculation of integrals~\cite{QHRiemann1867TrigSeries}, and the effects of rounding errors in integration were examined in 1897~\cite{QHSheppard1897Rounding}. For an excellent survey and history of quantization, see~\cite{gholami2022survey}.

In the context of efficient model training, it is natural to ask the following: does training a model require the full precision of 32- or 64-bit representation, or is it possible to achieve comparable performance using significantly fewer bits? Mixed-precision training (using $16$-bit floats with $32$-bit accumulators) is now standard on GPUs and TPUs, yielding $1.5\times$ to $3\times$ speedups with negligible accuracy loss on large transformers and CNNs~\cite{micikevicius2018mixed}. Binary Neural Networks (BNNs), which constrain weights and activations to $\pm 1$, can achieve up to $32\times$ memory compression and replace multiplications with bitwise operations. This has been shown to approach nearly full-precision ImageNet accuracy with careful training \cite{hubara2016binarized}. 

Theoretical analysis of the effect of low-precision computation on optimization problems has received significant attention~\cite{LiDeSa19LowPrecision,QAlistarh2017QSGD,QDeSa2015Taming,QDeSa2018HighAccuracy,QLi2017Training,QZhang2017ZipML}. Complementary strategies leverage stochastic rounding to mitigate quantization bias during LLM training. Ozkara \emph{et al.}~\cite{ozkara2025stochastic} present theoretical analyses of implicit regularization and convergence properties of Adam when using BF16 with stochastic rounding, demonstrating up to \(1.5\times\) throughput gains and 30\% memory reduction over standard mixed precision \cite{ozkara2025stochastic}.    

Consider the set of values that can be exactly represented in the quantization scheme, which we call the \textit{quantization grid}. For example, fixed-point arithmetic~\cite{yates2009fixed} uses linear quantization (LQ), where the quantization grid consists of points spaced uniformly at a distance~$\delta$ (also denoted by \textit{quanta}).~\cite{QLi2017Training} analyze Stochastic Gradient Descent (SGD)-based optimization algorithms for LQ, and~\cite{sun2021learned} perform Learned Image Compression (LIC) under $8$-bit fixed-point arithmetic.
Nonlinear quantization (NLQ) grids with logarithmic spacing are also widely used~\cite{NLQkoester2017flexpoint,NLQnikolic2022schrodingers,NLQxu2024bitq,NLQyamaguchi2021training,NLQzhang2022fast,NLQzhou2023dybit} in low-precision training. 

To illustrate the importance of the quantization scheme, consider the example of rounding, where each input is mapped to the value in the quantization grid closest to it. The following toy iterative optimization algorithm demonstrates that rounding can cause the solution to remain stuck at the initial vector. Consider the update scheme $\vw_t = \vw_{t-1} + \eta \bg_t$, followed by rounding each coordinate of $\vw_t$. Here $\eta$ is the learning rate and $\bg_t$ is the gradient evaluated at time $t$. Suppose $\max_i\|\bg_t(i)\| \leq 1$. Assume that $\vw_0$ is quantized using the LQ scheme and that $\eta < \delta/2$. For any coordinate $i$, we have $\left|\vw_1(i) - \vw_0(i)\right| = \eta \cdot \left|\bg_t(i)\right| \le \eta$. Since $\eta < \delta/2$, after rounding, $\vw_1(i)$ is mapped back to the original quantized value $\vw_0(i)$, i.e., $\vw_1 = \vw_0$. As a result, the algorithm fails to make progress. We address this issue by using \textit{stochastic rounding}. In this approach, each value is randomly mapped to one of the closest two quanta with the probabilities chosen such that the quantized value is unbiased.

\paragraph{Principal Component Analysis.} PCA~\cite{pearson1901liii,ziegel2003principal} is a dimension-reduction technique that extracts the directions of largest variance from the data. Suppose we observe $n$ independent samples $\bX_i\in\mathbb{R}^d$ from a zero-mean distribution with covariance $\bSig$. PCA seeks a unit vector $\vv_1$ that maximizes variance, which is any eigenvector of $\bSig$ associated with its largest eigenvalue $\lambda_1$. Under mild tail conditions on the $\bX_i$, the top eigenvector $\hat \vv$ of the sample covariance $\frac{1}{n}\sum_{i=1}^n \bX_i \bX_i^\top$ is a nearly rate-optimal estimator of the true principal direction $\vv_1$~\cite{wedin1972perturbation,jain2016streaming,vershynin2010introduction}.

Despite its statistical appeal, constructing the covariance matrix itself takes $\Omega(nd^2)$ time and $\Omega(d^2)$ space, which is prohibitive for large $d$ and $n$. A popular remedy is Oja’s algorithm~\cite{oja1982simplified}, a \textit{single‐pass streaming algorithm} inspired by Hebbian learning~\cite{hebb1949organization}. Starting from a (random) unit vector $\vu_0$, for each incoming datum $\bX_i$ the algorithm performs the update
\ba{\label{eq:ojaupdate}
\vu_i \leftarrow \vu_{i-1} + \eta\,\bX_i\bigl(\bX_i^\top \vu_{i-1}\bigr),
\qquad
\vu_i \leftarrow \vu_i/\|\vu_i\|.
}
Here, $\eta>0$ is the learning rate which may vary across iterations. The batched version of Oja's method partitions the data into $b$ batches $B_1,\dots B_b$ of size $n/b$ each and replaces the above update with the averages of the gradients within a batch:
\ba{\label{eq:batchupdate}
\vu_i \leftarrow \vu_{i-1} + \eta\frac{\sum_{j\in B_i}\,\bX_j\bigl(\bX_j^\top \vu_{i-1}\bigr)}{n/b},
\qquad
\vu_i \leftarrow \vu_i/\|\vu_i\|.
}
The entire procedure completes in $O(nd)$ time and uses $O(d)$ space. The scalability and simplicity of Oja’s algorithm have motivated extensive analysis across statistics, optimization, and theoretical computer science~\citep{jain2016streaming,allenzhu2017efficient,chen2018dimensionality,yang2018history,henriksen2019adaoja,mouzakis2022spectral,monnez2022stochastic,kumarsarkar2024markovoja,kumarsarkar2024sparse, pmlr-v247-jambulapati24a, streaming_pca_uai_2025}. These works establish precise convergence rates, error bounds under various noise models, and extensions to sparse or dependent-data settings. 
When operating with $\beta$ bits, the overall complexity for streaming PCA (and that of the batched variant) grows polynomially with $\beta$ (for fixed $n, d$); Table~\ref{tab:beta-benchmark-flat} gives evidence towards this fact.

\begin{table}[h]
\centering
\begin{tabular}{lcc}
\hline
 & \textbf{64 bits} & \textbf{16 bits} \\
\hline
\textbf{Runtime (s)} & $0.0274 \pm 0.00136$ & $0.000398 \pm 0.0000235$ \\
\hline
\end{tabular}
\vspace{5pt}
\caption{Benchmarking runtimes\tablefootnote{The experiments were conducted by representing the data and intermediate variables in $\mathsf{double}$ precision (64 bits) and $\mathsf{half}$ precision (16 bits) datatypes.} for the experiment described in Appendix~\ref{appendix:additional_synthetic_exp}}
\label{tab:beta-benchmark-flat}
\end{table}

\vspace{-10pt}
\paragraph{Our Contributions.}
\begin{enumerate}
\item We present a general theorem for streaming PCA with iterates that are composed of independent data (as in standard Oja's algorithm) and a noise vector that is mean zero, conditioned on the filtration up until now, which may be of \textit{independent interest}. 

\item We obtain new \textit{lower bounds} for estimating the principal eigenvector under both quantization schemes. The quantization error depends linearly in the dimension $d$ for the linear scheme and dimension-independent (up to logarithmic factors) for the non-linear scheme.

\item Our batched version of Oja's algorithm matches the lower bounds under both quantization schemes. The quantization error of the batched version with logarithmic quantization is \textit{nearly dimension-free}. We also provide a procedure to make the failure probability of the algorithm arbitrarily small.

\end{enumerate}

Section~\ref{sec:setup} introduces the problem setup and defines the linear and logarithmic quantization schemes. Section~\ref{sec:main} presents the main results, including lower and upper bounds for Oja’s algorithm with and without batching for both quantization schemes. Section~\ref{sec:prooftech} provides proof sketches, Section~\ref{sec:experiments} reports experimental results, and Section~\ref{sec:conclusion} concludes the paper.

\vspace{-3pt}
\section{Problem Setup and Preliminaries}\label{sec:setup}
\vspace{-3pt}
We use $[n]$ to denote $\left\{i \in \mathbb{N} \;|\; i\leq n\right\}$. Scalars are denoted by regular letters, while vectors and matrices are represented by boldface letters. $\id \in \R^{d\times d}$ represents the $d$-dimensional identity matrix. $\norm{.}$ denotes the $\ell_{2}$ euclidean norm for vectors and $\normop{.}$ denotes the operator norm for matrices. For $a, b \in \R$, we write $a \lesssim b \text{ if and only if there exists an absolute constant } C > 0 \text{ such that } a \leq C b.$ $\tilde{O}, \tilde{\Omega}$ represent order notations that hide logarithmic factors. $\mathbb{S}^{d-1}$ is the set of unit vectors in $\R^d$.

We operate under the following assumption on the data distribution.
\medskip

\begin{assumption}\label{assumption:data_distribution} $\left\{\bX_{i}\right\}_{i \in [n]}$ are mean-zero $\iid$ vectors in $\R^{d}$ drawn from distribution $\mathcal{D}$ supported on the unit ball. Let $\bSig := \E_{\bX \sim \mathcal{D}}\bbb{\bX\bX^{\top}}$ denote the data covariance, with eigenvalues $\lambda_{1} > \lambda_{2}, \cdots, \lambda_{d}$ and corresponding eigenvectors $\vv_{1}, \vv_{2}, \cdots \vv_{d}$. We assume $\exists \Nu, \M > 0$ such that
\bas{
    \E_{\bX \sim \mathcal{D}}[\|\bX\bX^{\top}-\bSig\|^{2}] \leq \Nu \text{ and } \norm{\bX\bX^{\top} - \bSig}_{2} \leq \M \text{ almost surely for } \bX \sim \mathcal{D}. 
}
\end{assumption}

Assumption~\ref{assumption:data_distribution} enforces standard moment bounds used to analyze PCA in the stochastic setting. Similar assumptions are also used in \cite{hardt2014noisy, pmlr-v37-sa15, shamir2016pca, pmlr-v48-shamira16, jain2016streaming, allenzhu2017efficient,pmlr-v49-balcan16a, xu2018accelerated} to derive near-optimal sample complexity bounds for Oja’s rule. We assume a bounded range for ease of analysis, and it can be generalized to subgaussian data (see \cite{lunde2022bootstrapping, kumarsarkar2024sparse,liang2021optimality}).

The misalignment between the estimated top eigenvector $\vu$ and the true eigenvector $\vu_1$ is measured using the \textit{principal angle} between the two vectors. The \textit{sin-squared error} between any two non-zero vectors $\vu, \vv$ is defined as $\sin^2(\vu,\vv) = 1 - \frac{(\vu^\top \vv)^2}{\|\vu\|^2 \|\vv \|^2}.$

\subsection{Quantization Schemes and Rounding}
\textbf{Linear quantization}: Let $\delta>0$, and let $\beta > 0$ be the number of bits used by the low-precision model to represent numbers. A linear quantization scheme uniformly spaces on the real line. Define
\ba{\label{eq:quantspace}
\ql(\eps, \beta) := \left \{-\delta 2^{\beta-1}, -\delta((2^{\beta-1}-1)+1),\dots,-\delta,0,\delta,\dots,\delta(2^{\beta-1}-1) \right \}.
}
We call $\delta$ the \textit{quantization gap} for the \textit{quantization grid} $\ql$. 

\textbf{Logarithmic (non-linear) quantization}: The error resulting from rounding an element $x$ in the range $[-\delta 2^{\beta-1}, \delta(2^{\beta-1}-1)]$ using the linear quantization scheme is an additive $\delta$. Here, we present a well-known non-linear quantization scheme where the error scales with the quantized value.

The quantization grid $\qnl$ in the \textit{logarithmic quantization} scheme with parameters $\zeta$ and $\delta_0$ is defined as follows: Let $q_0 = 0 \text{ and } q_{i+1} = (1+\zeta)q_i + \delta_0 \,\forall\, i \in \mathbb{N}.$ Then,
\ba{\label{eq:quantNLspace}
\qnl(\zeta, \delta_0, \beta) := \left \{ -q_{N},-q_{N-1},\dots, -q_{1}, q_{0}, q_1,\dots, q_{N-1} \right \},
}
where $N = 2^{\beta-1}$. Henceforth, non-linear quantization refers to logarithmic quantization.
 
These two quantization schemes are widely used in practice~\citep{NLQyamaguchi2021training,de2018high,li2019dimensionfree,das2018mixed}. Our analysis of the logarithmic scheme lifts to floating-point quantization commonly used in low-precision computing. The Floating Point Quantization (FPQ) is a widely adopted variation on the Logarithmic quantization scheme, where adjacent values in the quantization grid are multiplicatively close. FPQ and other logarithmic schemes are used in most modern programming languages such as C++, Python, and MATLAB, and broadly standardized (IEEE 754 floating-point standard~\cite{kahan1996ieee}). 

Another quantization scheme for low-precision training is the power-of-two quantization~\citep{przewlocka2022power}, which rounds to the nearest power of two. All these schemes are similar in principle to our scheme; Lemma~\ref{lem:nlquantnorm} in the appendix establishes a relationship between the distance of a vector from its quantization under NLQ. This Lemma applies to FPQ and to most other logarithmic quantization schemes. Our proofs can be modified to work with any such scheme.

\textbf{Stochastic Rounding.} A natural quantization scheme is to round $x$ to any of the closest values in the quantization grid. We can randomize to ensure that the expectation of the quantized number is equal to $x$. For this, we use a stochastic rounding scheme. For any $x$ within the range of the quantization grid $\mathcal{Q}$, suppose $u$ and $\ell$ are adjacent values in $\mathcal{Q}$ such that $\ell \le x < u$. Define
\ba{
\quant(x,\mathcal{Q})=
\begin{cases}
\ell & \text{with probability }\, 1-p(x)\\
u & \text{with probability }\, p(x)
\end{cases}, \label{def:stochastic_quantization}
}
where $p(x) := (x-\ell)/(u-\ell)$.
This choice of probability ensures
\ba{\label{eq:unbiased}
\E\bbb{\quant(x,\qnl)|x} = x, \; |\quant(x,\qnl)-x|\leq u-\ell, \; \Var(\quant(x,\qnl)|x) \le (u-\ell)^2/4.
}
\section{Main Results}\label{sec:main}

\subsection{Lower Bounds}
\label{sec:lower_bound}
In this section, we establish worst-case lower bounds for the quantized PCA for both linear and logarithmic quantization schemes under the mild assumption that the quantized vectors under consideration have bounded norm. This assumption is reasonable because (i) gradient-based algorithms and other typical algorithms for PCA are usually self-normalizing, ensuring that the norms of the iterates are controlled, and (ii) the quantized vectors are close to the true vectors in norm.

\medskip
\begin{lemma}\label{lem:linear_lb} [Lower bound for linear quantization]
Let $d > 1$ and $\delta > 0$ such that $\delta^2d\leq 0.5$. Let $\mathcal{V}_L$ denote the set of non-zero quantized vectors $\vw \in \mathbb{R}^d$ using the linear quantization scheme~\eqref{eq:quantspace} such that $\|\vw\| \in [1/2,2]$.
Then, $\sup_{\vv_1 \in \mathbb{S}^{d-1}} \inf_{\vw \in \mathcal{V}_L} \sin^2(\vw,\vv_1)=\Omega(\delta^2 d).$

\end{lemma}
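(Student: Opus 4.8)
The plan is to exhibit, for a worst-case unit vector $\vv_1$, a lower bound on how close any admissible quantized vector $\vw \in \mathcal{V}_L$ can get in sin-squared error. The key observation is that the linear grid $\mathcal{Q}_L$ forces every coordinate of $\vw$ to be an integer multiple of $\delta$, so $\vw \in \delta \mathbb{Z}^d$. I would first reduce the sin-squared error to a more tractable quantity: since $\sin^2(\vw,\vv_1) = 1 - (\vw^\top \vv_1)^2/\|\vw\|^2$ and $\|\vw\| \le 2$, we have $\sin^2(\vw,\vv_1) \ge 1 - \tfrac14 (\vw^\top \vv_1)^2$, so it suffices to show that $\sup_{\vv_1}\inf_{\vw}\bigl(1 - \tfrac14(\vw^\top\vv_1)^2\bigr)$ is bounded below by a constant, OR — for the sharper $\Omega(\delta^2 d)$ bound — to argue more carefully when $\vw^\top \vv_1$ is close to $\|\vw\|$. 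Actually the cleaner route: write $\sin^2(\vw,\vv_1) = \|\vw - (\vw^\top\vv_1)\vv_1\|^2/\|\vw\|^2 \ge \tfrac14 \min_{c \in \mathbb{R}} \|\vw - c\,\vv_1\|^2$, i.e. the squared distance (up to the norm factor) from the grid point $\vw$ to the line $\mathbb{R}\vv_1$.

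The heart of the argument is then a counting/pigeonhole step: choose $\vv_1$ so that the line $\mathbb{R}\vv_1$ is "badly approximated" by the scaled lattice $\delta\mathbb{Z}^d$. Concretely, I would take $\vv_1 = \tfrac{1}{\sqrt d}(1,1,\dots,1)$, or more robustly an irrational/generic direction, and show that for any lattice point $\vw = \delta \vk$ with $\|\vw\|\in[1/2,2]$, the projection residual $\|\vw - (\vw^\top\vv_1)\vv_1\|^2$ is $\Omega(\delta^2 d)$. The intuition: $\|\vw\|^2 = \delta^2\|\vk\|^2 \in [1/4,4]$ forces $\|\vk\|^2 \asymp 1/\delta^2$, and since $\delta^2 d \le 1/2$ this means $\|\vk\|^2 \ge 2d$, i.e. the integer vector $\vk$ has squared length at least $2d$ while living in $d$ dimensions — by an averaging/pigeonhole argument its "spread" across coordinates cannot be too aligned with a single fixed direction, so a constant fraction of its squared length, namely $\Omega(\|\vk\|^2) = \Omega(1/\delta^2)$ — wait, that would give $\Omega(1)$, not $\Omega(\delta^2 d)$. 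So instead the bound should come from the *granularity*: the residual of projecting an integer vector onto a generic line is at least on the order of the lattice covering radius in the orthogonal complement, and one quantifies that the set of lattice points within $\ell_2$-distance $r$ of a generic line in $\mathbb{R}^d$ is empty unless $r \gtrsim \delta\sqrt d$ — a standard geometry-of-numbers fact, since a tube of radius $r$ around a line has volume $\approx (\text{length}) \cdot (\pi r^2)^{(d-1)/2}/\Gamma(\cdot)$ which is exponentially small in $d$ relative to a fundamental cell unless $r \gtrsim \delta\sqrt d$.

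I expect the main obstacle to be making the genericity of $\vv_1$ precise and turning the volumetric "tube is too thin" heuristic into a clean deterministic statement — either by an explicit choice of $\vv_1$ (e.g. using continued-fraction / Dirichlet-type simultaneous Diophantine bounds so that $\|\delta\vk - c\vv_1\|$ is provably large for all integer $\vk$ with controlled norm) or by a probabilistic argument over a random $\vv_1 \in \mathbb{S}^{d-1}$ showing that with positive probability no admissible lattice point lies in the thin tube, which suffices since the statement is a supremum over $\vv_1$. The bookkeeping to carry the constant factors through the reduction $\sin^2 \ge \tfrac14(\cdot)$, the norm window $[1/2,2]$, and the hypothesis $\delta^2 d \le 1/2$ (which is exactly what guarantees the tube radius $\delta\sqrt{d/2}$ stays below the norm scale and the argument is non-vacuous) should be routine once the core geometry-of-numbers estimate is in place.
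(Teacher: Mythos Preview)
Your reduction from $\sin^2$ to the distance from $\vw$ to the line $\mathbb{R}\vv_1$ is fine, but the central step --- producing a direction $\vv_1$ for which every admissible grid point is at least $\sim\delta\sqrt d$ from that line --- is not actually carried out. Of your three proposed mechanisms, the explicit one $\vv_1=(1,\dots,1)/\sqrt d$ fails outright (as you acknowledge): whenever some integer $k$ satisfies $k\delta\sqrt d\in[1/2,2]$, the grid point $k\delta(1,\dots,1)$ lies \emph{on} the line and gives $\sin^2=0$. The ``covering radius in the orthogonal complement'' intuition conflates the covering radius of $\delta\mathbb{Z}^d$ in all of $\mathbb{R}^d$ with behavior along a single line; turning the tube-volume heuristic into a proof would require a union bound over all lattice points in the annulus, whose cardinality scales like $\delta^{-d}$ times the ball volume, and it is not clear this closes uniformly in $\delta$. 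The Diophantine route is only named. So as written there is a genuine gap at exactly the step you flag as the ``main obstacle.''

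The paper's argument is far simpler and avoids all of this. Take $\vv_1(i)=\delta/3$ for $i\le d-1$ and $\vv_1(d)=\sqrt{1-(d-1)\delta^2/9}$ (well-defined since $\delta^2 d\le 1/2$). Because $\delta/3$ lies strictly between the grid values $0$ and $\delta$, every $\vw(i)\in\delta\mathbb{Z}$ satisfies $|\vv_1(i)-\vw(i)|\ge\delta/6$ and $|\vv_1(i)+\vw(i)|\ge\delta/6$ for each $i\le d-1$. Summing over those $d-1$ coordinates gives $\|\vv_1\pm\vw\|^2\ge(d-1)\delta^2/36$, and the elementary bound $\sin^2(\vx,\vy)\ge\tfrac12\min(\|\vx-\vy\|^2,\|\vx+\vy\|^2)$ for unit vectors (applied after normalizing $\vw$, absorbing a constant from $\|\vw\|\in[1/2,2]$) yields $\Omega(\delta^2 d)$. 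The idea you are missing is that the linear grid is a \emph{product}: placing $d-1$ coordinates of $\vv_1$ at the off-grid value $\delta/3$ forces an $\Omega(\delta)$ gap in every one of those coordinates simultaneously, against \emph{every} grid vector --- no genericity, randomness, or Diophantine input needed.
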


\medskip

\begin{lemma}\label{lem:nonlinear_lb}[Lower bound for logarithmic quantization]
Let $d >1$ and $\delta_0, \zeta > 0$ such that $\zeta < 0.1$ and $\delta_0^2 d < 0.5$. Let $\mathcal{V}_{NL}$ be the set of non-zero quantized vectors $\vw \in \R^d$ using the logarithmic scheme~\eqref{eq:quantNLspace} such that $\norm{\vw} \in [1/2, 2]$. Then, $\sup_{\vv_1 \in \mathbb{S}^{d-1}} 
\inf_{\vw \in \mathcal{V}_{NL}} \sin^2(\vw,\vv_1)=\Omega(\zeta^2 + \delta_0^2 d).$
\end{lemma}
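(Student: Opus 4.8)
### Proof proposal for Lemma~\ref{lem:nonlinear_lb}

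\textbf{Overall strategy.} The plan is to prove the two lower bounds $\Omega(\zeta^2)$ and $\Omega(\delta_0^2 d)$ separately and then combine them, since $\max(a,b) \ge \tfrac12(a+b)$. The $\Omega(\delta_0^2 d)$ term should follow by essentially the same argument as Lemma~\ref{lem:linear_lb}: near the origin, the logarithmic grid $\qnl$ with parameters $(\zeta,\delta_0)$ looks locally like a linear grid with gap $\delta_0$ (indeed $q_1 = \delta_0$, $q_2 = (2+\zeta)\delta_0$, so the first few grid points are spaced $\approx \delta_0$ apart). The more interesting term is $\Omega(\zeta^2)$, which comes from the \emph{multiplicative} coarseness of the grid away from the origin — this is the genuinely new phenomenon relative to linear quantization, and it is dimension-independent.

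\textbf{The $\Omega(\zeta^2)$ part.} The first step is to pick an adversarial target $\vv_1 \in \mathbb{S}^{d-1}$ that is hard to approximate coordinate-wise on a logarithmically-spaced grid. A natural choice is $\vv_1 = \tfrac{1}{\sqrt d}(1,1,\dots,1)$, or more robustly a vector all of whose coordinates have magnitude in a ``bad'' multiplicative window. For any $\vw \in \mathcal{V}_{NL}$, each coordinate $w_i$ lies in $\qnl$, so $|w_i| \in \{q_0,q_1,\dots\}$; since consecutive grid magnitudes satisfy $q_{k+1}/q_k \to 1+\zeta$ (and are at least, say, $1 + \zeta/2$ once we are a constant number of steps from the origin, using $q_{k+1} = (1+\zeta)q_k + \delta_0$), there is a multiplicative gap of order $\zeta$ between the grid value and the target magnitude for a constant fraction of coordinates — unless the target coordinate happens to sit essentially on a grid point. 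The key lemma to invoke here is Lemma~\ref{lem:nlquantnorm} (stated as being in the appendix), which controls $\|\vw - \vv_1\|$ from below in terms of how far each coordinate of $\vv_1$ is from $\qnl$ multiplicatively. Concretely, I would argue: $\sin^2(\vw,\vv_1) \gtrsim \|\vw - \vv_1\|^2 / \max(\|\vw\|^2,\|\vv_1\|^2)$ when $\vw, \vv_1$ are not anti-parallel (and anti-parallel is ruled out or handled by replacing $\vw$ with $-\vw$), so it suffices to show $\inf_{\vw} \|\vw - \vv_1\|^2 = \Omega(\zeta^2)$ over $\vw$ with $\|\vw\| \in [1/2,2]$. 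Since $\|\vv_1\| = 1$, any good approximant has most of its mass aligned with $\vv_1$, and then at least one coordinate (in fact a constant fraction of the $\ell_2$ mass) must be approximated to relative error $\Omega(\zeta)$, forcing an absolute error $\Omega(\zeta)$ in that coordinate because the corresponding target coordinate is $\Theta(1)$ in the scaled sense. One has to be a little careful: I would fix $\vv_1$ so that \emph{rescaling} $\vv_1$ by any factor in $[1/2,2]$ still leaves a coordinate whose magnitude avoids the grid multiplicatively — this is possible because the union over $c \in [1/2,2]$ of ``good rescalings'' of a single grid magnitude is a multiplicative interval of width $4$, much larger than $1+\zeta$, so a pigeonhole over the $\log$-scale forces a coordinate of $c\,\vv_1$ to be a $\zeta$-fraction away from $\qnl$ for every $c$ simultaneously.

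\textbf{Combining and the main obstacle.} For the $\Omega(\delta_0^2 d)$ part I would reuse Lemma~\ref{lem:linear_lb}: restrict attention to the sub-grid $\{-q_1, 0, q_1\}^d = \{-\delta_0,0,\delta_0\}^d \subseteq \qnl^d$ and to a target $\vv_1$ with all coordinates of size $\Theta(1/\sqrt d)$; the hypothesis $\delta_0^2 d < 0.5$ plays the role of $\delta^2 d \le 0.5$ there, and any $\vw \in \mathcal{V}_{NL}$ with small $\sin^2$ must be close (after rescaling) to $\vv_1$, whose coordinates cannot all be matched to within $o(1/\sqrt d)$ by multiples of $\delta_0$ — this is exactly a rounding/covering argument identical in structure to the linear case. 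The main obstacle I anticipate is making the two adversarial constructions \emph{compatible} so that a single $\vv_1$ witnesses both $\Omega(\zeta^2)$ and $\Omega(\delta_0^2 d)$ — or, more cleanly, running the two lower bounds against possibly different targets and only claiming the supremum, which is legitimate since the statement is a $\sup_{\vv_1} \inf_{\vw}$. I expect the cleanest write-up takes the second route: prove $\exists \vv_1: \inf_\vw \sin^2(\vw,\vv_1) = \Omega(\zeta^2)$ and, separately, $\exists \vv_1': \inf_\vw \sin^2(\vw,\vv_1') = \Omega(\delta_0^2 d)$, and conclude the supremum is $\Omega(\zeta^2 + \delta_0^2 d)$. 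The delicate bookkeeping is entirely in the ``rescaling-robust multiplicative pigeonhole'' for the $\zeta^2$ term and in verifying the constants survive the $\|\vw\|\in[1/2,2]$ normalization slack; neither is deep, but both need care.
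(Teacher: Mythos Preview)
Your overall plan --- prove the $\Omega(\zeta^2)$ and $\Omega(\delta_0^2 d)$ lower bounds with two \emph{different} target vectors and take the larger --- is exactly what the paper does, and your treatment of the $\Omega(\delta_0^2 d)$ term (reuse the linear-grid argument near the origin, where $q_1=\delta_0$) is fine and matches the paper.

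The $\Omega(\zeta^2)$ part, however, has a real gap. Your proposed target $\vv_1=\tfrac{1}{\sqrt d}(1,\dots,1)$ is exactly representable in direction by any $\vw=(q_k,\dots,q_k)$, so $\sin^2(\vw,\vv_1)=0$; under the hypothesis $\delta_0^2 d<0.5$ there is always a $q_k$ with $\|\vw\|=q_k\sqrt d\in[1/2,2]$. More generally, any $\vv_1$ whose coordinates share a single magnitude fails for the same reason, and your pigeonhole claim (``the union over $c\in[1/2,2]$ of good rescalings has multiplicative width $4$, much larger than $1+\zeta$'') points the wrong way: a rescaling window wider than the grid spacing means the grid \emph{does} hit the window, not that it misses. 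Also, Lemma~\ref{lem:nlquantnorm} is an \emph{upper} bound on the quantization error and cannot give the lower bound you want.

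Because $\sin^2$ is scale-invariant, the obstruction must live in the \emph{ratios} of coordinates, not their absolute magnitudes. The paper exploits this directly: it takes a $2$-sparse target $\vv_1\propto(1,\,1+\zeta/2,\,0,\dots,0)$ and uses the closed form $q_k+\delta_0/\zeta=(\delta_0/\zeta)(1+\zeta)^k$, which forces $(\vw(2)+\delta_0/\zeta)/(\vw(1)+\delta_0/\zeta)$ to be an integer power of $1+\zeta$ for any $\vw\in\mathcal{V}_{NL}$ with positive first two coordinates. A short calculation then shows that if $\sin^2(\vw,\vv_1)\le\zeta^2/100$, this ratio would lie strictly in $(1,1+\zeta)$, a contradiction. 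That ratio-rigidity is the missing idea in your sketch; once you replace the all-ones vector by a two-coordinate vector with a ``half-step'' ratio, the rest of your outline goes through.
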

At first glance, the results of Lemmas~\ref{lem:linear_lb} and~\ref{lem:nonlinear_lb} may appear similar. However, the parameter $\delta_0$ is substantially smaller than $\delta$. In Section~\ref{sec:bit_budget}, we select optimal values for $\delta$, $\delta_0$, and $\zeta$ given a fixed bit budget $\beta$ for the low-precision model and show that $\delta^2 d = \Theta(d 4^{-\beta})$ while $\zeta^2 + \delta_0^2 d = \tilde{\Theta}(4^{-\beta})$ where the tilde hides a $\log^2 d$ factor. Hence, the lower bound for the logarithmic quantization scheme is \textit{nearly independent} of the dimension. The proofs of the lower bounds are deferred to Appendix~\ref{appendix:lower_bound}.
\subsection{Quantized Batched Oja's Algorithm}
\label{sec:standard_oja}
In this section, we present an algorithm that uses stochastic quantization for the batch version of Oja's algorithm (see Eq~\ref{eq:batchupdate}). We start by computing the quantized version $\vw_i$ of the normalized vector $\bu_{i-1}$ from the last step. Then, we quantize each $\bX_j(\bX_j^T\w_{i-1})$ and compute the average of the quantized gradient updates. This average gradient is quantized again and added to $\vw_i$.

The final vector that results from the batched Oja's rule (Eq~\ref{eq:batchupdate}) without quantization is 
\bas{
\vu_{\text{unquantized}}= \frac{(\id + \eta \bD_b) \dots (\id + \eta \bD_{2})(\id + \eta \bD_{1}) \bu_0}{\norm{(\id + \eta \bD_b) \dots (\id + \eta \bD_{2})(\id + \eta \bD_{1}) \bu_0}}  =  \frac{\prod_{i=b}^1 (\id+\eta \bD_i) \bu_0}{\norm{\prod_{i=b}^1 (\id+\eta \bD_i) \bu_0}},
}
where $\bD_i=\sum_{j\in B_i}\bX_j\bX_j^T/(n/b)$ is the empirical covariance matrix of the $i^{\text{th}}$ batch. 
Since $\bX_i$ are IID and the batches are disjoint, $\bD_i$ are also IID. The key observation for Algorithm~\ref{alg:quantstdoja} is that even with the quantization, the vector $\vu_{b}$ can be written as
\ba{\label{eq:adaptive_noise_struct}
\vu_{b}=\frac{\prod_{i=b}^1 (\id+\eta \bD_i+\vXi_i)\bu_0}{\|\prod_{i=b}^1 (\id+\eta \bD_i+\vXi_i)\bu_0\|}.
}

\begin{algorithm}[H]
\caption{\label{alg:quantstdoja}Quantized Oja's Algorithm with Batches}
\begin{algorithmic}[1]
\Require Data $\{\bX_i\}_{i\in [n]}$, quantization grid $\mathcal{Q}$, learning rate $\eta$, number of batches $b$
\State Initialize $\vu_0$ with a unit vector picked uniformly from $\mathbb{S}^{d-1}$.
\State $B_{i}\gets\bigl\{(i-1)\tfrac{n}{b}+1,\;(i-1)\tfrac{n}{b}+2,\;\ldots,\;i\tfrac{n}{b}\bigr\}$ \label{step:xi_1}
\For{$i=1$ to $b$}
\State $\mathbf{w}_i \leftarrow \quant(\mathbf{u}_{i-1},\mathcal{Q})$  \Comment{$\vxi_{1,i}:=\quant(\vu_{i-1},\q)-\vu_{i-1}$}
\State $\vz_i \gets \frac{\sum_{j\in B_i}\quant(\bX_j(\bX_j^T \w_{i}),\mathcal{Q})}{n/b}$ \hfill \Comment{$\vxi_{a,j,i} :=\quant(\bX_j(\bX_j^T \w_{i}),\q)-\bX_j(\bX_j^T \w_{i})$}
\State $\mathbf{y}_i\leftarrow \quant(\eta \frac{\sum_{j\in B_i}\quant(\bX_j(\bX_j^T \w_{i}),\mathcal{Q})}{n/b},\mathcal{Q})$ \Comment{$\vxi_{a,i} :=\frac{\sum_{j\in B_i}\vxi_{a,j,i}}{n/b}$}
\label{step:xi_2}
\State $\mathbf{u}_i \leftarrow \w_{i}+\by_j$ \Comment{$\vxi_{2,i} :=\quant\bb{\by_i,\q}-\by_i$}
\State $\mathbf{u}_i\leftarrow \frac{\mathbf{u}_i}{\|\mathbf{u}_i\|} $
\EndFor
\State $\vw \leftarrow \quant(\bu_{b},\mathcal{Q})$
\State \Return $\vw$
\end{algorithmic}
\end{algorithm}

Each $\vXi_i$ is a rank-one matrix resulting from the stochastic quantization. Conditioned on an appropriately chosen filtration $\sigma(\bX_1,\dots,\bX_i, \vu_{0},\dots, \vu_{i-1})$, $\vXi_i$ is mean zero; Algorithm~\ref{alg:quantstdoja} defines quantization variables $\vxi_{1,i}, \vxi_{a,i},$ and $\vxi_{2,i}$ for all $i \in [b]$.
The rank one noise $\vXi_i$ is 
$\vXi_i:=(\eta \vxi_{a,i}+\vxi_{2,i}+(\id+\eta \bD_i)\vxi_{1,i})\vu_{i-1}^T.$ 
Since the stochastic updates are conditionally unbiased  (equation~\eqref{eq:unbiased}),
\bas{
\E[\vxi_{1,i}|\bD_1,\dots,\bD_i,\w_0,\dots,\w_{i-1}]=0.
}
Similarly $\E[\vxi_{a,i}|\bD_1,\dots,\bD_i,\w_0,\dots,\w_{i-1}] = 0$, as it can be written as
\bas{
\E[\E[\vxi_{a,i}|\vxi_{1,i},\bD_1,\dots,\bD_i,\w_0,\dots,\w_{i-1}]|\bD_1,\dots,\bD_i,\w_0,\dots,\w_{i-1}]]=0.
}
\subsection{Guarantees for Low-Precision Oja's Algorithm}
Before presenting our main result, we present a general result that can apply to other noisy variants of Oja's rule and is of independent interest. The proof is deferred to Appendix Section~\ref{appendix:mb_standard_oja}. Consider Oja's algorithm on matrices $\bA_i\in \mathbb{R}_{d\times d}$, such that $\bA_i=\eta \bD_i+\vXi_i$ where $\bD_i$ are IID random matrices with $\E[\bD_i]=\bSig$.



Let $\mathcal{S}_i$ be the set of all random vectors $\vxi$ in the first $i$ iterations of the algorithm and $\Fi$ denote the $\sigma$-algebra generated by the random $\bD_1,\dots, \bD_i$ and $\mathcal{S}_{i-1}$. Define the operator $\E_{i}[.]:=\E[.|\Fi].$ We assume the noise term $\Xi_i$ is measurable with respect to the filtration $\Fi$ and unbiased conditioned on $\Fi$, i.e., $\E_i[\vXi_i | \Fi]=\mathbf{0}_{d\times d}$.Let $\mathcal{V}_0, \nu, \mathcal{M}, \kappa, $ and $\kappa_{1}$ be non-negative parameters such that
\ba{\label{eq:adaptivevarbound}
\max\bb{\|\E[(\bD_i-\bSig)(\bD_i-\bSig)^T]\|,\|\E[(\bD_i-\bSig)^T(\bD_i-\bSig)]\|}\leq \mathcal{V}_0,
}
\ba{\label{eq:di}
\|\bD_i\|\leq 1, \qquad\|\bD_i-\bSig\|&\leq \mathcal{M}, \qquad \norm{\vXi_i}\leq  \kappa, \qquad \|\E[\vXi_i^T\vXi_i|\Fi]\|_F\leq \kappa_1\quad \mbox{a.s.}
}
\begin{theorem}\label{thm:standard_quantised_oja_convergence} Let $d,n,b \in \mathbb{N}$ and $\vu_{0} \sim \mathcal{N}\bb{0,\id_d}$. Let $\eta \defeq \frac{\alpha\log n}{b(\lambda_1-\lambda_2)}$ be the learning rate where $\alpha$ is chosen to satisfy Lemma~\ref{lemma:choice_of_learning_rate}, and suppose $\max(b\eta^2 \mathcal{M}^2 \log(d), b\kappa^2 \log d) = O(1)$.
Then, with probability at least $0.9$, the vector $\vu_b$ from equation~\ref{eq:adaptive_noise_struct} satisfies $\norm{\vu_b} \in [1-\kappa_1, 1+\kappa_1]$ and
\bas{
    \sin^2(\vu_b, \vv_1) \lesssim \frac{d}{n^{2\alpha}} + \frac{\alpha \mathcal{V}_0 \log n}{b(\eigengap)^2} + \max\bb{\frac{b}{\alpha \log n}, 1} \kappa_1 + \kappa^2.
}
\end{theorem}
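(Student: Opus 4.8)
The overall strategy is a perturbation/decomposition argument that reduces the analysis of the noisy iterates $\bA_i = \eta\bD_i + \vXi_i$ to the already-understood unquantized batched Oja's algorithm. Write $\bP = \prod_{i=b}^1(\id + \eta\bD_i + \vXi_i)$ for the accumulated product with noise and $\bP_0 = \prod_{i=b}^1(\id + \eta\bD_i)$ for the noiseless product. Since $\sin^2(\vu_b,\vv_1) = 1 - (\vv_1^\top \bP\vu_0)^2/\|\bP\vu_0\|^2$, I would lower bound $(\vv_1^\top \bP\vu_0)^2$ and upper bound the contribution of the component of $\bP\vu_0$ orthogonal to $\vv_1$. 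The standard route (following e.g. \cite{jain2016streaming, kumarsarkar2024markovoja}) is to control the two quadratic forms $\vv_1^\top \bP^\top \bP\vv_1$ and $\Tr(\mv_\perp^\top \bP^\top\bP\mv_\perp)$ where $\mv_\perp$ spans the orthogonal complement of $\vv_1$; in expectation these behave like $(1+\eta\lambda_1)^{2b}$ and $\sum_i (1+\eta\lambda_i)^{2b}\cdot(\text{variance terms})$ respectively. The key structural fact the paper already isolated — $\vXi_i$ is mean-zero conditioned on $\Fi$, with $\|\vXi_i\| \le \kappa$ and $\|\E[\vXi_i^\top\vXi_i\mid\Fi]\|_F \le \kappa_1$ — is exactly what is needed to run a martingale-style expansion of the product and absorb the cross terms.

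Concretely, the steps I would carry out, in order: (1) \emph{Learning-rate and random-initialization setup.} Invoke Lemma~\ref{lemma:choice_of_learning_rate} to fix $\alpha$, and use the Gaussian initialization $\vu_0\sim\Nor(0,\id_d)$ to guarantee, with constant probability, that $|\vv_1^\top\vu_0|$ is not too small (order $1/\sqrt d$) while $\|\vu_0\|=\Theta(\sqrt d)$ — this is where the $d/n^{2\alpha}$ term is born, from the decay $(1+\eta\lambda_2)^{2b}/(1+\eta\lambda_1)^{2b}\approx n^{-2\alpha}$ times the dimension factor in the initialization ratio. (2) \emph{Expand the product.} Write $\bP = \bP_0 + \sum_i(\text{terms linear in }\vXi) + (\text{higher order})$, or more robustly, track the second moment $\E[\bP^\top\bP]$ by the recursion $\mm_i = \E[(\id+\eta\bD_i+\vXi_i)^\top \mm_{i-1}(\id+\eta\bD_i+\vXi_i)]$; conditional unbiasedness kills the $\bD$–$\vXi$ and $\id$–$\vXi$ cross terms, leaving $\mm_i = \E[(\id+\eta\bD_i)^\top\mm_{i-1}(\id+\eta\bD_i)] + \E[\vXi_i^\top\mm_{i-1}\vXi_i]$. (3) \emph{Bound the noise-injected term.} Using $\|\vXi_i\|\le\kappa$ and the Frobenius bound on $\E[\vXi_i^\top\vXi_i\mid\Fi]$, show $\E[\vXi_i^\top\mm_{i-1}\vXi_i]$ contributes at most $\kappa_1\|\mm_{i-1}\|$ (roughly) to each coordinate of interest; iterating over $b$ steps and dividing by the $\vv_1$-direction growth gives the $\max(b/(\alpha\log n),1)\kappa_1$ term, while the $\kappa^2$ term comes from controlling $\|\bP\vu_0\|$ deviation and the norm statement $\|\vu_b\|\in[1-\kappa_1,1+\kappa_1]$. (4) \emph{Bound the variance term.} The $\mathcal{V}_0\alpha\log n/(b(\lambda_1-\lambda_2)^2)$ term is the classical batched-Oja variance contribution: summing $\sum_{i}(1+\eta\lambda_1)^{-2}\cdot\eta^2\mathcal{V}_0$-type terms over $b$ batches with $\eta = \alpha\log n/(b(\lambda_1-\lambda_2))$ telescopes to this bound. (5) \emph{Concentration.} Replace the expectation bounds by high-probability bounds — the hypothesis $\max(b\eta^2\mathcal{M}^2\log d, b\kappa^2\log d) = O(1)$ is precisely the condition ensuring a matrix Bernstein / martingale concentration argument on $\log\|\bP_0\vv_1\|$ and the orthogonal energy goes through — then combine via a union bound and Markov/Chebyshev on the orthogonal energy to get the final $0.9$ probability statement.

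\textbf{Main obstacle.} The delicate part is keeping the noise terms $\vXi_i$ under control \emph{simultaneously} with the random matrices $\bD_i$ in a single product, since $\vXi_i$ depends on $\bD_i$ and on the entire history through $\vu_{i-1}$ (it is adaptive, not i.i.d.), so one cannot simply multiply expectations. The right tool is to peel off one factor at a time and use $\E_i[\vXi_i\mid\Fi]=\mzero$ to make each cross term a martingale difference, then apply a matrix Freedman/Bernstein inequality whose variance proxy is exactly $b\kappa^2$ and $b\eta^2\mathcal{M}^2$ — hence the $O(1)$ hypothesis. Making the bookkeeping clean, so that the four error terms appear with the stated (and not worse) dependence on $b$, $\alpha$, and $\kappa_1$ — particularly the $\max(b/(\alpha\log n),1)$ prefactor, which reflects whether the noise accumulates linearly in $b$ or is damped by the spectral-gap-driven contraction — is where the real work lies. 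I would structure this as a sequence of lemmas (a "good initialization" lemma, a "noiseless product concentration" lemma, and a "noise accumulation" lemma) mirroring the decomposition above, and defer the routine telescoping inequalities to the appendix.
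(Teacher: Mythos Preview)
Your plan is correct and shares the paper's high-level architecture: pass to the ratio $\Tr(\vp^\top\bZ_b\bZ_b^\top\vp)/(\vv_1^\top\bZ_b\bZ_b^\top\vv_1)$ via the one-step power-method lemma of \cite{jain2016streaming}, bound the numerator in expectation by the second-moment recursion of your step~(2)--(3) --- this is exactly Lemma~\ref{lemma:Zn_tr_vp_main}, and the $\kappa_1$ contribution there becomes $b\kappa_1/(\alpha\log n)$ after dividing by $\eta(\lambda_1-\lambda_2)$ --- apply Markov, and lower-bound the denominator with high probability.

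The one substantive difference is in how the denominator is controlled. You propose to go through the noiseless random product $\prod_i(\id+\eta\bD_i)$, first controlling it (matrix Bernstein on the log) and then bounding the perturbation due to the $\vXi_i$'s. The paper skips the noiseless product entirely: it compares the full noisy $\bZ_b$ directly to the \emph{deterministic} mean $(\id+\eta\bSig)^b$, bounding $\norm{\bZ_b-(\id+\eta\bSig)^b}$ via the Schatten $(p,q)$-norm matrix-product concentration of \cite{huang2020matrix} (Proposition~\ref{prop:Z_n_pq_norm_bound_main} with $p\approx\log d$), then Markov. This handles the adaptive $\vXi_i$ and the sampling noise $\bD_i-\bSig$ in one stroke, and the same $(p,q)$-norm bound is reused to control $\E\norm{\bZ_{i-1}}^2$ inside the numerator recursion. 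Your route would also work, but note that a matrix-Freedman attack on the telescoped difference sees increments scaled by the random $\norm{\bZ_{i-1}}$, which forces you back into a product-moment bound of the Huang type anyway; your decomposition buys modularity (cleanly separating quantization noise from sampling noise) at the cost of one extra layer, while the paper's route is more direct.
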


\begin{remark}[Matching the Upper and Lower Bounds]
In the LQ scheme with gap $\delta$, each coordinate of the noise vector $\vxi$ is bounded by $\delta$ almost surely. In particular, this implies $\kappa = O(\delta\sqrt{d})$ and $\kappa_1 = O(\delta^2 d)$ (see Appendix Section~\ref{appendix:mb_standard_oja}) and the resulting error due to quantization matches the lower bound in Lemma~\ref{lem:linear_lb}. In the NLQ scheme with parameters $\zeta$ and $\delta_0$, the $i$th coordinate of the noise vectors $\xi$ is bounded by $\zeta |\vu_i| + \delta_0$, where $\vu$ is the vector being quantized. Since the vectors in consideration are bounded in norm by $1$, this implies $\kappa = O(\zeta + \delta_0\sqrt{d})$ and $\kappa_1 = O(\zeta^2 + \delta_0^2 d)$ (see Appendix Section~\ref{appendix:mb_standard_oja}). The resulting error matches the lower bound in Lemma~\ref{lem:nonlinear_lb} as long as the output vector has norm in the range $[1/2,2]$.
\end{remark} 
\medskip
\begin{remark} Theorem~\ref{thm:standard_quantised_oja_convergence} relies on the observation that accumulating the quantization error only $b$ times in Algorithm~\ref{alg:quantstdoja} leads to a smaller $\sin^{2}$ error. Moreover, choosing an appropriate batch size reduces the variance parameter $\Nu_0$ by a factor of $n/b$ because of averaging.
\end{remark}

\begin{remark}[Hyperparameters and eigengap] 
 The choice of the learning rate $\eta = \frac{\alpha \log n}{n(\lambda_1 - \lambda_2)}$ is also present in other works on streaming PCA~\cite{hardt2014noisy,desa2014alecton,shamir2016pca,pmlr-v48-shamira16,allenzhu2017efficient,huang2020matrix,jain2009,bal2013pcastreaming} to derive the statistically optimal sample complexity (up to logarithmic factors).
If a smaller learning rate $\eta$ is used (for example, by using an upper bound $U$ on the eigengap $\lambda_1 - \lambda_2$), then the first error term of Theorem~\ref{thm:standard_quantised_oja_convergence} will be larger, leading to a slightly larger sin-squared error. A similar argument applies to the choice of the batch size.

\end{remark}

\medskip

\begin{remark}[Known $n$ in the learning rate] 
The length of the stream $n$ is an input in Theorem~\ref{thm:standard_quantised_oja_convergence}, and the learning rate is constant over time. To handle variable learning rates using only constant-rate updates, a standard {doubling trick}~\citep{auer1995gambling} can be used. Specifically, the time horizon is divided into blocks that double in size: the $k$th block has size $2^{k-1}$ and Oja's algorithm run on that block uses a learning rate corresponding to that block’s size. When the algorithm run on this block terminates, the older estimate of the top eigenvector run on the previous block is replaced by this new estimate. This scheme effectively simulates a decaying learning rate while keeping the analysis tractable.


\end{remark}

\subsection{Choosing the Optimal Quantization Parameters}\label{sec:bit_budget}

To ensure a fair comparison between the linear and logarithmic quantization schemes, we fix a budget $\beta$ for the total number of bits used by the low-precision model. Moreover, our algorithms require that numbers in, say, $(-2, 2)$ are representable by the quantization scheme. Therefore, we must ensure that the upper and lower limits of the scheme cover this range. 

The largest number representable in the linear quantization scheme is $\delta (2^\beta-1)$ and the smallest negative number representable is $-\delta \cdot 2^{\beta}$. We choose $\delta = 2^{2-\beta}$, which covers the range $(-2,2)$.

To motivate the choice of $\zeta$ and $\delta_0$, we note that the floating point scheme is a \textit{discretization} of the logarithmic quantization scheme. The parameter $\delta_0$ in the logarithmic scheme represents the smallest representable positive real, which in the FPQ scheme is equal to $4 \cdot 2^{-2^{\beta_e-1}}$, where $\beta_e$ is the number of bits used to represent the exponent. The parameter $\zeta$ represents \textit{multiplicative} growth between adjacent quanta and is analogous to $2^{-\beta_m}$ in the FPQ scheme, where $\beta_m$ is the number of bits to represent the mantissa, and $\beta = \beta_m + \beta_e$. Assuming $\zeta = 2^{-\beta_m}$ and $\delta_0 = 4 \cdot 2^{-2^{\beta_e-1}}$, where $\beta_m$ and $\beta_e$ are positive integers, the largest representable number is
\bas{
q_{2^{\beta-1}} =  \bb{(1+\zeta)^{2^{\beta-1}} - 1} \cdot \frac{\delta_0}{\zeta} \ge 2^{\beta_m - 1}.
}
To represent numbers in $(-2, 2)$, it suffices to ensure $\beta_m \ge 3$. This allows some freedom to select $\beta_m$ and $\beta_e$ such that the factor $\kappa_1 = \zeta^2 + \delta_0^2 d$ is minimized. We choose
\bas{
\beta_e = \left \lceil \log_2 \bb{2\beta + \log_2 (8d \ln 2)} \right \rceil \;\; \text{ and } \;\; \beta_m = \beta - \beta_e
}
which is valid as long as $\beta \ge \max(8, \log_2 d)$ and $\beta_m \ge 3$. We justify this choice in appendix~\ref{appendix:optimal_parameters}.

With this choice of $\beta_e$ and $\beta_m$, the parameters $\zeta$ and $\delta_0$ satisfy 
\ba{\label{eq:nl_opt}
\delta_0^2 \le \frac{2}{4^{\beta} d \ln 2} \;\; \text{ and } \;\; \zeta^2 \le \frac{4 \bb{2\beta + \log_2(8d \ln 2)}^2}{4^{\beta}}.
}



With this setting, we present two immediate corollaries of Theorem~\ref{thm:standard_quantised_oja_convergence} with a fixed budget $\beta$. The proofs are deferred to Appendix Section~\ref{appendix:mb_standard_oja}.
\begin{theorem}\label{thm:batch}[Oja's Algorithm with Batches]
\leavevmode
\begin{enumerate}
    \item 
    Suppose $\mathcal{Q} = \ql$ and $\delta, b$ satisfy $\delta = 2^{2-\beta} =O\bb{ \frac{\eigengap}{ \alpha \sqrt{d} \log(n)}}$  
    and $b = \Theta\bb{\frac{ \alpha^2 \log^2(n)}{(\eigengap)^2}}$. Then, with probability at least $0.9$, the output $\vw_{b}$ of Algorithm~\ref{alg:quantstdoja} satisfies
    \bas{
        \sin^2(\vw_b, \vv_1)
        &\lesssim \frac{d}{n^{2\alpha}} + \frac{\alpha \log(n)}{(\eigengap)^2} \bb{\frac{\Nu}{n} + \frac{d}{4^\beta}}.
    }  
    \item 
    Suppose $\mathcal{Q} = \qnl$ with $\zeta$ and $\delta_0$ as in equation~\eqref{eq:nl_opt}, such that $\zeta + \delta_0 \sqrt{d} = O\bb{\frac{\eigengap}{\alpha \sqrt{d} \log(n)}},$ and batch size $b = \Theta\bb{\frac{\alpha^2 \log^2(n)}{(\eigengap)^2}}$. Then, with probability at least $0.9$, the output $\vw_{b}$ of Algorithm~\ref{alg:quantstdoja} satisfies
    \bas{
        \sin^2(\vw_b, \vv_1)
        &\lesssim \frac{d}{n^{2\alpha}} + \frac{\alpha \log(n)}{(\eigengap)^2} \bb{\frac{\Nu}{n} + \frac{\beta^2 + \log^2(d)}{4^\beta}}.
    }  
\end{enumerate}
\end{theorem}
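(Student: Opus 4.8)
### Proof proposal for Theorem~\ref{thm:batch}

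The plan is to derive both parts as direct specializations of the general convergence bound in Theorem~\ref{thm:standard_quantised_oja_convergence}. First I would verify that Algorithm~\ref{alg:quantstdoja} with update structure~\eqref{eq:adaptive_noise_struct} fits the template of Theorem~\ref{thm:standard_quantised_oja_convergence}: the matrices are $\bA_i = \eta \bD_i + \vXi_i$ with $\bD_i = \sum_{j \in B_i}\bX_j\bX_j^\top/(n/b)$ the IID per-batch empirical covariances (so $\E[\bD_i] = \bSig$ by Assumption~\ref{assumption:data_distribution}), and $\vXi_i = (\eta\vxi_{a,i} + \vxi_{2,i} + (\id + \eta\bD_i)\vxi_{1,i})\vu_{i-1}^\top$ is the rank-one quantization noise, which the text has already argued is $\Fi$-measurable and conditionally mean zero via the tower property over the nested filtration $\sigma(\bX_1,\dots,\bX_i,\vu_0,\dots,\vu_{i-1})$. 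So the structural hypotheses hold for free.

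The substance is then to plug in the correct values of the five parameters $\mathcal{V}_0, \nu, \mathcal{M}, \kappa, \kappa_1$ for each scheme. Since $\|\bX\| \le 1$ a.s., we have $\|\bX\bX^\top\| \le 1$ hence $\|\bD_i\| \le 1$; and $\|\bX\bX^\top - \bSig\| \le \mathcal{M}$ a.s. from Assumption~\ref{assumption:data_distribution}, so the same bound passes to $\bD_i - \bSig$ by convexity/averaging (in fact a slightly better bound, but $\mathcal{M}$ suffices). For the variance parameter: $\bD_i - \bSig$ is an average of $n/b$ IID mean-zero terms each with second-moment norm at most $\Nu$, so $\|\E[(\bD_i - \bSig)(\bD_i-\bSig)^\top]\| \le \Nu b/n =: \mathcal{V}_0$ — this is where the $\Nu/n$ in the final bound comes from once multiplied by $\alpha\log n/(b(\eigengap)^2)$. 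For the noise bounds I would invoke the per-scheme coordinate control already summarized in the Remark after Theorem~\ref{thm:standard_quantised_oja_convergence}: LQ gives each coordinate of every $\vxi$ bounded by $\delta$, hence $\kappa = O(\delta\sqrt d)$ and $\kappa_1 = O(\delta^2 d)$; NLQ gives coordinate $i$ bounded by $\zeta|\vu_i| + \delta_0$ on unit-norm vectors, hence $\kappa = O(\zeta + \delta_0\sqrt d)$ and $\kappa_1 = O(\zeta^2 + \delta_0^2 d)$. (The $\eta\vxi_{a,i}$ and $(\id + \eta\bD_i)\vxi_{1,i}$ pieces contribute with an extra $\eta \le 1$ or $\|\id + \eta\bD_i\| \le 2$ factor, which is absorbed into constants; Lemma~\ref{lem:nlquantnorm} supplies the NLQ coordinate bound.) I would also check the side condition $\max(b\eta^2\mathcal{M}^2\log d, b\kappa^2\log d) = O(1)$: with $\eta = \Theta(\alpha\log n/(b(\eigengap)))$ and $b = \Theta(\alpha^2\log^2 n/(\eigengap)^2)$ one gets $b\eta^2 = \Theta(1/(\eigengap)^2 \cdot \text{const})$... actually $b\eta^2\mathcal{M}^2\log d = \Theta(\mathcal{M}^2\log d/(\eigengap)^2)$ which requires the implicit assumption $\mathcal{M}, \log d = O(\eigengap\cdot\text{poly})$; and $b\kappa^2\log d = O(1)$ is exactly the stated hypothesis $\delta\sqrt d = O(\eigengap/(\alpha\sqrt d\log n))$ (resp. $\zeta + \delta_0\sqrt d = O(\cdot)$) after squaring and multiplying by $b\log d$ — I would make this matching explicit.

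With parameters in hand, Theorem~\ref{thm:standard_quantised_oja_convergence} yields $\sin^2(\vu_b,\vv_1) \lesssim d/n^{2\alpha} + \alpha\mathcal{V}_0\log n/(b(\eigengap)^2) + \max(b/(\alpha\log n),1)\kappa_1 + \kappa^2$. Here $\alpha\mathcal{V}_0\log n/(b(\eigengap)^2) = \alpha\Nu\log n/(n(\eigengap)^2)$ since $\mathcal{V}_0 = \Nu b/n$; and with $b = \Theta(\alpha^2\log^2 n/(\eigengap)^2) \ge \alpha\log n$ (for $n$ large), $\max(b/(\alpha\log n),1) = b/(\alpha\log n) = \Theta(\alpha\log n/(\eigengap)^2)$, so the $\kappa_1$ term becomes $\Theta(\alpha\log n\,\kappa_1/(\eigengap)^2)$, which dominates the $\kappa^2$ term (as $\kappa^2$ and $\kappa_1$ are the same order up to the $\alpha\log n/(\eigengap)^2 \ge 1$ factor). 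Substituting $\kappa_1 = O(\delta^2 d) = O(d/4^\beta)$ using $\delta = 2^{2-\beta}$ gives Part 1; substituting $\kappa_1 = O(\zeta^2 + \delta_0^2 d) = O((\beta^2 + \log^2 d)/4^\beta)$ using the bounds~\eqref{eq:nl_opt} gives Part 2. Finally, the norm statement $\|\vu_b\| \in [1-\kappa_1,1+\kappa_1]$ from Theorem~\ref{thm:standard_quantised_oja_convergence} together with the triangle-inequality fact that one further quantization step $\vw = \quant(\vu_b,\mathcal{Q})$ moves the vector by at most $O(\kappa)$ keeps the output in $[1/2,2]$, so the sin-squared bound transfers from $\vu_b$ to $\vw_b$ up to constants (one more additive $\kappa^2$, already absorbed).

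The main obstacle I anticipate is purely bookkeeping rather than conceptual: correctly propagating the batch-averaging factor $n/b$ through the variance parameter $\mathcal{V}_0$ and through the composite noise $\vXi_i$ — in particular arguing that averaging the $n/b$ per-sample quantization noises $\vxi_{a,j,i}$ reduces neither $\kappa$ nor $\kappa_1$ below the single-sample bound (it cannot help, since the $\vxi_{a,j,i}$ need not be independent of each other once conditioned appropriately, so we conservatively keep $\kappa_1 = O(\delta^2 d)$), while it genuinely does reduce the data-variance term by $n/b$. Getting these two averaging effects to point in opposite directions consistently, and checking that all the $O(1)$ side conditions are implied by the stated hypotheses on $\delta, \zeta, \delta_0, b$, is the part that needs care; everything else is a substitution.
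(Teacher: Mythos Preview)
Your proposal is correct and follows essentially the same route as the paper: instantiate Theorem~\ref{thm:standard_quantised_oja_convergence} with $\mathcal{V}_0 = \Nu b/n$ from batch averaging, $\kappa = O(\delta\sqrt d)$ and $\kappa_1 = O(\delta^2 d)$ for LQ (resp.\ $\kappa = O(\zeta+\delta_0\sqrt d)$, $\kappa_1 = O(\zeta^2+\delta_0^2 d)$ for NLQ), check the side condition via Lemma~\ref{lemma:choice_of_learning_rate}, and substitute. The paper packages the LQ and NLQ parameter computations into two intermediate statements (Theorems~\ref{thm:quantstdbatchoja} and~\ref{thm:quantlogbatchoja}) before specializing to $\theta = 1/30$, but this is organizational rather than a different idea; one minor arithmetic slip in your side-condition check ($b\eta^2 = \Theta(1)$, not $\Theta(1/(\eigengap)^2)$, so the paper quietly needs a $\log d$ factor in $b$ via Lemma~\ref{lemma:choice_of_learning_rate}) does not affect the approach.
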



\begin{theorem}\label{thm:nbatch}[Oja's Algorithm]
\leavevmode
\begin{enumerate}
    \item 
    Suppose $\mathcal{Q} = \ql$, and $\delta, b$ satisfy $\delta = 2^{2-\beta} =O\bb{ \min\bb{\frac{\eigengap}{\alpha \sqrt{d} \log(n)}, \frac{1}{\sqrt{dn}}}}$ and $b = n$. Then, with probability at least $0.9$, the output $\vw_{n}$ of Algorithm~\ref{alg:quantstdoja} satisfies
    \bas{
        \sin^2(\vw_n, \vv_1)
        &\lesssim \frac{d}{n^{2\alpha}} + \frac{\alpha \Nu \log(n)}{n (\eigengap)^2} + \frac{d n}{4^{\beta} \alpha \log (n)}.
    }  
    \item 
    Suppose $\mathcal{Q} = \qnl$ with $\zeta$ and $\delta_0$ as in equation~\eqref{eq:nl_opt}, such that $\zeta + \delta_0 \sqrt{d} < O\bb{\min\bb{\frac{\eigengap}{ \alpha \sqrt{d} \log(n)} , \frac{1}{\sqrt{dn}}}},$ and batch size $b = n$. Then, with probability at least $0.9$, the output $\vw_{n}$ of Algorithm~\ref{alg:quantstdoja} satisfies
    \bas{
        \sin^2(\vw_n, \vv_1)
        &\lesssim \frac{d}{n^{2\alpha}} + \frac{\alpha \Nu \log(n)}{n (\eigengap)^2} + \frac{(\beta^2 + \log^2 d) n}{4^{\beta} \alpha \log (n)}.
    }  
\end{enumerate}
\end{theorem}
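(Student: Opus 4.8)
The plan is to obtain Theorem~\ref{thm:nbatch} as a direct specialization of Theorem~\ref{thm:standard_quantised_oja_convergence} to the case $b=n$, where every ``batch'' is a single sample and $\bD_i=\bX_i\bX_i^\top$. The first step is to instantiate the abstract parameters $\mathcal{V}_0,\mathcal{M},\kappa,\kappa_1$ of Theorem~\ref{thm:standard_quantised_oja_convergence}. From Assumption~\ref{assumption:data_distribution}: $\|\bD_i\|=\|\bX_i\|^2\le 1$; the almost-sure spectral bound gives $\|\bD_i-\bSig\|\le\mathcal{M}$ with $\mathcal{M}$ the constant from the assumption; and $\|\E[(\bD_i-\bSig)(\bD_i-\bSig)^\top]\|\le\E\|\bX\bX^\top-\bSig\|^2\le\Nu$ (and likewise for the transposed version, the matrices being symmetric), so we take $\mathcal{V}_0=\Nu$.

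Next I would record the quantization-noise bounds already flagged in the Remark following Theorem~\ref{thm:standard_quantised_oja_convergence}. Writing $\vXi_i=(\eta\vxi_{a,i}+\vxi_{2,i}+(\id+\eta\bD_i)\vxi_{1,i})\vu_{i-1}^\top$ and bounding each factor: for the linear grid every coordinate of every elementary quantization error is at most $\delta$ almost surely, which gives $\kappa=O(\delta\sqrt d)$ and $\kappa_1=\|\E[\vXi_i^\top\vXi_i\mid\Fi]\|_F=O(\delta^2 d)$; for the logarithmic grid Lemma~\ref{lem:nlquantnorm} bounds the per-coordinate error of quantizing a vector $\vu$ by $\zeta|\vu_i|+\delta_0$, and since every vector quantized inside Algorithm~\ref{alg:quantstdoja} has norm $O(1)$, this yields $\kappa=O(\zeta+\delta_0\sqrt d)$ and $\kappa_1=O(\zeta^2+\delta_0^2 d)$. (These estimates are the content of Appendix~\ref{appendix:mb_standard_oja}.)

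The third step is to check that the stated hyperparameter windows enforce the precondition $\max(b\eta^2\mathcal{M}^2\log d,\,b\kappa^2\log d)=O(1)$: with $b=n$ and $\eta=\alpha\log n/(n\,\eigengap)$ the first term is $O(\alpha^2\mathcal{M}^2\log^2 n\,\log d/(n(\eigengap)^2))$, which is $O(1)$ once $n$ is large; the second term is $n\kappa^2\log d$, kept $O(1)$ by the constraint $\delta=O(1/\sqrt{dn})$ (resp.\ $\zeta+\delta_0\sqrt d=O(1/\sqrt{dn})$), while the remaining constraint $\delta=O(\eigengap/(\alpha\sqrt d\log n))$ (resp.\ its analogue) guarantees that the quantization noise is dominated by the signal so that the norm guarantee $\|\vu_n\|\in[1-\kappa_1,1+\kappa_1]$ holds. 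Substituting $b=n$, $\mathcal{V}_0=\Nu$, and the $\kappa,\kappa_1$ bounds into the conclusion of Theorem~\ref{thm:standard_quantised_oja_convergence}, and using $\max(n/(\alpha\log n),1)=n/(\alpha\log n)$ for $n$ large, gives
\[
\sin^2(\vu_n,\vv_1)\lesssim\frac{d}{n^{2\alpha}}+\frac{\alpha\Nu\log n}{n(\eigengap)^2}+\frac{n}{\alpha\log n}\,\kappa_1+\kappa^2.
\]
Since $\kappa^2=O(\kappa_1)$ and $n/(\alpha\log n)\ge1$, the last term is absorbed; plugging $\delta=2^{2-\beta}$ so $\kappa_1=O(d\,4^{-\beta})$ proves part~1, and plugging~\eqref{eq:nl_opt} so $\kappa_1=O((\beta^2+\log^2 d)\,4^{-\beta})$ proves part~2. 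Finally, because the algorithm returns $\vw_n=\quant(\vu_n,\mathcal{Q})$ rather than $\vu_n$, I would finish with a one-line perturbation bound: $\vw_n=\vu_n+\vxi$ with $\|\vxi\|\le\kappa$, hence $\sin^2(\vw_n,\vv_1)\lesssim\sin^2(\vu_n,\vv_1)+\kappa^2/\|\vu_n\|^2$, which only adds another $O(\kappa^2)$ and is absorbed as before.

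The heavy lifting is done by Theorem~\ref{thm:standard_quantised_oja_convergence}, so the obstacle here is bookkeeping rather than a new idea: one must carry the constants in the $\kappa,\kappa_1$ estimates cleanly through the three nested quantization operations of Algorithm~\ref{alg:quantstdoja} (quantizing $\vu_{i-1}$, each per-sample gradient $\bX_j(\bX_j^\top\w_i)$, and the averaged update), which in the logarithmic case additionally requires first establishing that all intermediate iterates have norm $O(1)$ so that the multiplicative per-coordinate bound can be applied, and then verify that the precise hyperparameter windows stated are exactly what makes Theorem~\ref{thm:standard_quantised_oja_convergence}'s precondition hold while keeping every quantization-induced term lower order than the statistical terms $d/n^{2\alpha}$ and $\alpha\Nu\log n/(n(\eigengap)^2)$.
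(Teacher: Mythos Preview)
Your proposal is correct and follows essentially the same approach as the paper: specialize the general noise-perturbed Oja result (Theorem~\ref{thm:standard_quantised_oja_convergence}) to $b=n$, instantiate $\mathcal{V}_0=\Nu$ and the quantization-noise parameters $\kappa,\kappa_1$ for each grid, verify the precondition via the stated hyperparameter windows, and substitute. The paper routes through intermediate Theorems~\ref{thm:quantstdbatchoja} and~\ref{thm:quantlogbatchoja} (which perform exactly the $\kappa,\kappa_1$ bookkeeping you outline, for general $b$) before setting $b=n$, but the argument is the same.
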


\begin{wrapfigure}[23]{r}{0.5\linewidth}
        \centering 
        \vspace{-\baselineskip}\includegraphics[width=0.5\textwidth]{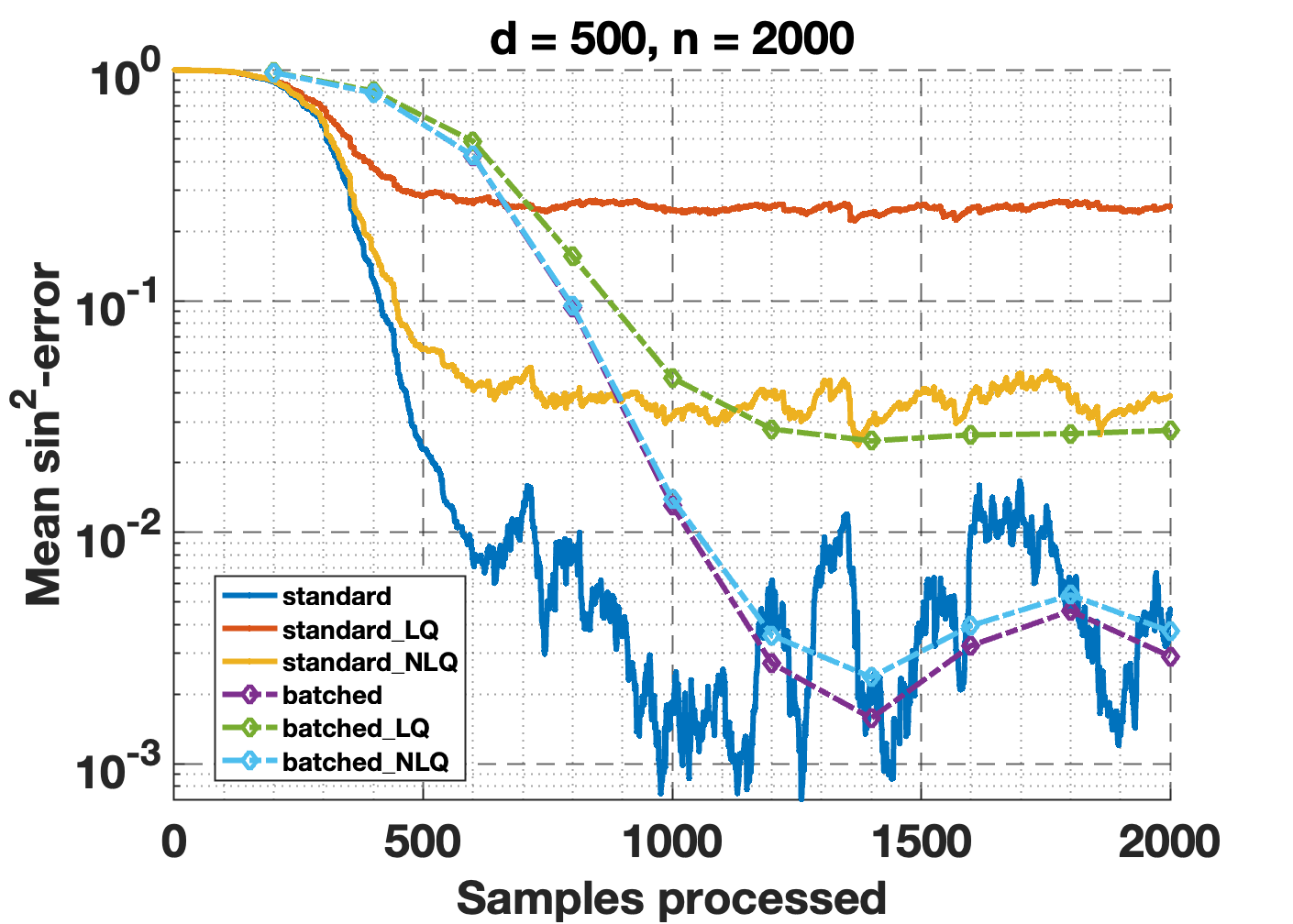}
        \caption{We study the effect of different quantization strategies on mean \(\sin^2\)-error over 10 runs as the number of samples grows on the $x$ axis.  \textit{Standard} uses $b=n$ batches whereas \textit{Batched} uses $b=10$ batches. Among the quantization algorithms, we see that in $\sin^2$ error, Standard LQ > Batched LQ and  Standard NLQ > Batched NLQ.}
        \label{fig:exp_vs_timesteps}
\end{wrapfigure}

Under linear quantization (LQ), the quantization error term scales as ${d}/{4^\beta}$, whereas under nonlinear/logarithmic quantization (NLQ) it is only $(\beta^2 + \log^2 d)/{4^\beta}$. Thus, NLQ achieves a \textit{nearly dimension-independent error} resulting from quantization, making it especially advantageous in high-dimensional settings. 

The errors of Oja's algorithm with batching due to quantization are $\tilde{O}(d 4^{-\beta})$ and $\tilde{O}(4^{-\beta})$ in the two cases of linear and logarithmic quantization, which are an $n$ factor larger than the corresponding errors without batching. 
Theorem~\ref{thm:batch} and~\ref{thm:nbatch} show that batching significantly improves the performance under quantization. They further show that the NLQ scheme, when suitably optimized, gives nearly dimension-independent dependence on the quantization error. In comparison, the error resulting from quantization in LQ suffers the most from higher dimensions. In Figure~\ref{fig:exp_vs_timesteps} we see that unquantized algorithms (standard and batched) have similar and best performance.  See Section~\ref{sec:experiments} for detailed experimental evidence supporting the theory.

\medskip
\begin{remark}
   Theorems~\ref{thm:batch} and~\ref{thm:nbatch} are stated with a constant probability of success. In Section~\ref{sec:boost} we provide a quantized probability boosting algorithm (Algorithm~\ref{alg:boostedOja}) which boosts the probability of success from a constant to $1-\theta$ for arbitrary $\theta\in (0,1]$.
\end{remark}
\subsection{Boosting the Probability of Success}\label{sec:boost}

Quantized Oja’s algorithm produces an estimate whose error is within the target threshold with constant success probability. This section addresses this gap by presenting a standard probability boosting framework to let the failure probability $\theta$ be arbitrarily small. 

Algorithm~\ref{alg:boostedOja} begins by partitioning $m$ data $\{\bX_i\}_{i \in [m]}$ into $r = \Theta(\log 1/\theta)$ disjoint batches of size $n$ each and runs the algorithm $\mathcal{A}$ on each batch. The output vectors $\{\vu_i\}_{i \in [r]}$ are then aggregated using the boosting procedure $\SuccessBoost$. This procedure looks for a \textit{popular} vector $\vu_i$ close to at least half of the other vectors and returns any such vector. A general argument for $\SuccessBoost$ for arbitrary distance metrics can be found in~\cite{kelner2023semi, kumarsarkar2024sparse}.

We use a quantized version $\qsin$ as a proxy for the $\sin^2$ error in the $\SuccessBoost$ procedure. $\qsin$ uses the linear quantization grid 
\ba{\label{eq:boosting_grid}
\mathcal{Q}_{L}^{(\beta)}(\eps) = \{-2^{\beta-1} \eps, -(2^{\beta-1}-1)\eps, \ldots, -\eps, 0, \eps, \ldots, (2^{\beta-1}-1)\eps\},
}
where the gap $\epsilon$ is set to the upper bound on the error guaranteed by Theorem~\ref{thm:batch} or Theorem~\ref{thm:nbatch} depending on the algorithm $\mathcal{A}$ in use. 

\begin{algorithm}[H]
\caption{\label{alg:boostedOja} Probability Boosted Oja's Algorithm}
\begin{algorithmic}[1]
\Require Data $\{\bX_{i}\}_{i \in [m]},$ algorithm $\mathcal{A}$, quantization grid $\mathcal{Q}_L(\eps)$, failure probability $\theta$, error $\eps$
\State $r \gets \lceil 20 \log (1/\theta) \rceil, \;\; n \gets \lfloor m/r \rfloor$
\For{$i = 1$ to $r$}
\State $B_i \gets \{(i-1)n, (i-1)n+1, \ldots, (i-1)n+n\}$
\State $\vu_i \gets \mathcal{A}\bb{\{\bX_{j}\}_{j \in B_i}}$
\EndFor
\Procedure{$\qsin$}{$\vx, \vy$}
  \State \Return $\quant\left({\sin^2(\vx, \vy)}, \mathcal{Q}_L(\eps) \right)$
\EndProcedure
\Procedure{$\textnormal{SuccessBoost}$}{$\{\vu_i\}_{i \in [r]}, \rho, \epsilon$}
  \For{$i=1$ to $r$}
        \State $c_i \gets |\{j \in [r] : \rho(\vu_i, \vu_j) \le 5\eps\}|$
        \If{$c_i \ge 0.5r$}
            \State \Return $\vu_i$
        \EndIf
  \EndFor
  \Return $\bot$
\EndProcedure
\State $\bar{\vu} \gets \SuccessBoost(\{\vu_i\}_{i \in [r]}, \qsin, \eps)$
\State \Return $\bar{\vu}$
\end{algorithmic}
\end{algorithm}





Standard arguments for $\SuccessBoost$ apply when the error $\qsin$ is either computed exactly.
The difference in our setting is that we the error function $\qsin$ is only approximately a metric and does not behave as intended if the computed value is outside the quantization range. To highlight the second point, consider the \textit{unbounded} quantization grid
\bas{
\mathcal{Q}^{*}_L(\eps) = \{k \eps : k \in \mathbb{Z} \}.
} 
With this grid, $\Abs{\qsin(\vx, \vy) - \sin^2(\vx, \vy)}$ is bounded by $O(\eps)$ almost surely. We extend the argument to show that Lemma~\ref{lem:success_boosting} holds even with the bounded grid $\mathcal{Q}_L(\eps) = \mathcal{Q}_L(\eps, \beta)$, which truncates values outside the range $[-2^{\beta-1}\eps, (2^{\beta-1}-1)\eps]$ to its endpoints. This requires a modest assumption that the number of bits $\beta\geq 4$, which is already assumed when optimizing the parameters in Section~\ref{sec:bit_budget}.

\medskip
\begin{lemma}\label{lem:success_boosting}
Let $d >1, \beta \ge 4, \eps \in (0, 0.75)$, $\theta \in (0,1)$, and $r = \lceil 20 \log (1/\theta) \rceil$. Let $\vv \in \R^d$ be a unit vector and $\vu_1, \vu_2, \ldots, \vu_r$ be independent random vectors such that $\Pr\bb{\sin^2(\vu_i, \vv) \le \eps} \ge 0.9$.  
Let $\qsin$ be the function defined in Algorithm~\ref{alg:boostedOja} with the quantization grid $\mathcal{Q}_L(\eps, \beta)$. Then, the vector $\bar{\vu} \defeq \SuccessBoost\bb{\{\vu_i\}_{i \in [r]}, \qsin, \eps} $
satisfies 
\bas{
\Pr\bb{\sin^2(\bar{\vu}, \vv) \le 14\eps} \ge 1-\theta.
}
\end{lemma}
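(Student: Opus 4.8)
The plan is to run the standard median-type boosting argument, with extra care taken because $\qsin$ is only an approximate metric inside a truncated range. First I would show that with probability at least $1-\theta$, a \emph{large fraction of the $\vu_i$ are simultaneously good}, i.e., at least $0.8r$ of them satisfy $\sin^2(\vu_i,\vv)\le\eps$. This is a Chernoff bound: each indicator $\1\{\sin^2(\vu_i,\vv)\le\eps\}$ is Bernoulli with mean $\ge 0.9$, they are independent, so the probability that fewer than $0.8r$ succeed is at most $\exp(-\Omega(r))\le\theta$ for $r=\lceil 20\log(1/\theta)\rceil$. Condition on this event for the rest of the proof. Call the good indices $G$, so $|G|\ge 0.8r$.

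Next I would handle the discretization of the distance. I need the fact that for any two unit (or near-unit) vectors, $\sin^2$ satisfies an approximate triangle inequality: $\sin^2(\vx,\vz)\le 2\sin^2(\vx,\vy)+2\sin^2(\vy,\vz)$ (this follows since $\sin(\cdot,\cdot)$ is a metric on the projective space, plus $(a+b)^2\le 2a^2+2b^2$; alternatively it is a routine consequence of the definition and is of the form used in \cite{kumarsarkar2024sparse}). Combined with the quantization error bound from \eqref{eq:unbiased}, and the observation that for the good indices all relevant true $\sin^2$ values lie in $[0,4\eps]$ — which is inside the representable range $[-2^{\beta-1}\eps,(2^{\beta-1}-1)\eps]$ precisely because $\beta\ge 4$ and $\eps<0.75$ (so $2^{\beta-1}-1\ge 7 > 4$, and no truncation is triggered on these pairs) — I get that $|\qsin(\vu_i,\vu_j)-\sin^2(\vu_i,\vu_j)|\le \eps$ whenever $i$ or $j$ lies in $G$ and the value is in range; and that truncation can only \emph{increase} a value that is out of the top of the range, never spuriously bring a large distance down into the $\le 5\eps$ ball. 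Hence for $i,j\in G$: $\sin^2(\vu_i,\vu_j)\le 2\sin^2(\vu_i,\vv)+2\sin^2(\vv,\vu_j)\le 4\eps$, so $\qsin(\vu_i,\vu_j)\le 4\eps+\eps=5\eps$; therefore every good $\vu_i$ has $c_i\ge|G|\ge 0.8r\ge 0.5r$, so $\SuccessBoost$ does not return $\bot$ — it returns \emph{some} index $i^\star$ with $c_{i^\star}\ge 0.5r$.

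Finally I would argue the returned vector is itself good-enough. Since $c_{i^\star}\ge 0.5r$ and $|G|\ge 0.8r$, the set $\{j:\qsin(\vu_{i^\star},\vu_j)\le 5\eps\}$ and $G$ together exceed $r$, so they intersect: there is a $j^\star\in G$ with $\qsin(\vu_{i^\star},\vu_{j^\star})\le 5\eps$. By the lower-bound direction of the quantization guarantee (quantization, including endpoint truncation, never reduces an in-range value by more than $\eps$, and truncation from above only helps here), $\sin^2(\vu_{i^\star},\vu_{j^\star})\le \qsin(\vu_{i^\star},\vu_{j^\star})+\eps\le 6\eps$. Then apply the approximate triangle inequality twice:
\bas{
\sin^2(\vu_{i^\star},\vv)\le 2\sin^2(\vu_{i^\star},\vu_{j^\star})+2\sin^2(\vu_{j^\star},\vv)\le 2\cdot 6\eps+2\eps=14\eps,
}
using $j^\star\in G$. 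This gives $\Pr(\sin^2(\bar\vu,\vv)\le 14\eps)\ge 1-\theta$, as claimed.

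The main obstacle is the bookkeeping around the truncated grid: one must verify carefully that on all the pairs that actually influence the argument (pairs involving a good index, whose true $\sin^2$ is at most $\approx 6\eps$) no truncation occurs, so the clean $\pm\eps$ quantization bound applies, while simultaneously arguing that truncation on the remaining (irrelevant, large-distance) pairs can only push values \emph{up}, hence cannot create a false "popular" certificate. The condition $\beta\ge 4$ is exactly what makes the representable range strictly larger than the $O(\eps)$-scale quantities in play; the constants ($5\eps$ threshold, $14\eps$ conclusion, $r=\lceil 20\log(1/\theta)\rceil$) are then just slack chosen to absorb the factor-$2$ losses in the approximate triangle inequality and the $\pm\eps$ rounding, so I would not belabor their optimization.
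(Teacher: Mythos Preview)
Your approach is essentially identical to the paper's: define the good set via indicators, Chernoff-bound its size, use the approximate triangle inequality $\sin^2(\vx,\vz)\le 2\sin^2(\vx,\vy)+2\sin^2(\vy,\vz)$ to show good pairs are $\qsin$-close, then pigeonhole to find a good neighbor of the returned vector and triangle again. Two small points to fix.

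First, the threshold $|G|\ge 0.8r$ does not yield failure probability $\le\theta$ with $r=\lceil 20\log(1/\theta)\rceil$: Hoeffding gives at best $\exp(-r/50)=\theta^{0.4}$, and multiplicative Chernoff is worse. The paper takes the threshold $|G|>0.6r$, where multiplicative Chernoff with relative deviation $1/3$ from the mean $0.9r$ gives exactly $e^{-r/20}\le\theta$. Since your argument only needs $|G|>0.5r$ for the pigeonhole, just lower the threshold.

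Second, your description of truncation is backwards: a value above the top of the grid is clipped \emph{down} to the endpoint $(2^{\beta-1}-1)\eps\ge 7\eps$, not ``pushed up.'' The correct reason truncation cannot create a false $\le 5\eps$ certificate is that this clipped value is still $\ge 7\eps>5\eps$, so any $j$ with $\qsin(\vu_{i^\star},\vu_j)\le 5\eps$ must have its true $\sin^2$ in range, whence $\sin^2\le\qsin+\eps\le 6\eps$. Your conclusion is right; the stated justification is not.
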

The proof of Lemma~\ref{lem:success_boosting} is in Appendix~\ref{appendix:boosting}.

Algorithm~\ref{alg:boostedOja} has a constant overhead in the error compared to algorithm $\mathcal{A}$. The probability of success is amplified from $0.9$ to $1-\theta$. The number of samples needed to achieve the same error (up to constant factors) as $\mathcal{A}$ blows up only by a multiplicative factor $\Theta(\log 1/\theta)$. If algorithm $\mathcal{A}$ runs in $O(nd)$ time and $O(d)$ space, which is the case for Oja's algorithm and its batch variants, then Algorithm~\ref{alg:boostedOja} takes $O(nd \log (1/\theta) + d\log^2 (1/\theta))$ time and $O(d \log (1/\theta))$ space.

\vspace{-5pt}
\section{Proof Techniques}
\label{sec:prooftech}
\vspace{-5pt}

Our proof of Theorem~\ref{thm:standard_quantised_oja_convergence} has three main parts. Let $\bZ_b=\prod_{i=b}^1(\id+\bA_i)$ where $\bA_{i} := \eta \bD_i+\Xi_i$ as described in equation~\eqref{eq:adaptive_noise_struct}. 
First, note that the sin-squared error can be written as $1 - \bb{\vu_{b}^{\top}\vv_{1}}^{2} = \|\vp\vp^{\top}\bZ_{b}\vu_{0}\|^{2}/\|\bZ_{b}\vu_{0}\|^{2}$.
Using the one-step power method result shown in Lemma 6 from \cite{jain2016streaming}, for a fixed $\theta \in \bb{0,1}$, with probability atleast $1-\theta$, 
\ba{
    1 - \bb{\vu_{b}^{\top}\vv_{1}}^{2} \leq \frac{3 \log\bb{1/\theta}}{\theta^{2}}\frac{\Tr\bb{\vp^{\top}\bZ_{b}\bZ_{b}^{\top}\vp}}{\vv_{1}^{\top}\bZ_{b}\bZ_{b}^{\top}\vv_{1}}. \label{eq:one_step_power_method}
}
This makes our strategy clear for the subsequent proof. We bound the numerator by bounding $\E[\Tr(\vp^{\top}\bZ_{b}\bZ_{b}^{\top}\vp)]$ and applying Markov's inequality. For the denominator, we lower bound $\norm{\bZ_{b}^{\top}\vv_{1}}$ by decomposing it as
\ba{
\|\bZ_{b}^{\top}\vv_{1}\| &\geq \|\bb{\id + \eta\bSig}^{b}\vv_{1}\| - \|({\bZ_{b}-\bb{\id + \eta\bSig}^{b}})^{\top}\vv_{1}\|
\geq \bb{1+\eta\lambda_{1}}^{b} - \|\bZ_{b}-\bb{\id + \eta\bSig}^{b}\| \label{eq:Zbv1_lowerbound}
}
and upper-bounding $\| \bZ_{b}-\bb{\id + \eta\bSig}^{b} \|$. For both the numerator and the denominator, we use the following intermediate bound, which controls the $(p,q)$-norm for a random matrix $\bX$ defined as $\npq{\bX} = \E[\norm{\bX}_{p}^{q}]^{1/q}$, where $\norm{\bX}_{p}$ represents the Schatten-$p$ norm.
\medskip
\begin{proposition}\label{prop:Z_n_pq_norm_bound_main}Let the noise term $\vXi$, defined in~\eqref{eq:di}, be bounded as $\norm{\vXi} \leq \kappa$ almost surely. Under Assumption~\ref{assumption:data_distribution}, for $\eta \in  (0,1)$, we have
\bas{
\npq{\bZ_b}^2 &\leq \phi^b\exp(C_{p}b\gamma) \np{\bZ_0}^2 \\
    \npq{\bZ_b-(\id+\eta\bSig)^b}^2 &\leq \phi^b(\exp(C_{p}b\gamma)-1) \np{\bZ_0}^2,
}
where $\bZ_0 = \id$, $\phi := (1+\eta\lambda_1)^2$, $\gamma := 2(\eta^2\M^2+\kappa^2),$ and $C_{p} := p-1$.
\end{proposition}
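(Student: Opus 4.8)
The plan is to prove both inequalities by a one-step conditional-moment recursion on the Schatten-$p$ norms of the partial products, followed by telescoping. Write $\ml := \id + \eta\bSig$ for the deterministic mean update (so that $\normsop{\ml} = 1+\eta\lambda_1$ and $\phi = \normsop{\ml}^2$) and $\mathbf{N}_i := \bA_i - \eta\bSig = \eta(\bD_i-\bSig)+\vXi_i$ for the fluctuation, so that $\bZ_i = (\ml+\mathbf{N}_i)\bZ_{i-1}$ with $\bZ_0 = \id$. Let $\mathcal{G}_{i-1} := \sigma(\bD_1,\dots,\bD_{i-1},\vXi_1,\dots,\vXi_{i-1})$, the history before step $i$, with respect to which $\bZ_{i-1}$ is measurable. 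The first step is to verify that $\mathbf{N}_i$ is a martingale increment: $\E[\bD_i-\bSig\mid\mathcal{G}_{i-1}]=\mzero$ since batch $i$ is i.i.d.\ and independent of the past with $\E[\bD_i]=\bSig$, and $\E[\vXi_i\mid\mathcal{G}_{i-1}]=\E[\E[\vXi_i\mid\Fi]\mid\mathcal{G}_{i-1}]=\mzero$ by the assumed conditional unbiasedness $\E[\vXi_i\mid\Fi]=\mzero$ together with $\mathcal{G}_{i-1}\subseteq\Fi$; hence $\E[\mathbf{N}_i\mid\mathcal{G}_{i-1}]=\mzero$. Only the crude a.s.\ bound $\normsop{\mathbf{N}_i}^2\le(\eta\M+\kappa)^2\le 2(\eta^2\M^2+\kappa^2)=\gamma$ (from the almost-sure bounds in~\eqref{eq:di}) is needed; in particular $\mathcal{V}_0$ and $\kappa_1$ are irrelevant here, consistent with this being an envelope estimate whose only role in Theorem~\ref{thm:standard_quantised_oja_convergence} is to keep $\exp(C_p b\gamma)=O(1)$ under the stated hypotheses.

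The analytic core is the sharp $2$-uniform smoothness of the Schatten class $S_p$, $p\ge2$ (Ball--Carlen--Lieb): for all matrices $\bX,\bY$, writing $\mathbf{G}(\bX)$ for the gradient of $\tfrac12\norms{\cdot}_p^2$ at $\bX$ (a Borel function of $\bX$, since $\norms{\cdot}_p$ is smooth away from $\mzero$ for $p>1$),
\[
\norms{\bX+\bY}_p^2 \;\le\; \norms{\bX}_p^2 + 2\inprods{\mathbf{G}(\bX)}{\bY} + (p-1)\,\norms{\bY}_p^2 .
\]
Applying this conditionally on $\mathcal{G}_{i-1}$ with $\bX=\ml\bZ_{i-1}$ ($\mathcal{G}_{i-1}$-measurable, hence so is $\mathbf{G}(\bX)$) and $\bY=\mathbf{N}_i\bZ_{i-1}$ (conditionally mean-zero) makes the linear term vanish in expectation, giving
\[
\E[\norms{\bZ_i}_p^2\mid\mathcal{G}_{i-1}] \;\le\; \norms{\ml\bZ_{i-1}}_p^2 + (p-1)\,\E[\norms{\mathbf{N}_i\bZ_{i-1}}_p^2\mid\mathcal{G}_{i-1}] .
\]
By submultiplicativity $\norms{\ma\mb}_p\le\normsop{\ma}\norms{\mb}_p$, the first term is $\le\phi\norms{\bZ_{i-1}}_p^2$ and the second is $\le\gamma\norms{\bZ_{i-1}}_p^2$ a.s., so taking expectations yields $\E\norms{\bZ_i}_p^2\le(\phi+(p-1)\gamma)\,\E\norms{\bZ_{i-1}}_p^2$. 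Telescoping from $\bZ_0=\id$ and using $\phi\ge1$ in $(\phi+(p-1)\gamma)^b=\phi^b(1+(p-1)\gamma/\phi)^b\le\phi^b\exp((p-1)b\gamma)$ proves the first claim for $q=2$; for $q\le2$ — all that the applications use — it follows from $\npq{\bZ_b}^2=\E[\norms{\bZ_b}_p^q]^{2/q}\le\E\norms{\bZ_b}_p^2$ by monotonicity of $L^q$-norms, with $\np{\bZ_0}^2=\norms{\id}_p^2=d^{2/p}$.

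For the second inequality, put $\bY_i:=\bZ_i-\ml^i$, so $\bY_0=\mzero$ and $\bY_i=\ml\bY_{i-1}+\mathbf{N}_i\bZ_{i-1}$; the same smoothness step with $\bX=\ml\bY_{i-1}$ gives $\E[\norms{\bY_i}_p^2\mid\mathcal{G}_{i-1}]\le\phi\norms{\bY_{i-1}}_p^2+(p-1)\gamma\norms{\bZ_{i-1}}_p^2$. Substituting the bound just obtained for $\E\norms{\bZ_{i-1}}_p^2$ and unrolling the linear recursion $a_i\le\phi a_{i-1}+(p-1)\gamma\,(\phi+(p-1)\gamma)^{i-1}\norms{\id}_p^2$ from $a_0=0$, the geometric series telescopes (the factor $(p-1)\gamma$ cancels against $s-\phi$ with $s=\phi+(p-1)\gamma$) to $a_b\le\big((\phi+(p-1)\gamma)^b-\phi^b\big)\norms{\id}_p^2\le\phi^b\big(\exp((p-1)b\gamma)-1\big)\norms{\id}_p^2$, which is the second claim since $C_p=p-1$ and $\np{\bZ_0}=\norms{\id}_p$.

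The only genuinely non-routine ingredient is the Schatten $2$-uniform smoothness inequality with the correct constant $\sqrt{p-1}$: for $p=2$ it is just the Hilbert--Schmidt Pythagorean identity (constant $1$), making the argument elementary there, and it is precisely the general-$p$ regime — taken $p\asymp\log d$ later so that $d^{1/p}=O(1)$ — that forces the $C_p=p-1$ in the statement and makes the dimension-free bookkeeping work. The remaining care is standard: peeling the two sources of randomness against the right filtration so that $\mathbf{N}_i$ is a genuine martingale increment (the nested conditioning through $\Fi$), checking that $\mathbf{G}(\bX)$ is a measurable function of $\bX$ so it commutes out of the conditional expectation, and the submultiplicativity/telescoping steps, all of which are mechanical.
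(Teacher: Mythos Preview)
Your proposal is correct and follows the same route as the paper: both reduce to the $2$-uniform smoothness of the Schatten-$p$ class (Ball--Carlen--Lieb) applied to the one-step martingale recursion $\bZ_i=(\ml+\mathbf{N}_i)\bZ_{i-1}$, then telescope. The paper's proof simply verifies the conditional mean and the a.s.\ fluctuation bound and then invokes Theorem~7.4 of \cite{huang2020matrix} as a black box; you have unpacked that black box, which is a virtue.

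One inaccuracy worth flagging: your parenthetical ``for $q\le2$ --- all that the applications use'' is not correct. Lemma~\ref{lemma:Z_n_tail_bound} applies the proposition with $q=p$ (it bounds $\E\norms{\cdot}_p^p=\npp{\cdot}^p$ for Markov's inequality), and there $p$ is taken of order $\log d\gg2$. Your argument as written only yields the case $q=2$. The extension to $2\le q\le p$ is exactly what the Huang--Tropp machinery supplies: the mixed norm $\bX\mapsto\npq{\bX}=(\E\norms{\bX}_p^q)^{1/q}$ on random matrices is itself $2$-uniformly smooth with constant $\sqrt{p-1}$ whenever $2\le q\le p$, so the recursion and telescoping you wrote go through verbatim with $\npq{\cdot}^2$ replacing $\E\norms{\cdot}_p^2$. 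This is not a gap in the method, only in the stated scope; but since the downstream tail bound genuinely needs $q=p$, you should either invoke the general smoothness lemma or restrict the proposition to $q=2$ and rework Lemma~\ref{lemma:Z_n_tail_bound} accordingly.
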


The proof of Proposition~\ref{prop:Z_n_pq_norm_bound_main} adapts the arguments for matrix product concentration from~\cite{huang2020matrix}. which also include results for a general sequence of matrices adapted to a suitable filtration.

From Proposition~\ref{prop:Z_n_pq_norm_bound_main} with $q = 2$, $p = 2+2\log d$, we get
\bas{
    \E\bbb{\|\bZ_b-(\id+\eta\bSig)^b\|} \leq \vvvert \bZ_b-(\id+\eta\bSig)^b \vvvert_{p,2} \leq  \sqrt{e^{2}b\gamma\bb{1+2\log\bb{d}}}\bb{1+\eta\lambda_{1}}^{b}.
}
This allows us to control the lower bound via Markov's inequality, by substituting in equation~\eqref{eq:Zbv1_lowerbound}.

To control the numerator, we show the following result (Lemma~\ref{lemma:Zn_tr_vp_main}),
\begin{lemma}\label{lemma:Zn_tr_vp_main}Let Assumption~\ref{assumption:data_distribution} hold and let $\gamma := 2(\eta^2\M^2+\kappa^2)$. If $b\gamma\bb{1+2\log\bb{d}} \leq 1$, then
\bas{\E[\Tr(\vp^{\top}\bZ_{b}\bZ_{b}^{\top}\vp)] \leq \exp\bb{2\eta b \lambda_{1}+ \eta^{2}b\bb{\Nu_0 + \lambda_{1}^{2}} }\bb{\frac{d}{\exp\bb{2\eta b\bb{\lambda_{1}-\lambda_{2}}}} + \frac{5\eta^{2}\Nu_0 + 5\kappa_1}{\eta\bb{\lambda_{1}-\lambda_{2}}}
}.
}
\end{lemma}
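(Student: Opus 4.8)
The plan is to set up a recursion on the matrix $\bM_b := \E[\bZ_b^\top \vp \vp^\top \bZ_b]$ (a PSD matrix) so that $\Tr(\vp^\top \bZ_b \bZ_b^\top \vp) = \Tr(\bM_b)$, then iterate the recursion $b$ times and bound the resulting geometric-type sum. Write $\bZ_b = (\id + \bA_b)\bZ_{b-1}$ with $\bA_b = \eta\bD_b + \vXi_b$. Conditioning on $\Fi$ (which contains $\bZ_{b-1}$ and $\bD_b$ but integrates out $\vXi_b$) and using $\E_b[\vXi_b\mid\Fi]=\mzero$, I would expand
\bas{
\E_b[(\id+\bA_b)^\top \vp\vp^\top (\id+\bA_b)\mid \Fi]
= (\id+\eta\bD_b)^\top \vp\vp^\top (\id+\eta\bD_b) + \E_b[\vXi_b^\top \vp\vp^\top \vXi_b\mid\Fi].
}
The cross terms vanish by unbiasedness. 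The second term is controlled by $\|\E_b[\vXi_b^\top\vXi_b\mid\Fi]\|_F \le \kappa_1$ from~\eqref{eq:di}, contributing at most $\kappa_1$ in trace (since $\vp\vp^\top$ is a projection). Then I would take expectation over $\bD_b$, using $\E[\bD_b]=\bSig$, $\vp^\top\bSig = \bSig\vp$ with top eigenvalue $\lambda_2$ on the range of $\vp$, and the variance bound $\Nu_0$ to replace $\E[(\id+\eta\bD_b)^\top \vp\vp^\top (\id+\eta\bD_b)]$ by $(\id+\eta\bSig)\vp\vp^\top(\id+\eta\bSig) + \eta^2\Nu_0 \cdot(\text{something of trace} \le \Tr(\bM_{b-1}))$, carefully tracking that the ``noise'' pieces only see $\Tr(\bM_{b-1})$ while the ``signal'' piece keeps the $(\id+\eta\bSig)$ factors that will contract by $(1+\eta\lambda_2)/(1+\eta\lambda_1)$ relative to the normalization.

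Next I would unroll: defining $a_b := \Tr(\bM_b)$ and factoring out the growth $\exp(2\eta b\lambda_1 + \eta^2 b(\Nu_0+\lambda_1^2))$ (which accounts for the crude operator-norm growth of $\bZ_b$ along all directions, matching Proposition~\ref{prop:Z_n_pq_norm_bound_main} with $\kappa^2$ absorbed), I expect a recursion of the form $a_b \le \rho\, a_{b-1} + (\eta^2\Nu_0 + \kappa_1)\cdot(\text{growth factor})$ where $\rho \approx \exp(-2\eta(\lambda_1-\lambda_2))\cdot(1+O(\eta^2))$. The base case is $a_0 = \Tr(\vp\vp^\top) \le d$, giving the $d\,\exp(-2\eta b(\lambda_1-\lambda_2))$ term after removing the global growth factor; the accumulated noise is a geometric series $\sum_{j<b}\rho^{\,j}(\eta^2\Nu_0+\kappa_1)$, which under the hypothesis $b\gamma(1+2\log d)\le 1$ (so all the $O(\eta^2 b\M^2)$ and $O(b\kappa^2)$ corrections are bounded) sums to $O\!\big((\eta^2\Nu_0 + \kappa_1)/(\eta(\lambda_1-\lambda_2))\big)$, yielding the $\tfrac{5\eta^2\Nu_0 + 5\kappa_1}{\eta(\lambda_1-\lambda_2)}$ term with the explicit constant $5$ coming from bounding $1/(1-\rho)$ and the lower-order multiplicative slack.

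The main obstacle I anticipate is handling the cross-interaction between the $\vXi_b$-noise term and the projection $\vp\vp^\top$ \emph{inside} the product rather than just at the ends: the bound $\|\E_b[\vXi_b^\top\vXi_b\mid\Fi]\|_F \le \kappa_1$ must be converted into a clean trace-recursion that does not re-introduce a factor of $d$ at each step (otherwise one would get $d\cdot b\cdot\kappa_1$ instead of the additive $\kappa_1/(\eta(\lambda_1-\lambda_2))$). This is where the structure $\vXi_b = (\text{vector})\,\vu_{b-1}^\top$ being rank one matters — the rank-one form means $\vXi_b^\top\vp\vp^\top\vXi_b$ is itself rank one with trace equal to $\|\vp^\top(\text{vector})\|^2\,\|\vu_{b-1}\|^2$, and since $\|\vu_{b-1}\|$ is controlled, this only contributes $\kappa_1$ per step to the trace, not $d\kappa_1$. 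Getting the normalization bookkeeping exactly right so that the geometric decay $\rho<1$ actually kicks in (rather than being swamped by the $\eta^2(\Nu_0+\lambda_1^2)$ growth) is the delicate part, and is exactly where the condition $b\gamma(1+2\log d)\le 1$ is used.
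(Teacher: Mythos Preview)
Your outline matches the paper's proof (which in turn follows Lemma~10 of \cite{jain2016streaming}): set $\beta_i := \E[\Tr(\vp^\top\bZ_i\bZ_i^\top\vp)]$, expand one factor $(\id+\bA_i)$, use $\E_i[\vXi_i]=\mzero$ to kill cross terms, and unroll. Your identification of the rank-one structure of $\vXi_i$ as the reason the $\kappa_1$ contribution stays dimension-free is also correct and is exactly what the paper uses (implicitly) when it bounds $\Tr\big(\E[\vXi_i^\top\vp\vp^\top\vXi_i\mid\Fi]\big)\le\kappa_1$.

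There is, however, an internal inconsistency in your write-up that would be a real gap if you followed the first version literally. You say the ``noise pieces only see $\Tr(\bM_{b-1})$,'' i.e., that the contribution of $\eta^2\E[(\bD_i-\bSig)^\top\vp\vp^\top(\bD_i-\bSig)]$ (and of the $\vXi_i$-term) to $\beta_i$ is bounded by a multiple of $\beta_{i-1}$. That is false: the matrix $\E[(\bD_i-\bSig)^\top\vp\vp^\top(\bD_i-\bSig)]$ is \emph{not} supported on the range of $\vp\vp^\top$ --- the fluctuation $\bD_i-\bSig$ rotates mass from the $\vv_1$ direction into the $\vp$ subspace --- so when you trace it against $\bZ_{i-1}\bZ_{i-1}^\top$ you pick up $\E[\|\bZ_{i-1}\|^2]$ (the full operator-norm growth), not just $\beta_{i-1}$. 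The same happens for the $\kappa_1$ term. The correct one-step inequality is
\[
\beta_i \;\le\; \bigl(1+2\eta\lambda_2+\eta^2(\Nu_0+\lambda_1^2)\bigr)\,\beta_{i-1} \;+\; (\eta^2\Nu_0+\kappa_1)\,\E\!\big[\|\bZ_{i-1}\|^2\big],
\]
and this is precisely where Proposition~\ref{prop:Z_n_pq_norm_bound_main} is invoked to supply $\E[\|\bZ_{i-1}\|^2]\le C(1+\eta\lambda_1)^{2(i-1)}$ under the hypothesis $b\gamma(1+2\log d)\le 1$ (with $p$ chosen around $\sqrt{2\log d/(b\gamma)}$ so that $\|\id\|_p^2\exp(C_p b\gamma)$ stays $O(1)$). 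Your \emph{second} description --- ``$a_b \le \rho\, a_{b-1} + (\eta^2\Nu_0 + \kappa_1)\cdot(\text{growth factor})$'' --- is the right one, and once you plug in that external bound the unrolling and the constant $5$ go through exactly as in the paper. Just make sure the proof you write follows that version, not the ``noise only sees $\Tr(\bM_{b-1})$'' version.
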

The proof of Lemma~\ref{lemma:Zn_tr_vp_main} follows  Lemma 10 of \cite{jain2016streaming} to show, for $\beta_{t} := \E[\Tr(\vp^{\top}\bZ_{t}\bZ_{t}^{\top}\vp)]$,
\bas{
    \beta_{t} \leq \bb{1 + 2\eta\lambda_{2} + \eta^{2}\bb{\Nu_0+\lambda_{1}^{2}}}\beta_{t-1} + \bb{\eta^{2}\Nu_0 + \kappa_1}\E[\|\bZ_{t-1}\|^{2}].
}
At this step, we deviate from their proof and appeal to Proposition~\ref{prop:Z_n_pq_norm_bound_main} for bounding $\E[\|\bZ_{t-1} \|^{2}]$. Setting $\phi := (1+\eta\lambda_1)^2$, $\gamma := 2(\eta^2\M^2+\kappa^2)$ and $p := \max(2, \sqrt{2\log d/(b\gamma)})$, we get
\bas{
    \E[\| \bZ_{b} \|]^{2} \leq \vvvert \bZ_{b} \vvvert_{p,2}^2 \leq \phi^{b} \exp(C_pb\gamma)\norm{\bZ_{0}}_{p}^{2} \leq \bb{1+\eta\lambda_{1}}^{2b}\exp\bb{2pb\gamma}.
}
Unrolling the recursion and using this bound proves Lemma~\ref{lemma:Zn_tr_vp_main}. The proof of Theorem~\ref{thm:standard_quantised_oja_convergence} then follows from the one-step power method guarantee in equation~\ref{eq:one_step_power_method}.
Detailed proofs are in Appendix~\ref{appendix:standard_oja_general_noise}.
\section{Experiments}
\label{sec:experiments}

\begin{figure}[htb]
  \centering
  \begin{subfigure}[t]{0.32\textwidth}
    \centering
    \includegraphics[width=\textwidth]{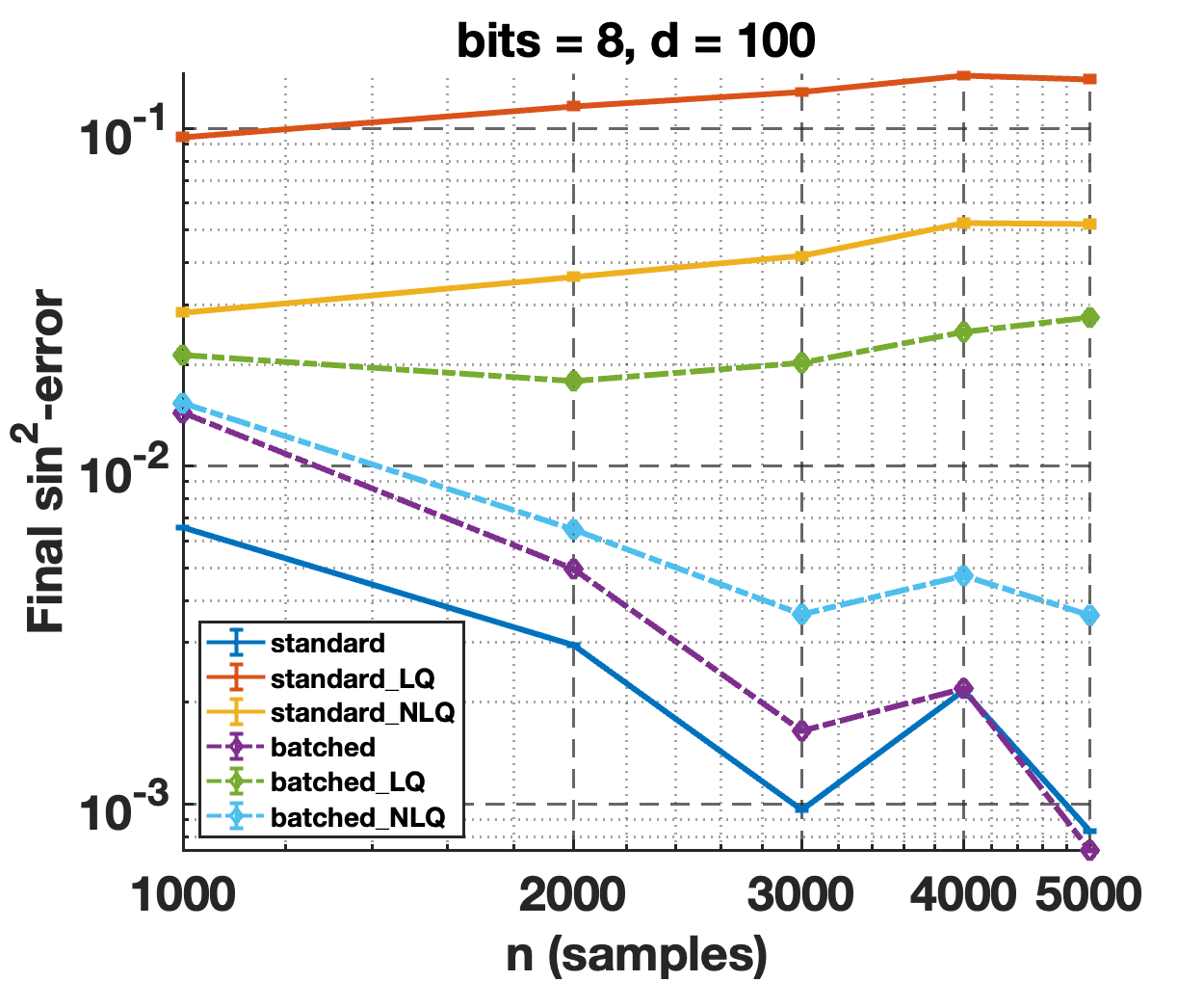}
    \caption{Varying sample size \(n\), fixed \(d=100\), bits \(=8\).}
    \label{fig:exp1}
  \end{subfigure}
  \hfill
  \begin{subfigure}[t]{0.32\textwidth}
    \centering
    \includegraphics[width=\textwidth]{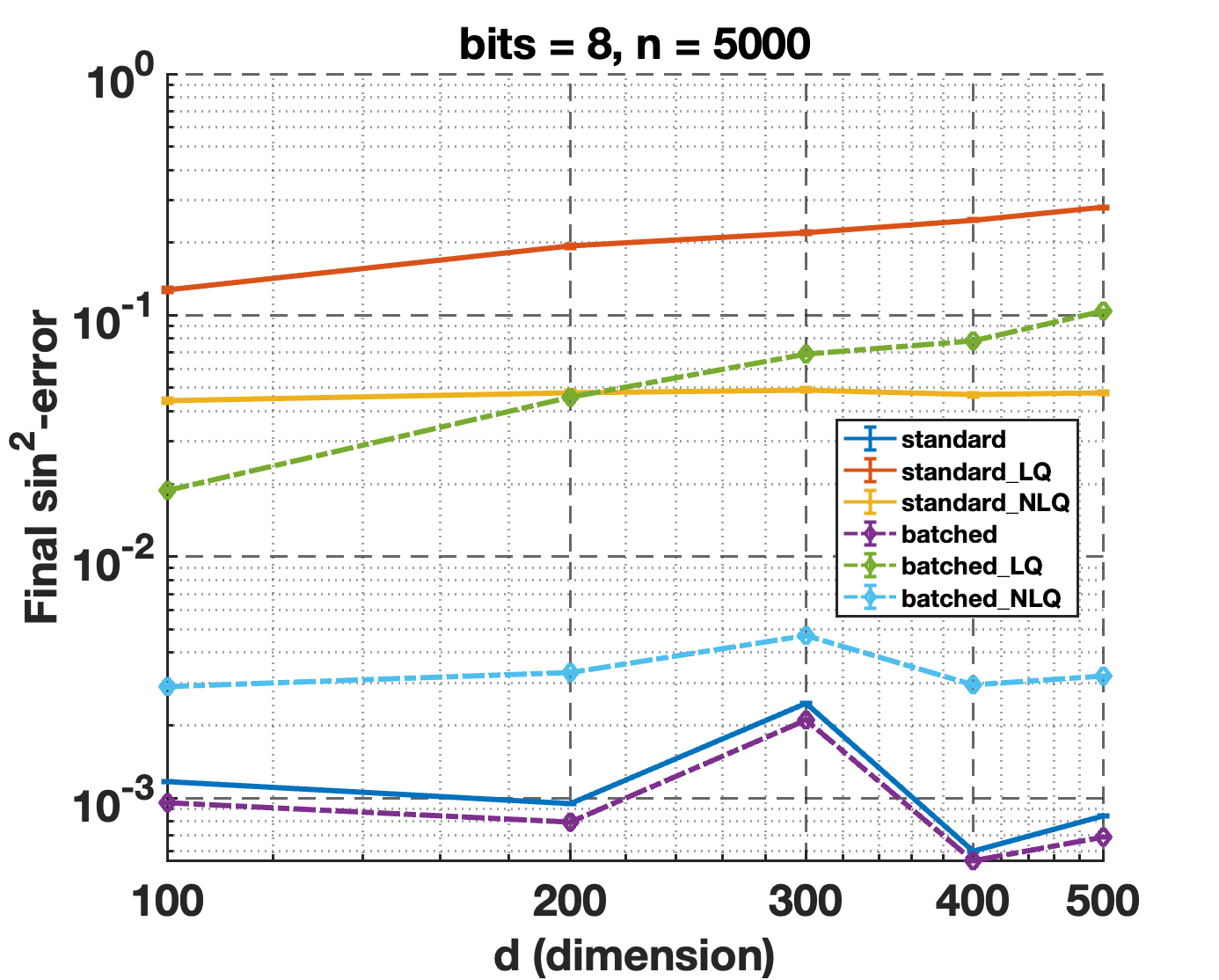}
    \caption{Varying dimension \(d\), fixed \(n=5000\), bits \(=8\).}
    \label{fig:exp2}
  \end{subfigure}
  \hfill
  \begin{subfigure}[t]{0.32\textwidth}
    \centering
    \includegraphics[width=\textwidth]{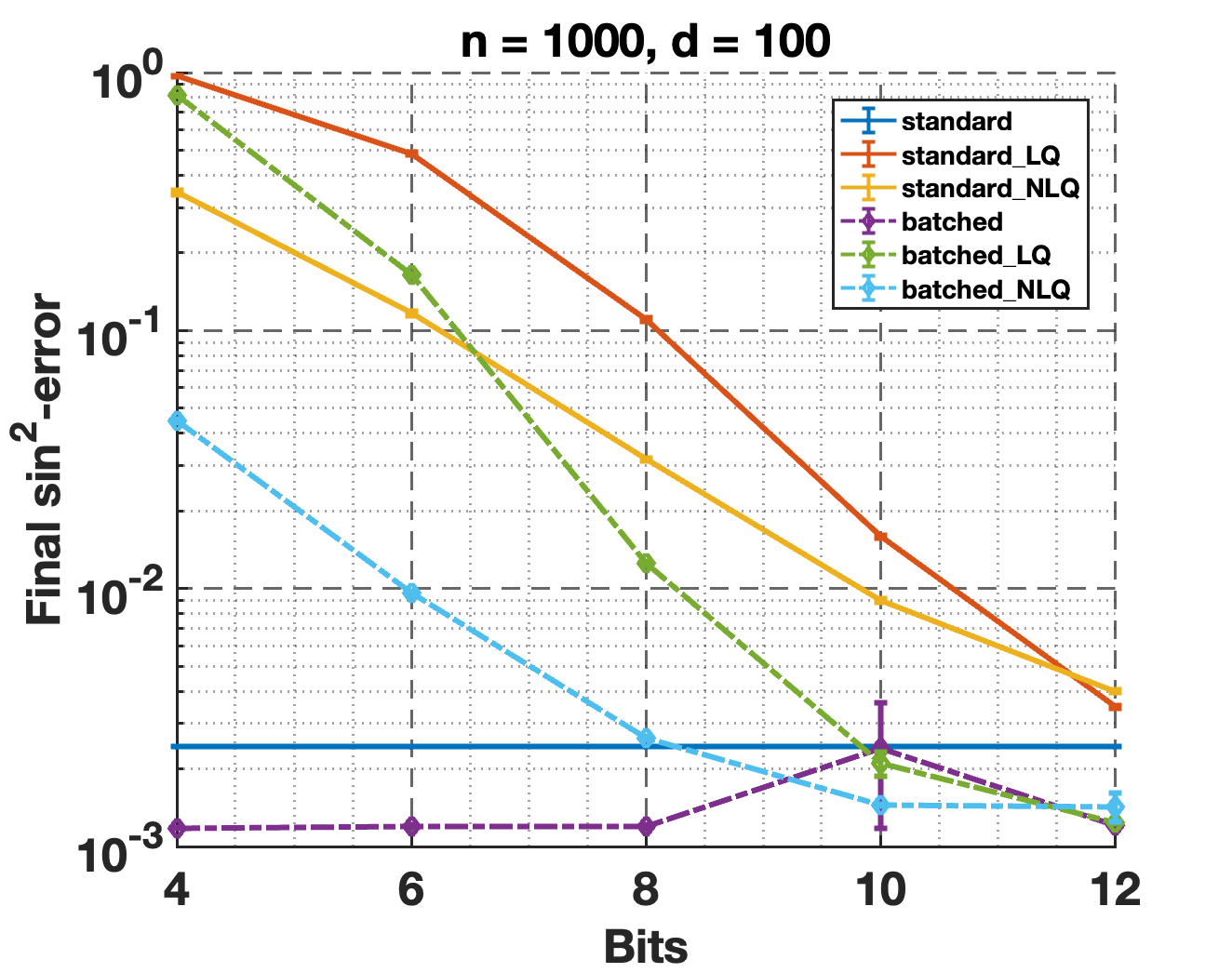}
    \caption{Varying bits \(\beta\), fixed \(n=1000\), \(d=100\).}
    \label{fig:exp3}
  \end{subfigure}
  \caption{Variation of \(\sin^2\)-error with (a) sample size, (b) dimension, and (c) quantization bits.}
  \label{fig:all_experiments}
  \vspace{-5pt}
\end{figure}

We generate $n$ samples from a $d$ dimensional distribution selected by choosing a random orthonormal matrix $\bm{Q}$, setting $\bm{\Sigma} := \bm{Q}\bm{\Lambda} \bm{Q}^{\top}$ for $\bm{\Lambda}_{ii} := i^{-2}$ and sampling datapoints i.i.d  from $\mathcal{N}\bb{0, \bm{\Sigma}}$. We compare six variants of Oja’s algorithm for estimating $v_{1}$, the leading eigenvector of $\Sigma$. The baseline is the standard full precision update in Eq~\ref{eq:ojaupdate} (\textit{standard}). \textit{standard\_LQ} and \textit{standard\_NLQ} use Algorithm~\ref{alg:quantstdoja} with $b=n$ and $Q(.,\ql)$ and $Q(.,\qnl)$ respectively.  The \textit{batched} variant follows Eq~\ref{eq:batchupdate} with $b=100$ (for Figures~\ref{fig:exp1} and \ref{fig:exp2}) and $b=25$ (for Figure~\ref{fig:exp3}) equal-sized batches.  Finally, we combine the batched schedule by running Algorithm~\ref{alg:quantstdoja} with $\quant(.,\ql)$ (\textit{batched\_LQ}) and with $\quant(.,\qnl)$ (\textit{batched\_NLQ}). All experiments were done on a personal computer with a single CPU.

The low-precision methods rely on Eq~\ref{eq:nl_opt} to choose quantization parameters for a target number of bits $\beta=8$.  Given the dimension $d$, these routines compute a uniform quantization step $\delta_{\mathrm{uni}}$, an exponential step $\delta_{\mathrm{exp}}$, and a multiplicative-growth factor $\alpha_{\mathrm{exp}}$ to cover a fixed dynamic range. Each configuration is run for $R=100$ independent trials.  In Experiment 1 we fix $d=100$ and vary $n\in\{1000,2000,3000,4000,5000\}$; in Experiment 2 we fix $n=5000$ and vary $d\in\{100,200,300,400,500\}$.  Every trial begins from a random Gaussian vector normalized to unit length. We set the learning rate to $\eta = \tfrac{2\,\ln(n)}{n\,(\lambda_{1}-\lambda_{2})}$ for the standard method and to $\eta = \tfrac{2\,\ln(n)}{b\,(\lambda_{1}-\lambda_{2})}$ for the batched methods. Upon completion we record the final excess error $\sin^{2}(\hat \vw,\vv_{1}) = 1 - (\hat \vw^{\top}\vv_{1})^{2}$ and report the mean. The first two use the $\log$-$\log$ scale and the third uses the $\log$ scale for the $y$-axis.

As shown in Figure~\ref{fig:exp1}, all methods improve as the number of samples \(n\) grows except \textit{standard\_LQ} and \textit{standard\_NLQ}. The errors of these two methods, as expected from Theorem~\ref{thm:nbatch}, grow linearly with $n$. 
In contrast, the \textit{batched\_LQ} and \textit{batched\_NLQ}'s quantization errors do not depend linearly on $n$ and improve over the standard counterparts. 
Figure~\ref{fig:exp2} shows how the error varies with the data dimension~$d$. Since $\Nu$ grows mildly with $d$,  for our data distribution, all methods other than \textit{standard\_LQ} and \textit{batched\_LQ} do not grow with $d$. These two methods grow linearly with $d$, confirming our theoretical findings in the first results under Theorems~\ref{thm:batch} and~\ref{thm:nbatch}.
Finally, Figure~\ref{fig:exp3} compares the errors with the bit budget \(\beta\).  As \(\beta\) increases from 4 to 12, linear and logarithmic quantization schemes steadily reduce their error and converge toward the full-precision result by \(\beta=12\). The batched quantizers require only 6–8 bits to achieve comparable performance to the full-precision batched error, whereas the \textit{standard\_LQ} and \textit{standard\_NLQ} need at least 10 bits to reach the same performance. The variability of the full precision methods arises from the randomness of initializations. Appendix~\ref{appendix:experimental_details} provides experiments on additional real-world and synthetic data.

\section{Conclusion}\label{sec:conclusion}
We study the effect of linear (LQ) and logarithmic (NLQ) stochastic quantization on Oja's algorithm for streaming PCA. We obtain new lower bounds under both quantization settings and show that the batch variant of our quantized streaming algorithm achieves the lower bound up to logarithmic factors. The lower bound on the quantization error resulting from our logarithmic quantization is dimension-free. In contrast, the quantization error under the LQ scheme depends linearly in $d$, which is problematic in high dimensions. We also show a surprising phenomenon under quantization: the quantization error of standard Oja's algorithm scales with $n$ under both NLQ and LQ schemes, while batch updates with a small batch size does not incur this dependence. These theoretical observations are validated via experiments. A limitation of our analysis is that we estimate the first principal component only. Deflation-based approaches (see e.g. \cite{pmlr-v247-jambulapati24a,mackey2008deflation, sameni2009deflation}) provide an interesting future direction for extending this work for retrieving the top $k$ principal components.

\section*{Acknowledgements}
We gratefully acknowledge NSF grants 2217069, 2019844, and DMS 2109155.

\bibliographystyle{alpha}
\bibliography{refs}
\newpage
\appendix
\renewcommand{\thetheorem}{A.\arabic{theorem}}
\renewcommand{\thelemma}{A.\arabic{lemma}}
\renewcommand{\theproposition}{A.\arabic{proposition}}
\renewcommand{\theremark}{A.\arabic{remark}}
 \renewcommand{\thefigure}{A.\arabic{figure}}
\renewcommand{\theequation}{A.\arabic{equation}}

\setcounter{lemma}{0}
\setcounter{figure}{0}
\setcounter{proposition}{0}
\setcounter{remark}{0}

The Appendix is organized as follows:
\begin{enumerate}
    \item Section~\ref{appendix:utility_results} provides utility results useful in subsequent proofs.
    \item Section~\ref{appendix:lower_bound} provides the proof of the lower bound described in Section~\ref{sec:lower_bound}
    \item Section~\ref{appendix:standard_oja_general_noise} proves helper lemmas for the results in Section~\ref{sec:prooftech}.
    \item Section~\ref{appendix:mb_standard_oja} 
    proves Theorems~\ref{thm:standard_quantised_oja_convergence},~\ref{thm:batch} and~\ref{thm:nbatch}.
    \item Section~\ref{appendix:boosting} proves the boosting result (Lemma~\ref{lem:success_boosting}) and end to end analysis of Algorithm~\ref{alg:quantstdoja} followed by the boosting algorithm~\ref{alg:boostedOja}.
    \item Section~\ref{appendix:experimental_details} provides additional experiments. 
    \item Section~\ref{sec:related_work} provides more related work. 
\end{enumerate}

\section{Utlity Results}
\label{appendix:utility_results}


\begin{lemma}\label{lemma:stochastic_quantization_bound}
    Let $l \le x \le u$ be reals, and define
    \bas{
    \quant(x,\mathcal{Q})=
    \begin{cases}
    \ell & \text{with probability }\, 1-p(x)\\
    u & \text{with probability }\, p(x)
    \end{cases},
    }
where $p(x) := (x-\ell)/(u-\ell)$. Then,
\begin{itemize}
    \item [(i)] $\mathbb{E}\bbb{\quant(x,\mathcal{Q})|x} = x$.
    \item [(ii)]$\abs{\quant(x,\mathcal{Q})-x} \leq u-l$.
    \item [(iii)] $\mathsf{Var}\bbb{ \quant(x,\mathcal{Q}) |x} \leq \frac{\bb{u-l}^{2}}{4}$.
\end{itemize}
\end{lemma}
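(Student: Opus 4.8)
The statement to prove is Lemma~\ref{lemma:stochastic_quantization_bound}, which records the three basic properties of the stochastic rounding operator: unbiasedness, a bound on the absolute error, and a bound on the variance.

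\textbf{Approach.} The plan is a direct computation using the two-point distribution of $\quant(x,\mathcal{Q})$. Write $p = p(x) = (x-\ell)/(u-\ell) \in [0,1]$, so that $\quant(x,\mathcal{Q})$ equals $\ell$ with probability $1-p$ and $u$ with probability $p$.

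\textbf{Part (i).} Compute $\mathbb{E}[\quant(x,\mathcal{Q})\mid x] = (1-p)\ell + p u = \ell + p(u-\ell) = \ell + (x-\ell) = x$, using the definition of $p$.

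\textbf{Part (ii).} Since $\quant(x,\mathcal{Q})$ takes values in $\{\ell, u\}$ and $x \in [\ell, u]$, the error $\quant(x,\mathcal{Q}) - x$ lies in $[\ell - x, u - x] \subseteq [-(u-\ell), u-\ell]$, so $|\quant(x,\mathcal{Q}) - x| \le u - \ell$.

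\textbf{Part (iii).} For a random variable supported on two points $\{\ell, u\}$ with weights $1-p$ and $p$, the variance is $(u-\ell)^2 p(1-p)$. Since $p(1-p) \le 1/4$ for all $p \in [0,1]$ (the maximum of the quadratic $t \mapsto t(1-t)$ on $[0,1]$), we get $\mathsf{Var}[\quant(x,\mathcal{Q})\mid x] \le (u-\ell)^2/4$. Alternatively, one can use $\mathsf{Var} \le \mathbb{E}[(\quant(x,\mathcal{Q}) - x)^2] = (1-p)(x-\ell)^2 + p(u-x)^2$ together with part (ii), but the exact two-point variance formula is cleanest.

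\textbf{Main obstacle.} There is essentially no obstacle — the lemma is elementary and all three parts are one-line computations. The only point requiring a (trivial) justification is the inequality $p(1-p) \le 1/4$ in part (iii), which follows from $p(1-p) = 1/4 - (p - 1/2)^2$. I would present the proof in three short labeled bullet-free paragraphs mirroring the three claims.
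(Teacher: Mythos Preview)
Your proposal is correct and mirrors the paper's own proof essentially line for line: part (i) by direct expectation of the two-point distribution, part (ii) by noting both outcomes lie in $[\ell,u]$ so the error is at most $u-\ell$, and part (iii) via the Bernoulli variance $p(1-p)(u-\ell)^2$ together with $t(1-t)\le 1/4$. There is nothing to add or change.
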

\begin{proof}
Throughout the proof, we condition on the fixed $x$ and treat all
randomness as coming from the independent choices made by the quantizer.

(i)\textit{ Unbiasedness.} We have
\[
    \E[\quant(x,\delta)\mid x]
    \;=\; p_i(x)u+(1-p_i(x))\ell
    \;=\; x.
\]

(ii)\textit{ Boundedness.} By definition, after rounding, we always round any $x \in [u,l]$ to either $u$ or $l$. Therefore, $\abs{\quant(x,\mathcal{Q})  -x} \leq u-l$.

(iii)\textit{Variance bound.} Using the variance of a Bernoulli random variable, we have,
\[
    \Var[\quant(x,\mathcal{Q})\mid x]
    \;=\; p_i(x)(1-p_i(x))\,(u-l)^{2}
    \;\le\; \frac{1}{4}(u-l)^2
\]
since $t(1-t)\le 1/4$ for all reals $t$.
\end{proof}

\begin{lemma}[\label{lemma:choice_of_learning_rate}Choice of learning rate] Let $\eta := \frac{\alpha \log\bb{n}}{b\bb{\lambda_{1}-\lambda_{2}}}$. Then, under Assumption~\ref{assumption:data_distribution}, for $\theta \in \bb{0,1}$, $\eta$ satisfies
\bas{
b(\eta^2\M^2+\kappa^{2}) \leq \frac{0.008}{\log(d/\theta)}, \text{ and } \eta \in \bb{0,1}
}
for $\alpha > 1$, $b \geq 250\alpha^{2}\log^{2}(n)\log\bb{\frac{d}{\theta}}/\bb{\lambda_{1}-\lambda_{2}}^{2}, \text{ and } \kappa^{2}b \leq 0.004/\log\bb{\frac{d}{\theta}}$.
\end{lemma}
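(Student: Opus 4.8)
The plan is to substitute the given formula $\eta = \frac{\alpha\log n}{b(\lambda_1-\lambda_2)}$ into the target inequality $b(\eta^2\mathcal{M}^2 + \kappa^2) \leq \frac{0.008}{\log(d/\theta)}$ and verify it term by term. First I would bound the $\kappa^2$ contribution: since we are given $\kappa^2 b \leq 0.004/\log(d/\theta)$ as a hypothesis, the term $b\kappa^2$ contributes at most half of the allowed budget, namely $0.004/\log(d/\theta)$. It then suffices to show $b\eta^2\mathcal{M}^2 \leq 0.004/\log(d/\theta)$ as well.

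For the $\eta$-term, I would plug in $\eta^2 = \frac{\alpha^2\log^2 n}{b^2(\lambda_1-\lambda_2)^2}$, so that $b\eta^2\mathcal{M}^2 = \frac{\alpha^2\log^2(n)\,\mathcal{M}^2}{b(\lambda_1-\lambda_2)^2}$. Here I would use the fact that under Assumption~\ref{assumption:data_distribution} the data lies in the unit ball, so $\|\mathbf{X}\mathbf{X}^\top - \bSig\| \leq \mathcal{M}$ with $\mathcal{M} \leq 2$ (indeed $\|\mathbf{X}\mathbf{X}^\top\| \leq 1$ and $\|\bSig\|\leq 1$), hence $\mathcal{M}^2 \leq 4$. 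Then using the lower bound $b \geq 250\alpha^2\log^2(n)\log(d/\theta)/(\lambda_1-\lambda_2)^2$, we get
\bas{
b\eta^2\mathcal{M}^2 \leq \frac{4\alpha^2\log^2(n)}{b(\lambda_1-\lambda_2)^2} \leq \frac{4\alpha^2\log^2(n)(\lambda_1-\lambda_2)^2}{250\alpha^2\log^2(n)\log(d/\theta)(\lambda_1-\lambda_2)^2} = \frac{4}{250\log(d/\theta)} = \frac{0.016}{\log(d/\theta)}.
}
Adding the two contributions gives $b(\eta^2\mathcal{M}^2 + \kappa^2) \leq \frac{0.016 + 0.004}{\log(d/\theta)} = \frac{0.02}{\log(d/\theta)}$, which is slightly weaker than claimed; I would tighten the constant in the lower bound on $b$ (or sharpen $\mathcal{M}\le 2$ handling) to land exactly at $0.008$ — this is just bookkeeping on absolute constants.

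Finally, for $\eta \in (0,1)$: positivity is immediate since $\alpha, \log n, b, \lambda_1-\lambda_2 > 0$. For $\eta < 1$, I would again use the lower bound on $b$: since $b \geq 250\alpha^2\log^2(n)\log(d/\theta)/(\lambda_1-\lambda_2)^2 \geq \alpha\log(n)/(\lambda_1-\lambda_2)$ (using $\alpha > 1$, $\log n \geq 1$, $\log(d/\theta) \geq 1$, and $\lambda_1 - \lambda_2 \leq 1$ since eigenvalues of $\bSig$ are at most $1$), we obtain $\eta = \frac{\alpha\log n}{b(\lambda_1-\lambda_2)} \leq 1$. The main (very minor) obstacle is simply tracking the absolute constants carefully so that the stated $0.008$ and $0.004$ thresholds are met; there is no conceptual difficulty, as this lemma is a routine parameter-calibration step feeding into Theorem~\ref{thm:standard_quantised_oja_convergence}.
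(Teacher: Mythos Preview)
Your approach is essentially the paper's: split $b(\eta^2\M^2+\kappa^2)$ into the $\kappa$-term (handled by hypothesis) and the $\eta$-term (handled by substituting $\eta$ and invoking the lower bound on $b$), then compare the two implicit lower bounds on $b$ to get $\eta\le 1$. The only loose end is your constant: you use $\M\le 2$ via the triangle inequality, landing at $0.016$ instead of $0.004$ for the $\eta$-term. Under Assumption~\ref{assumption:data_distribution} both $\bX\bX^\top$ and $\bSig$ are PSD with operator norm at most $1$, so in fact $\normop{\bX\bX^\top-\bSig}\le\max(\normop{\bX\bX^\top},\normop{\bSig})\le 1$, i.e.\ $\M\le 1$; with $\M^2\le 1$ your computation gives exactly $b\eta^2\M^2\le 1/(250\log(d/\theta))=0.004/\log(d/\theta)$, and the sum is $0.008/\log(d/\theta)$ as stated. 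This is the ``sharpen $\M\le 2$ handling'' you alluded to, and it closes the gap without touching the hypothesis on $b$.
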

\begin{proof}
    For Lemma~\ref{lemma:Zn_tr_vp}, we require,
    \ba{
        4b(\eta^2\M^2+\kappa^2)\bb{1+2\log\bb{d}} \leq 1 \label{eq:requirement_1}
    }
    For Theorem~\ref{thm:standard_quantised_oja_convergence_appendix}, we require,
    \ba{
        4e^{2}b(\eta^2\M^2+\kappa^2)\log\bb{\frac{d}{\theta}} \leq \frac{1}{4} \label{eq:requirement_2}
    }
    where $\theta \in \bb{0,1}$ represents the failure probability. It is not hard to see that \eqref{eq:requirement_2} implies \eqref{eq:requirement_1}. Therefore it suffices to ensure
    \bas{
        b(\eta^2\M^2+\kappa^2)\log\bb{\frac{d}{\theta}} \leq 0.008
    }
    Setting each term smaller than $0.004$, it suffices to have
    \bas{
        b \geq \frac{250\alpha^{2}\log^{2}(n)\log\bb{\frac{d}{\theta}}}{\bb{\lambda_{1}-\lambda_{2}}^{2}}, \;\; \kappa^{2}b \leq \frac{0.004}{\log\bb{\frac{d}{\theta}}}
    }
    which completes the proof for the first condition.

    The second condition on $\eta$ follows by setting $\eta \leq 1$ and solving for $b$. This yields
    \bas{
        b \geq \max\left\{250\alpha^{2}\log^{2}(n)\log\bb{\frac{d}{\theta}}/\bb{\lambda_{1}-\lambda_{2}}^{2}, \alpha\log\bb{n}/\bb{\lambda_{1}-\lambda_{2}}\right\}
    }
    Since $\alpha > 1$, the first term is larger than the second one, which completes the proof.
\end{proof}

\begin{lemma}\label{lem:unit_quantization}
Let $\vw$ and $\vxi$ be vectors in $\R^d$ such that $\norm{\vw} = 1$ and $\vw + \vxi \neq 0$. Then,
\bas{
\sin^2(\vw, \vw + \vxi) \le \bb{\frac{\norm{\vxi}}{\norm{\vw + \vxi}}}^2.
}
\end{lemma}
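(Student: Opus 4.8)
The plan is to unwind the definition of the sine-squared error and reduce the statement to a one-line algebraic cancellation. Recall from Section~\ref{sec:setup} that for non-zero vectors $\va,\vb$ one has $\sin^2(\va,\vb) = 1 - (\va^\top\vb)^2/(\norm{\va}^2\norm{\vb}^2)$. I would apply this with $\va = \vw$ and $\vb = \vw+\vxi$; the hypothesis $\vw+\vxi\neq\mathbf{0}$ guarantees the quotient is well defined, and $\norm{\vw} = 1$ removes one factor from the denominator, so
\[
\sin^2(\vw,\vw+\vxi) = 1 - \frac{\bb{\vw^\top(\vw+\vxi)}^2}{\norm{\vw+\vxi}^2} = \frac{\norm{\vw+\vxi}^2 - \bb{\vw^\top(\vw+\vxi)}^2}{\norm{\vw+\vxi}^2}.
\]

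Next I would simplify the numerator using $\norm{\vw} = 1$: since $\vw^\top(\vw+\vxi) = 1 + \vw^\top\vxi$ and $\norm{\vw+\vxi}^2 = 1 + 2\,\vw^\top\vxi + \norm{\vxi}^2$, the constant term and the linear cross term cancel, leaving $\norm{\vw+\vxi}^2 - \bb{\vw^\top(\vw+\vxi)}^2 = \norm{\vxi}^2 - (\vw^\top\vxi)^2$. Substituting back gives the exact identity $\sin^2(\vw,\vw+\vxi) = \bb{\norm{\vxi}^2 - (\vw^\top\vxi)^2}/\norm{\vw+\vxi}^2$, and discarding the nonnegative term $(\vw^\top\vxi)^2$ from the numerator yields the claimed bound $\sin^2(\vw,\vw+\vxi) \le \norm{\vxi}^2/\norm{\vw+\vxi}^2$.

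There is no real obstacle here: the whole argument is elementary algebra, and the only hypothesis that is genuinely used is $\vw+\vxi\neq\mathbf{0}$, needed so that $\sin^2(\vw,\vw+\vxi)$ is defined and the division is legitimate (together with $\norm{\vw}=1$ to make the cancellation clean). A geometric reading is also available — $\sin$ of the angle between $\vw$ and $\vw+\vxi$ is the norm of the orthogonal projection of $\vw$ onto the complement of $\linspan(\vw+\vxi)$, which coincides with the projection of $-\vxi$ since $\vw = (\vw+\vxi) - \vxi$, giving at once the weaker estimate $\sin^2(\vw,\vw+\vxi) \le \norm{\vxi}^2$ — but the direct computation above is cleaner and additionally recovers the sharper $\norm{\vw+\vxi}^{-2}$ factor, so that is the route I would take.
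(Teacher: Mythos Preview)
Your proposal is correct and follows essentially the same route as the paper's own proof: expand the definition of $\sin^2$, use $\norm{\vw}=1$ to simplify the numerator to $\norm{\vxi}^2 - (\vw^\top\vxi)^2$, and drop the nonnegative term $(\vw^\top\vxi)^2$. The paper's argument is line-for-line the same identity and inequality.
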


\begin{proof}
\leavevmode
\bas{
\sin^2(\vw, \vw+\vxi) &= 1 - \bb{\frac{\vw^{\top} (\vw+\vxi)}{\| \vw+\vxi \| }}^2 = \frac{(\vw+\vxi)^{\top}(\vw+\vxi) - (1+\vw^\top \vxi)^2}{\| \vw+\vxi \|^2} \\
&= \frac{\vxi^{\top} \vxi - (\vw^{\top} \vxi)^2}{\| \vw+\vxi \|^2} \le \bb{\frac{\norm{\vxi}}{\norm{\vw + \vxi}}}^2.
}
\end{proof}

\begin{lemma}\label{lem:sin_vs_l2}
Let $\vx$ and $\vy$ be unit vectors in $\R^d$. Then,
\bas{
\frac{1}{2} \min(\norm{\vx-\vy}^2, \norm{\vx+\vy}^2) \le \sin^2(\vx, \vy) \le \min(\norm{\vx-\vy}^2, \norm{\vx+\vy}^2).
}
\end{lemma}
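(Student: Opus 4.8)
The plan is to reduce everything to a one-variable inequality by introducing the scalar $c \defeq \vx^{\top}\vy$. Since $\vx$ and $\vy$ are unit vectors, $c \in [-1,1]$, and we have the three identities
\bas{
\sin^2(\vx,\vy) = 1 - c^2, \qquad \norm{\vx-\vy}^2 = 2 - 2c, \qquad \norm{\vx+\vy}^2 = 2 + 2c.
}
Hence $\min(\norm{\vx-\vy}^2,\norm{\vx+\vy}^2) = 2\min(1-c,\,1+c) = 2(1-\abs{c})$, and the claimed double inequality becomes the purely scalar statement
\bas{
(1-\abs{c}) \;\le\; 1-c^2 \;\le\; 2(1-\abs{c}),
}
valid for all $c \in [-1,1]$.

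Next I would verify this scalar inequality by factoring $1 - c^2 = (1-\abs{c})(1+\abs{c})$ and noting that $1 - \abs{c} \ge 0$ while $1 \le 1 + \abs{c} \le 2$. Multiplying the bounds $1 \le 1+\abs{c} \le 2$ by the nonnegative quantity $1-\abs{c}$ gives exactly $(1-\abs{c}) \le (1-\abs{c})(1+\abs{c}) \le 2(1-\abs{c})$, which is the upper bound of the lemma and twice its lower bound, respectively. Substituting back the identities above finishes the proof.

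There is no real obstacle here; the only points requiring a moment of care are (i) correctly evaluating the minimum of the two squared distances, which depends on the sign of $c$ and collapses neatly to $2(1-\abs{c})$, and (ii) ensuring the factor being multiplied through the inequality $1 \le 1+\abs{c}\le 2$ is nonnegative, which holds since $\abs{c}\le 1$. Both are immediate consequences of $\vx,\vy$ being unit vectors.
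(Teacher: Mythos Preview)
Your proposal is correct and essentially identical to the paper's proof: the paper likewise uses $\norm{\vx\pm\vy}^2 = 2\pm 2\cos(\vx,\vy)$ and the factorization $\sin^2(\vx,\vy) = \tfrac{1}{4}\norm{\vx-\vy}^2\norm{\vx+\vy}^2$, then bounds the larger factor between $2$ and $4$. Your scalar reduction via $c=\vx^\top\vy$ and the factoring $1-c^2=(1-\abs{c})(1+\abs{c})$ is the same argument written in slightly different notation.
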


\begin{proof}
We express $\sin^2(\vx, \vy)$ in terms of $\norm{\vx-\vy}$ and $\norm{\vx+\vy}$. Since $\norm{\vx-\vy}^2 = \norm{\vx}^2 + \norm{\vy}^2 - 2\vx^{\top}\vy = 2 - 2\cos (\vx, \vy)$ and $\norm{\vx+\vy}^2 = 2 + 2\cos (\vx, \vy),$
\bas{
\norm{\vx-\vy}^2 + \norm{\vx+\vy}^2 = 4 \text{ and } \sin^2(\vx, \vy) = 1 - \cos^2(\vx, \vy) = \frac{1}{4} \norm{\vx-\vy}^2 \norm{\vx+\vy}^2. 
}
The upper bound on $\sin^2(\vx, \vy)$ follows immediately from the above equations. For the lower bound, note that at least one of $\norm{\vx-\vy}^2$ and $\norm{\vx+\vy}^2$ is at least $2$ because their sum is equal to $4$. If $\norm{\vx+\vy}^2 \ge 2$, then $\sin^2(\vx, \vy) \ge \norm{\vx-\vy}^2 / 2$. Otherwise, $\sin^2(\vx, \vy) \ge \norm{\vx+\vy}^2 / 2$.
\end{proof}

\begin{lemma}\label{lem:sin_triangle}
Let $\vx, \vy, $ and $\vz$ be non-zero vectors in $\R^d$. Then, 
\bas{
\sin^2(\vx, \vz) \le 2 \sin^2(\vx, \vy) + 2 \sin^2(\vy, \vz). 
}
\end{lemma}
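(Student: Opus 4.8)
The plan is to first reduce to the case where $\vx, \vy, \vz$ are all unit vectors; this is harmless since $\sin^2(\cdot,\cdot)$ depends only on the directions of its arguments. For a unit vector $\vx$, write $\mproj_{\vx} := \id - \vx\vx^{\top}$ for the orthogonal projection onto the hyperplane orthogonal to $\vx$. Then for unit vectors $\vy,\vz$ one has $\|\mproj_{\vx}\vz\|^2 = 1 - (\vx^{\top}\vz)^2 = \sin^2(\vx,\vz)$, and, setting $c := \vy^{\top}\vz$, also $\|\vz - c\vy\|^2 = 1 - (\vy^{\top}\vz)^2 = \sin^2(\vy,\vz)$ (this is just the Pythagorean decomposition of $\vz$ along and orthogonal to $\vy$).

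Next I would decompose $\vz = (\vz - c\vy) + c\vy$ with $c = \vy^{\top}\vz$, apply $\mproj_{\vx}$, and use the triangle inequality together with the fact that $\mproj_{\vx}$ is a contraction:
\[
\sin(\vx,\vz) = \|\mproj_{\vx}\vz\| \le \|\mproj_{\vx}(\vz - c\vy)\| + |c|\,\|\mproj_{\vx}\vy\| \le \|\vz - c\vy\| + |c|\,\sin(\vx,\vy).
\]
By the identities above, $\|\vz - c\vy\| = \sin(\vy,\vz)$, and $|c| = |\vy^{\top}\vz| \le 1$, so this yields the sine triangle inequality $\sin(\vx,\vz) \le \sin(\vx,\vy) + \sin(\vy,\vz)$. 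Squaring and applying $(a+b)^2 \le 2a^2 + 2b^2$ gives $\sin^2(\vx,\vz) \le 2\sin^2(\vx,\vy) + 2\sin^2(\vy,\vz)$, as claimed.

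There is essentially no real obstacle here; the only point requiring care is checking that the single choice $c = \vy^{\top}\vz$ simultaneously makes $\|\vz - c\vy\|$ equal to $\sin(\vy,\vz)$ and keeps $|c|\le 1$, which is exactly what produces the clean constant. An alternative approach would invoke that $\arccos|\vx^{\top}\vy|$ is a metric on directions and then convert angles to $\sin^2$, but that needs its own triangle-inequality proof plus casework (e.g.\ separating $\angle(\vx,\vy)+\angle(\vy,\vz) \ge \pi/2$ from the complementary case), so the projection argument above is the more economical route.
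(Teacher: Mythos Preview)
Your proof is correct and follows a genuinely different route from the paper's. You establish the stronger intermediate statement $\sin(\vx,\vz)\le\sin(\vx,\vy)+\sin(\vy,\vz)$ by a projection argument: write $\sin(\vx,\vz)=\|\mproj_{\vx}\vz\|$ with $\mproj_{\vx}=\id-\vx\vx^{\top}$, decompose $\vz$ along and orthogonal to $\vy$, and use that $\mproj_{\vx}$ is a contraction; squaring then gives the claim. The paper instead observes that for unit vectors $\vu,\vv$ one has $\normf{\vu\vu^{\top}-\vv\vv^{\top}}^{2}=2\sin^{2}(\vu,\vv)$, so the desired inequality is exactly the parallelogram bound $\normf{\ma-\mb}^{2}\le 2\normf{\ma}^{2}+2\normf{\mb}^{2}$ applied to differences of rank-one projectors. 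Your approach buys the sharper sine triangle inequality as a byproduct, which can be independently useful; the paper's approach is essentially a one-liner once the Frobenius identity is in hand and makes explicit the connection to the chordal metric on lines through the origin. Both arguments are short and avoid casework.
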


\begin{proof}
For unit vectors $\vu$ and $\vv$ in $\R^d$,
\bas{
\norm{\vu\vu^{\top}-\vv\vv^{\top}}_F^2 &= \Tr\bb{(\vu\vu^{\top}-\vv\vv^{\top})^2} \\
&= \Tr\bb{\vu\vu^{\top} - (\vu^{\top}\vv)\vu\vv^{\top} - (\vv^{\top}\vu)\vv\vu^{\top} + \vv \vv^{\top}} \\
&= 2 - 2(\vu^{\top}\vv)^2 = 2 \sin^2(\vu, \vv).
}
By parallelogram law,
\bas{
\frac{1}{2} \norm{\vx\vx^{\top}-\vz\vz^{\top}}_F^2 &\le \norm{\vx\vx^{\top}-\vy\vy^{\top}}_F^2 + \norm{\vy\vy^{\top}-\vz\vz^{\top}}_F^2 \\
\implies \sin^2(\vx, \vz) &\le 2\sin^2(\vx, \vy) + 2\sin^2(\vy, \vz).
}
\end{proof}
\section{Lower Bounds}
\label{appendix:lower_bound}

\textbf{Proof of Lemma~\ref{lem:linear_lb}}
\begin{proof}

Let $\vv_1 \in \R^d$ be the unit vector with $\vv_1(i)=\delta/3$ for $i \in [d-1]$ and $\vv_1(d)=\sqrt{1- \frac{(d-1) \delta^2}{9}}$. 

Consider any a vector $\vw \in \mathcal{V}_L$, and let $\tilde{\vw}=\vw/\|\vw\|$. Since $\vw\in \mathcal{V}_L$, $\vw(i)=0$ or $|\vw(i)|\geq \delta/2$. In particular, $\Abs{\vv_1(i) - \vw(i)} \ge \delta/6$ and $\Abs{\vv_1(i) + \vw(i)} \ge \delta/6$ for all $i \in [d-1]$. It follows that
\bas{
\norm{\vv_1 - \vw}^2 \ge \sum_{i=1}^{d-1} (\vv_1(i)-\vw(i))^2 \ge (d-1)\bb{\frac{\delta}{6}}^2 = \frac{\delta^2 (d-1)}{36}
}
and $\norm{\vv_1 + \vw}^2 \ge \frac{\delta^2 (d-1)}{36}$ similarly. The Lemma follows from~\ref{lem:sin_vs_l2}.

\end{proof}

\textbf{Proof of Lemma~\ref{lem:nonlinear_lb}}

\begin{proof}
It suffices to construct two unit vectors $\vv_1$ and $\vv_2$ such that $\inf_{\vw \in \mathcal{V}_{NL}} \sin^2(\vw, \vv_1) = \Omega(\zeta^2)$ and $\inf_{\vw \in \mathcal{V}_{NL}} \sin^2(\vw, \vv_2) = \Omega(\delta_0^2 d)$.

Let $\vv_1$ be the vector in $\R^d$ with coordinates
\bas{
\vv_1(1) = \frac{1}{\sqrt{1+(1+\zeta/2)^2}}, \;\; \vv_1(2) = \frac{1+\zeta/2}{\sqrt{1+(1+\zeta/2)^2}}, \;\; \vv_1(i) = 0 \;\forall\; i \ge 3.
}
For the sake of contradiction, suppose there exists $\vw_{1} \in \mathcal{V}_{NL}$ such that $\sin^2(\vw_{1}, \vv_1) \le \zeta^2/100$. 

Let $\tilde{\vw}_{1} \defeq \vw_{1}/\|\vw_{1}\|$. By Lemma~\ref{lem:sin_vs_l2},
\bas{
\min(\norm{\vv_1 - \tilde{\vw}_{1}}_2^2, \norm{\vv_1 + \tilde{\vw}_{1}}_2^2) \le 2\sin^2(\vv_1, \vw_{1}) \le \frac{\zeta^2}{50}.
}
Flipping the sign of $\vw_{1}$ if necessary, we may assume  $\norm{\vv_1 - \tilde{\vw}_{1}}_2^2 \le \zeta^2/50$. So,
\ba{\label{eq:nl_low}
\Abs{\vv_1(i)-\tilde{\vw}_{1}(i)} \le \zeta/7 \,\,\forall\,\, i \in [d].
}
The bound $\zeta \le 0.1$ ensures $\vv_1(1) \ge 20/29$ and $\vv_1(2) - \vv_1(1) \ge \zeta/3$, which also implies $\tilde{\vw}_1(2) - \tilde{\vw}_1(1) \ge \zeta/3 - 2\zeta/7 = \zeta/21 > 0$.
It follows that
\bas{
\frac{{\vw_{1}}(2) + \delta_0/\zeta}{{\vw_{1}}(1) + \delta_0/\zeta} &= \frac{\tilde{\vw}_{1}(2) + \delta_0/\zeta \cdot 1/\norm{\vw_{1}}}{\tilde{\vw}_{1}(1) + \delta_0/\zeta \cdot 1/\norm{\vw_{1}}} \le \frac{\vv_1(2) + \zeta/7 + \delta_0/2\zeta}{\vv_1(1) - \zeta/7 + \delta_0/2\zeta} \\
&= 1 + \frac{\zeta}{2} + \frac{\delta_0/2\zeta + \zeta/7 - (1+\zeta/2)\bb{\delta_0/2\zeta - \zeta/7}}{\vv_1(1) + \delta_0/2\zeta - \zeta/7} \\
&= 1 + \frac{\zeta}{2} + \frac{2\zeta/7 + \zeta^2/14 - \delta_0/4}{\vv_1(1) - \zeta/7 + \delta_0/2\zeta} \\
&\le 1 + \frac{\zeta}{2} + \frac{2\zeta/7}{2/3} < 1 + \zeta,
}
and
\bas{
\frac{{\vw_{1}}(2) + \delta_0/\zeta}{{\vw_{1}}(1) + \delta_0/\zeta} &= \frac{\tilde{\vw}_{1}(2) + \delta_0/\zeta \cdot 1/\norm{\vw_{1}}}{\tilde{\vw}_{1}(1) + \delta_0/\zeta \cdot 1/\norm{\vw_{1}}} \ge \frac{\vv_1(2)  - \zeta/7 + 2\delta_0/\zeta}{\vv_1(1) + \zeta/7 + 2\delta_0/\zeta} \\
&= 1 + \frac{\zeta}{2} + \frac{2\delta_0/\zeta - \zeta/7 - (1+\zeta/2)\bb{2\delta_0/\zeta + \zeta/7}}{\vv_1(1)+ \zeta/7  + 2\delta_0/\zeta} \\
&= 1 + \frac{\zeta}{2} - \frac{2\zeta/7 + \zeta^2/14 + \delta_0}{\vv_1(1) + \zeta/7 + 2\delta_0/\zeta } \\
&> 1 + \frac{\zeta}{2} - \frac{\zeta \vv_1(1)/2 + \zeta^2/14 + \delta_0}{\vv_1(1) + \zeta/7 + 2\delta_0/\zeta} = 1.
}
Under the logarithmic quantization scheme, it can be inductively shown that
\[q_k + {\delta_0}/{\zeta} = ({\delta_0}/{\zeta}) \cdot (1+\zeta)^k\] for all non-negative integers $k$ such that $q_k \in \qnl$. In particular, $\frac{{\vw_{1}}(2) + \delta_0/\zeta}{{\vw_{1}}(1) + \delta_0/\zeta}$ must be an integral power of $1+\zeta$, contradicting
\bas{
1 < \frac{{\vw_{1}}(2) + \delta_0/\zeta}{{\vw_{1}}(1) + \delta_0/\zeta} < 1 + \zeta.
}
Therefore, $\inf_{\vw_{1} \in \mathcal{V}_{NL}} \sin^2(\vw_{1}, \vv_1) \ge \zeta^2/100$. 

The other bound is similar to the linear case. let $\vv_2$ be the vector with coordinates
\bas{
\vv_2(d) = \sqrt{1-(d-1) \delta_0^2/9}, \;\; \vv_1(i) = \frac{\delta_0}{3} \;\forall\; i \le d-1.
}
Any $\vw_{2} \in \mathcal{V}_{NL}$ satisfies $\vw_{2}(i) = 0$ or $\Abs{\vw_{2}(i)} \ge \delta_0$ for all $i \in [d]$. Since $\| \vw_{2} \| \in [1/2,2]$, the normalized vector $\tilde{\vw}_{2} = \vw_{2}/\|\vw_{2} \|$ satisfies $\Abs{\tilde{\vw}_2(i)} = 0$ or $\Abs{\tilde{\vw}_2(i)} \ge \delta_0/2$ for all $i \in [d]$.

In particular $\Abs{\vv_2(i) - \tilde{\vw}_2(i)} \ge \delta_0/6$ and $\Abs{\vv_2(i) + \tilde{\vw}_2(i)} \ge \delta_0/6$ for all $i \in [d]$. By Lemma~\ref{lem:sin_vs_l2},
\bas{
\sin^2(\vw_2, \vv_2) \ge \frac{1}{2} \min\bb{\norm{\vw_2-\vv_2}^2, \norm{\vw_2+\vv_2}^2} \ge \frac{\delta_0^2 (d-1)}{72}.
}

\end{proof}
\section{ Proof of Results in Section~\ref{sec:prooftech} }
\label{appendix:standard_oja_general_noise}

For ease of exposition, all results in this section are stated with a generic number of data $n$. We apply these results with different choices of $n$ (e.g. number of batches $b$) for proving the main theorems (Theorem~\ref{thm:standard_quantised_oja_convergence},~\ref{thm:batch},~\ref{thm:nbatch}). Consider Oja's Algorithm applied to the matrices $\bA_i\in \mathbb{R}_{d\times d}$, such that $\bA_i=\eta \bD_i+\vXi_i$ where $\bD_i$ are independent with $\E[\bD_i]=\bSig$. Let $\mathcal{S}_i$ be the set of all random vectors $\vxi$ resulting from the quantizations in the first $i$ iterations of the algorithm, and let $\Fi$ denote the $\sigma$-field generated by $\bD_1,\dots, \bD_i$ and $\mathcal{S}_{i-1}$, and denote $\E_{i}[.]:=\E[.|\Fi].$
We assume the noise term $\vXi_i$ is conditionally unbiased, i.e., $\E_i[\vXi_i]=\mathbf{0}_{d\times d}$.
\bas{
    \mathcal{F}_{i-} := \sigma\bb{\{\bD_1,\dots \bD_{i}, \mathcal{S}_{i-1}\}}, \qquad
    \mathcal{F}_i := \sigma\bb{\{\bD_1,\dots \bD_{i}, \mathcal{S}_{i}\}}.
}

Recall the update rule
\ba{
\bu_i=(\id+\bA_i)\vw_{i-1};\qquad \vw_i=\frac{\vu_i}{\norm{\vu_i}} = \frac{\prod_{t=i}^1 (\id+\bA_t)\bu_0}{\|\prod_{t=i}^1 (\id+\bA_t)\bu_0\|}. \label{eq:w_t_recursion_unrolling}
}
We bound the numerator and denominator in~\eqref{eq:w_t_recursion_unrolling} separately. 

For the numerator, we will show that $\|\prod_{t=n}^1 (\id+\bA_t)-(\id+\eta\bSig)^n\|$ is small. Let $\bY_i= \id +\bA_i$ for $i \in [n]$, and let $\{\bZ_i\}_{0 \le i \le n}$ be defined as 
 \ba{
 \bZ_i := \bY_i \bZ_{i-1}, \qquad \bZ_0 := \id. \label{def:Z_i}
 }
Note that $\bZ_{i-1}$ is measurable w.r.t $\Fi$.


We are now ready to state our first result. Note that
\bas{
\bZ_n = \prod_{i=n}^1 (\id+\bA_i).
}
where $\bA_i=\eta\bD_i+\vXi_i$ and $\bD_i$ are independent $d\times d$ random matrices with mean $\bSig$.

\subsection{Proof of Proposition~\ref{prop:Z_n_pq_norm_bound_main}}

\begin{proposition}\label{lemma:Z_n_pq_norm_bound}[Proposition~\ref{prop:Z_n_pq_norm_bound_main} in main paper]Let the noise term $\vXi$, defined in~\eqref{eq:di}, be bounded as $\norm{\vXi} \leq \kappa$ almost surely. Under Assumption~\ref{assumption:data_distribution}, for $\eta \in  (0,1)$ and $b > 0$, we have
\bas{
\npq{\bZ_b}^2 &\leq \phi^b\exp(C_{p}b\gamma) \np{\bZ_0}^2 \\
    \npq{\bZ_b-(\id+\eta\bSig)^b}^2 &\leq \phi^b(\exp(C_{p}b\gamma)-1) \np{\bZ_0}^2,
}
where $\bZ_0 = \id$, $\phi := (1+\eta\lambda_1)^2$, $\gamma := 2(\eta^2\M^2+\kappa^2),$ and $C_{p} := p-1$.
\end{proposition}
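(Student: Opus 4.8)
### Proof Plan

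The plan is to prove both bounds of Proposition~\ref{prop:Z_n_pq_norm_bound_main} by establishing a one-step recursion for the $(p,q)$-norm (with $q=2$) and then iterating it. The key quantity to control at each step is $\npp{\bZ_i}^2 = \E[\norms{\bZ_i}_p^2]$ and $\npp{\bZ_i - (\id+\eta\bSig)^i}^2$. First I would recall that $\bZ_i = (\id + \eta\bD_i + \vXi_i)\bZ_{i-1}$ and, crucially, that $\bD_i$ is independent of $\bZ_{i-1}$ while $\vXi_i$ is mean-zero conditioned on $\Fi$ (so in particular $\E_i[\vXi_i \bZ_{i-1} (\cdot)] = \mzero$ after conditioning, since $\bZ_{i-1}$ is $\Fi$-measurable). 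Writing $\bA_i = \eta\bD_i + \vXi_i = \eta\bSig + (\eta(\bD_i - \bSig) + \vXi_i)$, I would split the ``noise'' part $\bE_i := \eta(\bD_i-\bSig) + \vXi_i$, which satisfies $\E_i[\bE_i] = \mzero$ and $\norms{\bE_i} \le \eta\M + \kappa$ (or more precisely $\E_i[\bE_i^\top \bE_i]$ and $\E_i[\bE_i \bE_i^\top]$ are bounded in operator norm by something like $\eta^2\M^2 + \kappa^2$, using $(a+b)^2 \le 2a^2 + 2b^2$, which is where the factor $2$ in $\gamma$ comes from).

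The core step is a conditional expectation computation: for a fixed matrix $\bM$ (playing the role of $\bZ_{i-1}$, which is $\Fi$-measurable), bound $\E_i[\norms{(\id + \eta\bSig + \bE_i)\bM}_p^2]$ in terms of $\norms{\bM}_p^2$. Expanding the Schatten-$p$ norm squared and using that the cross term involving $\bE_i$ vanishes in conditional expectation, one gets a bound of the form $\norms{(\id+\eta\bSig)\bM}_p^2 + (\text{variance contribution})$. The deterministic part is bounded by $\normsop{\id+\eta\bSig}^2 \norms{\bM}_p^2 = (1+\eta\lambda_1)^2 \norms{\bM}_p^2 = \phi\norms{\bM}_p^2$. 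For the variance contribution, I would invoke the matrix-product concentration machinery of \cite{huang2020matrix} — specifically their lemma controlling $\E[\norms{\cdot}_p^2]$ for products adapted to a filtration — which yields a bound of the form $C_p \gamma \cdot \phi \cdot \norms{\bM}_p^2$ with $C_p = p-1$ (the $p-1$ being the standard Schatten-$p$ ``noncommutative'' constant, analogous to the BDG-type constant). Combining, $\npp{\bZ_i}^2 \le \phi(1 + C_p\gamma)\npp{\bZ_{i-1}}^2$, and iterating $b$ times with $1 + C_p\gamma \le \exp(C_p\gamma)$ gives $\npp{\bZ_b}^2 \le \phi^b \exp(C_p b\gamma)\norms{\bZ_0}_p^2$, which is the first claim.

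For the second claim, I would set $\bW_i := \bZ_i - (\id+\eta\bSig)^i$ and derive the recursion $\bW_i = (\id+\eta\bSig)\bW_{i-1} + \bE_i \bZ_{i-1}$. Since $\bW_{i-1}$ is $\Fi$-measurable and $\E_i[\bE_i \bZ_{i-1}] = \mzero$, the same conditional-expectation expansion gives $\npp{\bW_i}^2 \le \phi\,\npp{\bW_{i-1}}^2 + C_p\gamma\,\phi\,\npp{\bZ_{i-1}}^2$ (the cross term vanishing, and the second term using the variance bound on $\bE_i\bZ_{i-1}$ together with the already-established bound on $\npp{\bZ_{i-1}}^2$). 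Substituting $\npp{\bZ_{i-1}}^2 \le \phi^{i-1}\exp(C_p(i-1)\gamma)\norms{\bZ_0}_p^2$ and unrolling the linear recursion with $\bW_0 = \mzero$ produces a geometric sum that telescopes to $\phi^b(\exp(C_p b\gamma) - 1)\norms{\bZ_0}_p^2$. The main obstacle I anticipate is getting the conditional variance expansion of the Schatten-$p$ norm exactly right — i.e., carefully justifying that the cross terms vanish and that the second-moment term is controlled by $C_p \gamma$ rather than some larger constant; this is precisely where one must lean on the technical lemmas of \cite{huang2020matrix} rather than re-deriving the noncommutative moment inequality from scratch. The rest is bookkeeping on geometric sums and the substitution $\gamma = 2(\eta^2\M^2 + \kappa^2)$.
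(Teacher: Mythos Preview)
Your proposal is correct and follows essentially the same route as the paper: both reduce the statement to the matrix-product concentration framework of \cite{huang2020matrix}, with the paper's proof simply verifying the parameters $m_i = 1+\eta\lambda_1$, $\sigma_i = (\eta\M+\kappa)/(1+\eta\lambda_1)$, and $\nu \le 2(\eta^2\M^2+\kappa^2)$ before citing Theorem~7.4 there directly, whereas you unpack the underlying uniform-smoothness recursion explicitly. One small slip to clean up: you write $\npp{\bZ_i}^2 = \E[\norms{\bZ_i}_p^2]$, but that quantity is $\vvvert\bZ_i\vvvert_{p,2}^2$, not the $(p,p)$-norm; also note that the proposition is stated for general $q$, not just $q=2$, though the same one-step inequality from \cite{huang2020matrix} handles this uniformly.
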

\begin{proof}
    Recall the notation $\bY_i \defeq \id + \bA_i$ for all $i$. Then,
    \bas{
    \E[\bY_i|\mathcal{F}_{i-1}] = \id+\eta\bSig+\E[\E_{i-}[\vXi_i]|\mathcal{F}_{i-1}]=\id+\eta\bSig
    }
    Note that $m_i=1+\eta\lambda_1$ and 
    \bas{
    \norm{\bY_i-\E[\bY_i|\mathcal{F}_{i-1}]}=\norm{\eta(\bD_i-\bSig)+\vXi_i}\leq \eta\M+\kappa
    }
    The last line uses Eq~\ref{eq:di}. Thus $\sigma_i=\frac{\eta\M+\kappa}{1+\eta\lambda_1}$. 
    Note that $\nu\leq 2(\eta^2\M^2+\kappa^2)$.
    The same argument as in Theorem 7.4 in~\cite{huang2020matrix} gives the bound. 
\end{proof}
\bk

\begin{lemma}\label{lemma:Z_n_tail_bound}
    Under Assumption~\ref{assumption:data_distribution}, and with $\eta$ set according to Lemma~\ref{lemma:choice_of_learning_rate} with $b=n$,
    \bas{
        \Prob\bb{\|\bZ_n-(\id+\eta\bSig)^n\|\geq t\bb{1+\eta\lambda_{1}}^{n}} \leq \max(d,e)\exp\bb{-\frac{t^2}{2e^2n\gamma}} \;\; \forall\,\, t \leq e.
    }
    where $\gamma := 2(\eta^2\M^2+\kappa^{2})$ and $e = \exp(1)$ is the Napier's constant.
\end{lemma}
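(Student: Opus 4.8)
The plan is to obtain the tail bound from the Schatten-norm moment estimate of Proposition~\ref{prop:Z_n_pq_norm_bound_main} via Markov's inequality, choosing the Schatten index \emph{and} the moment order as functions of the deviation level $t$. Write $\bDelta_n := \bZ_n - (\id+\eta\bSig)^n$ for brevity, and recall $\phi = (1+\eta\lambda_1)^2$, so $(1+\eta\lambda_1)^n = \phi^{n/2}$, while $\bZ_0 = \id$ gives $\np{\bZ_0}^2 = d^{2/p}$. Since $\eta \in (0,1)$ by the choice of learning rate (Lemma~\ref{lemma:choice_of_learning_rate}), Proposition~\ref{prop:Z_n_pq_norm_bound_main} applied with $b = n$, Schatten index $p$, and moment order $q = p$, together with $\normop{\cdot} \le \np{\cdot}$, gives
\[
\E\bbb{\normop{\bDelta_n}^p} \le \E\bbb{\np{\bDelta_n}^p} = \npp{\bDelta_n}^p = \bb{\npp{\bDelta_n}^2}^{p/2} \le \bb{\phi^n\bb{e^{(p-1)n\gamma}-1}\,d^{2/p}}^{p/2}.
\]
Combining this with Markov's inequality $\Prob(\normop{\bDelta_n}\ge s)\le \E[\normop{\bDelta_n}^p]/s^p$ at $s = t\,\phi^{n/2}$ yields, for every admissible $p\ge 2$,
\[
\Prob\bb{\normop{\bDelta_n} \ge t\,(1+\eta\lambda_1)^n} \le \bb{\frac{\bb{e^{(p-1)n\gamma}-1}\,d^{2/p}}{t^2}}^{p/2}.
\]

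It remains to pick $p$. If $t \le e\sqrt{2n\gamma}$ the assertion is vacuous: then $t^2/(2e^2 n\gamma) \le 1$, so the claimed right-hand side $\max(d,e)\,e^{-t^2/(2e^2 n\gamma)}$ is at least $e\cdot e^{-1} = 1$. For $e\sqrt{2n\gamma} < t \le e$, set $p := t^2/(e^2 n\gamma)$, so that $p > 2$ and $p\,n\gamma = t^2/e^2 \le 1$. The convexity bound $e^x - 1 \le e\,x$ on $[0,1]$ (applied at $x = p\,n\gamma \le 1$) gives $e^{(p-1)n\gamma} - 1 \le e^{p n\gamma} - 1 \le e\,p\,n\gamma = t^2/e$, and substituting into the display above,
\[
\Prob\bb{\normop{\bDelta_n} \ge t\,(1+\eta\lambda_1)^n} \le \bb{\frac{(t^2/e)\,d^{2/p}}{t^2}}^{p/2} = \bb{d^{2/p}/e}^{p/2} = d\,e^{-p/2} = d\,e^{-t^2/(2e^2 n\gamma)},
\]
which is at most $\max(d,e)\,e^{-t^2/(2e^2 n\gamma)}$, as claimed.

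The substantive step is the choice $p = t^2/(e^2 n\gamma)$: it makes $(d^{2/p})^{p/2}$ collapse to exactly $d$, it is compatible with the constraint $p \ge 2$ precisely because the trivial range $t \le e\sqrt{2n\gamma}$ is peeled off first, and the hypothesis $t \le e$ forces $p\,n\gamma \le 1$ — the condition that licenses the linearization of $e^{(p-1)n\gamma}-1$. The underlying matrix-product Schatten-norm moment inequality behind Proposition~\ref{prop:Z_n_pq_norm_bound_main} is available for all real $p \ge 2$; if one only had it for integer $p$, rounding $p$ up would cost merely an absolute constant in the exponent (and $n\gamma$ is tiny by Lemma~\ref{lemma:choice_of_learning_rate}, so the shift $p\,n\gamma \to \lceil p\rceil n\gamma$ is harmless). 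Beyond this, the argument is just Markov's inequality and monotonicity of Schatten norms.
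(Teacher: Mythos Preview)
Your proof is correct and follows essentially the same route as the paper: Markov's inequality applied to the $p$-th Schatten-norm moment from Proposition~\ref{prop:Z_n_pq_norm_bound_main} with $q=p$, the trivial case $t\le e\sqrt{2n\gamma}$ peeled off, then the choice $p=t^2/(e^2 n\gamma)$ and the linearization $e^x-1\le e\,x$ on $[0,1]$. The only difference is cosmetic --- you carry the $d^{2/p}$ factor explicitly and collapse it at the end, whereas the paper absorbs it into a single $d$ immediately; the arithmetic is identical.
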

\begin{proof}
By Proposition~\ref{lemma:Z_n_pq_norm_bound}, for any positive real $p$,
\bas{
\Prob\bb{\|\bZ_n-(\id+\eta\bSig)^n\|\geq t\bb{1+\eta\lambda_{1}}^{n}} &\leq \frac{\E \bbb{\|\bZ_n-(\id+\eta\bSig)^n\|^p}}{t^p\bb{1+\eta\lambda_{1}}^{p}} \leq \frac{\npp{\bZ_n-(\id+\eta\bSig)^n}^p}{t^p\bb{1+\eta\lambda_{1}}^{p}}\\
&\leq \frac{\phi^{\frac{p}{2}}(\exp(C_{p}n\gamma)-1)^{p/2}d}{t^p\bb{1+\eta\lambda_{1}}^{p}} \leq d \bb{t^{-2}(\exp(C_{p}n\gamma)-1)}^{p/2},
}
where $\phi = (1+\eta\lambda_1)^2$, $\gamma = 2(\eta^2\M^2+\kappa^2),$ and $C_{p} = p-1$.

If $\frac{t^{2}}{e^{2}n\gamma} < 2$, then $e \cdot  \exp\bb{-\frac{t^{2}}{2e^{2}n\gamma}} \geq 1$ and the Lemma holds trivially. Otherwise, let $p := \frac{t^{2}}{e^{2}n\gamma} \geq 2$. Since $t \leq e$, $C_{p}n\gamma \leq p n\gamma \leq \frac{t^{2}}{e^{2}} \leq 1$. Therefore, $\exp(C_{p}n\gamma)-1 \leq eC_{p}n\gamma \leq \frac{t^{2}}{e}$, which implies
\bas{
    \Prob\bb{\|\bZ_n-(\id+\eta\bSig)^n\|\geq t\bb{1+\eta\lambda_{1}}^{n}} \leq d \bb{t^{-2} \cdot \frac{t^2}{e}}^{p/2} = d\exp\bb{-\frac{t^2}{2e^2n\gamma}}.
}

\end{proof}

\begin{lemma}\label{eq:Z_n_expectation_bound} Under Assumption~\ref{assumption:data_distribution} and with $\eta$ set according to Lemma~\ref{lemma:choice_of_learning_rate} with $b=n$,
\bas{
    & \E\bbb{\norm{\bZ_{n}}^{2}} \leq \exp\bb{2\sqrt{2n\gamma\max\left\{2n\gamma, \log\bb{d}\right\}}}\bb{1+\eta\lambda_{1}}^{2n},
}
where $\gamma = 2(\eta^2 \M^2 + \kappa^2)$. Moreover, if $2n \gamma\bb{1 + 2\log\bb{d}} \leq 1$, then
\bas{
    & \E\bbb{\norm{\bZ_{n}-\E\bbb{\bZ_{n}}}^{2}} \leq 2e^{2}n\gamma\bb{1+2\log\bb{d}}\bb{1+\eta\lambda_{1}}^{2n}.
}
\end{lemma}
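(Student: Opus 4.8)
\textbf{First statement (bound on $\E\norm{\bZ_n}^2$).} The plan is to use Proposition~\ref{lemma:Z_n_pq_norm_bound} with $\bZ_0 = \id$, so $\np{\bZ_0} = d^{1/p}$, giving $\npq{\bZ_n}^2 \le \phi^n \exp(C_p n\gamma)\, d^{2/p}$ with $\phi = (1+\eta\lambda_1)^2$, $C_p = p-1$, $q=2$. Since $\E\norm{\bZ_n}^2 \le \npq{\bZ_n}^2 = \E[\norm{\bZ_n}_p^2]^{2/2} = \E[\norm{\bZ_n}_p^2]$ (using $\norm{\cdot} \le \norm{\cdot}_p$ for the operator norm vs.\ Schatten-$p$), it remains to choose $p$ to minimize $\exp((p-1)n\gamma)\cdot d^{2/p}$, i.e.\ to minimize $(p-1)n\gamma + \tfrac{2}{p}\log d$ over $p \ge 2$. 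Taking derivatives, the unconstrained optimum is near $p^\star = \sqrt{2\log d/(n\gamma)}$, which yields an exponent of order $\sqrt{2 n\gamma \log d}$; imposing $p \ge 2$ and bounding crudely gives the stated exponent $2\sqrt{2n\gamma\max\{2n\gamma,\log d\}}$. So the first step is to plug in $p = \max(2, \sqrt{2\log d/(n\gamma)})$ and verify, by a short case analysis on whether $2n\gamma \gtrless \log d$, that $(p-1)n\gamma + \tfrac{2}{p}\log d \le 2\sqrt{2n\gamma\max\{2n\gamma, \log d\}}$; the factor $2$ and the $\max$ are there precisely to absorb both cases and the $p\ge 2$ truncation. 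This is routine algebra.

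\textbf{Second statement (bound on $\E\norm{\bZ_n - \E\bZ_n}^2$).} Here I would first observe that $\E[\bZ_n] = (\id + \eta\bSig)^n$: since each $\bY_i = \id + \eta\bD_i + \vXi_i$ satisfies $\E[\bY_i \mid \mathcal{F}_{i-1}] = \id + \eta\bSig$ and $\bZ_{i-1}$ is $\mathcal{F}_{i-}$-measurable, a backward induction (tower property) gives $\E[\bZ_n] = (\id+\eta\bSig)^n$. Therefore $\E\norm{\bZ_n - \E\bZ_n}^2 = \E\norm{\bZ_n - (\id+\eta\bSig)^n}^2 \le \npq{\bZ_n - (\id+\eta\bSig)^n}^2$, and by the second bound of Proposition~\ref{lemma:Z_n_pq_norm_bound} this is at most $\phi^n(\exp(C_p n\gamma) - 1)\, d^{2/p}$. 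Now choose $p = 2 + 2\log d$, so $d^{2/p} = d^{1/(1+\log d)} \le e$ (since $d^{1/(1+\log d)} = \exp(\log d/(1+\log d)) \le e$), and $C_p = p - 1 = 1 + 2\log d$. The hypothesis $2n\gamma(1 + 2\log d) \le 1$ ensures $C_p n\gamma \le \tfrac12 \le 1$, so we may use $e^x - 1 \le ex$ for $x \in [0,1]$, giving $\exp(C_p n\gamma) - 1 \le e\, C_p n\gamma = e\, n\gamma(1 + 2\log d)$. Combining, $\E\norm{\bZ_n - \E\bZ_n}^2 \le (1+\eta\lambda_1)^{2n}\cdot e\cdot n\gamma(1+2\log d)\cdot e = e^2 n\gamma(1+2\log d)(1+\eta\lambda_1)^{2n}$; tracking the constants carefully yields the factor $2e^2$ as stated (the slack being in the bound $d^{2/p} \le e$ and in $e^x - 1 \le ex$).

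\textbf{Main obstacle.} Neither step is deep — the real content is packaged in Proposition~\ref{lemma:Z_n_pq_norm_bound}. The one place requiring care is the optimization over $p$ in the first statement: one must handle the constraint $p \ge 2$ and the two regimes $2n\gamma \le \log d$ versus $2n\gamma > \log d$ separately, and check that the somewhat loose-looking exponent $2\sqrt{2n\gamma\max\{2n\gamma,\log d\}}$ really does dominate $(p-1)n\gamma + \tfrac2p\log d$ in both. For the second statement the only subtlety is confirming $\E[\bZ_n] = (\id+\eta\bSig)^n$ (so that the centered second moment coincides with the deviation from $(\id+\eta\bSig)^n$ already controlled by the proposition) and then checking the hypothesis $2n\gamma(1+2\log d)\le 1$ is exactly what licenses the linearization $e^x - 1 \le ex$.
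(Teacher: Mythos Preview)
Your proposal is correct and matches the paper's proof essentially step for step: the paper also applies Proposition~\ref{lemma:Z_n_pq_norm_bound} with $q=2$, takes $p = \max\bigl(2,\sqrt{2\log d/(n\gamma)}\bigr)$ for the first bound (using $d^{2/p}\le \exp(pn\gamma)$ so that the exponent becomes $2pn\gamma = 2\sqrt{2n\gamma\max\{2n\gamma,\log d\}}$), and takes $p = 2(1+\log d)$ for the second bound together with $e^x-1\le ex$ on $[0,1]$. Your explicit verification that $\E[\bZ_n]=(\id+\eta\bSig)^n$ is a nice addition that the paper leaves implicit, and the extra factor of $2$ in the second bound is indeed just slack (the paper writes $e^2 C_p n\gamma < 2e^2 n\gamma(1+2\log d)$, a strict inequality with $C_p=1+2\log d$).
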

\begin{proof} Using Proposition~\ref{lemma:Z_n_pq_norm_bound}  $\phi := (1+\eta\lambda_1)^2$, and $\gamma := 2(\eta^2\M^2+\kappa^2)$,
\bas{
    \E[\norm{\bZ_{n}}^{2}] \leq \norm{\bZ_{n}}_{p,2}^{2} \leq \bb{\phi + C_{p}\gamma}^{n}\norm{\bZ_{0}}_{p,2}^{2} \leq \bb{1+\eta\lambda_{1}}^{2n}\exp\bb{C_{p}n\gamma}\norm{\bZ_{0}}_{p,2}^{2}.
}
Set $p := \max\bb{2, \sqrt{\frac{2 \log d}{n \gamma}}}$. Then, $\norm{\bZ_{0}}_{p,2} = d^{\frac{1}{p}} \leq \exp\bb{\frac{p n\gamma}{2}}$. Therefore,
\bas{
    \E[\norm{\bZ_{n}}^{2}] \leq \bb{1+\eta\lambda_{1}}^{2n}\exp\bb{2p n\gamma} = \exp\bb{2\sqrt{2n\gamma\max\left\{2n\gamma, \log\bb{d}\right\}}}\bb{1+\eta\lambda_{1}}^{2n}.
}
For the second result, set $p := 2\bb{1+\log\bb{d}}$. Then, $C_{p}n\gamma \le 1$ by assumption and $\norm{\bZ_{0}}_{p} = d^{1/p} \leq \sqrt{e}$. By Proposition~\ref{lemma:Z_n_pq_norm_bound},
\bas{
    \E\bbb{\norm{\bZ_n - \E\bbb{\bZ_n}}^{2}} \leq \norm{\bZ_n - \E\bbb{\bZ_n}}_{p,2}^{2} &\leq \bb{\exp\bb{C_{p}n\gamma} - 1}\bb{1+\eta\lambda_{1}}^{n}\norm{\bZ_{0}}_{p}^{2} \\
    &\leq e^{2}C_{p}n\gamma \bb{1+\eta\lambda_{1}}^{n} \\
    &< 2e^{2}n\gamma\bb{1+2\log\bb{d}} \bb{1+\eta\lambda_{1}}^{n}.
}
\end{proof}

\subsection{Proof of Lemma~\ref{lemma:Zn_tr_vp_main}}

\begin{lemma}[Lemma~\ref{lemma:Zn_tr_vp_main} in main paper]\label{lemma:Zn_tr_vp}Let Assumption~\ref{assumption:data_distribution} hold and $\eta$ be set according to Lemma~\ref{lemma:choice_of_learning_rate} with $b=n$. Define $\gamma := 2(\eta^2\M^2+\kappa^2)$. If $2n\gamma\bb{1+2\log\bb{d}} \leq 1$, then
\bas{\E\bbb{\Tr\bb{\vp^{\top}\bZ_{n}\bZ_{n}^{\top}\vp}} \leq \exp\bb{2\eta n \lambda_{1}+ \eta^{2}n\bb{\Nu_0 + \lambda_{1}^{2}} }\bbb{\frac{d}{\exp\bb{2\eta n\bb{\lambda_{1}-\lambda_{2}}}} + \frac{5\bb{\eta^{2}\Nu_0 + \kappa_1}}{\eta\bb{\lambda_{1}-\lambda_{2}}}
}.
}
\end{lemma}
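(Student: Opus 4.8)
\textbf{Proof proposal for Lemma~\ref{lemma:Zn_tr_vp}.}

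The plan is to set up a one-step recursion for $\beta_t := \E[\Tr(\vp^{\top}\bZ_t\bZ_t^{\top}\vp)]$ exactly as in Lemma~10 of \cite{jain2016streaming}, and then, at the point where that proof would invoke a deterministic bound on $\|\bZ_{t-1}\|$, instead substitute the moment bound from Proposition~\ref{lemma:Z_n_pq_norm_bound}. Concretely, I would first write $\bZ_t = (\id + \eta\bD_t + \vXi_t)\bZ_{t-1}$, expand $\bZ_t\bZ_t^{\top}$, and take conditional expectation $\E_{t-}[\cdot]$. Because $\vXi_t$ is conditionally mean-zero and $\E_{t-}[\bD_t] = \bSig$, the cross terms involving a single copy of $\vXi_t$ or a single copy of $(\bD_t - \bSig)$ drop out. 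The surviving terms are: the ``main'' term $(\id+\eta\bSig)\bZ_{t-1}\bZ_{t-1}^{\top}(\id+\eta\bSig)$, a term $\eta^2\,\E_{t-}[(\bD_t-\bSig)\bZ_{t-1}\bZ_{t-1}^{\top}(\bD_t-\bSig)]$ whose trace against $\vp\vp^{\top}$ is bounded using $\|\E[(\bD_t-\bSig)^{\top}(\bD_t-\bSig)]\| \le \Nu_0$ and $\Tr(\vp^{\top}(\cdot)\vp) \le \|(\cdot)\|\cdot\Tr(\vp\vp^{\top}\bZ_{t-1}\bZ_{t-1}^{\top})$ type manipulations, and a noise term $\E_{t-}[\vXi_t\bZ_{t-1}\bZ_{t-1}^{\top}\vXi_t^{\top}]$ controlled via $\|\E[\vXi_t^{\top}\vXi_t\mid\Fi]\|_F \le \kappa_1$. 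After projecting by $\vp$ and using that $\vp^{\top}(\id+\eta\bSig) = (\id+\eta\lambda_2\text{-scaled})\vp^{\top}$ on the orthogonal complement (so the main term contributes a factor at most $(1+\eta\lambda_2)^2 \le 1 + 2\eta\lambda_2 + \eta^2\lambda_1^2$... actually $\le 1+2\eta\lambda_2+\eta^2\lambda_2^2$, which I'd bound by the stated $1+2\eta\lambda_2+\eta^2(\Nu_0+\lambda_1^2)$), I obtain
\bas{
\beta_t \le \bb{1 + 2\eta\lambda_2 + \eta^2(\Nu_0+\lambda_1^2)}\beta_{t-1} + \bb{\eta^2\Nu_0 + \kappa_1}\,\E[\|\bZ_{t-1}\|^2].
}

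Next I would bound $\E[\|\bZ_{t-1}\|^2]$ using Proposition~\ref{lemma:Z_n_pq_norm_bound}: with $p := \max(2, \sqrt{2\log d/((t-1)\gamma)})$ we get $\E[\|\bZ_{t-1}\|^2] \le \npq{\bZ_{t-1}}^2 \le \phi^{t-1}\exp(C_p(t-1)\gamma)\, d^{2/p} \le (1+\eta\lambda_1)^{2(t-1)}\exp(2p(t-1)\gamma)$, and under the hypothesis $2n\gamma(1+2\log d)\le 1$ the exponential factor $\exp(2p(t-1)\gamma)$ is bounded by a small constant (this is the same estimate already carried out in Lemma~\ref{eq:Z_n_expectation_bound}), so $\E[\|\bZ_{t-1}\|^2] \lesssim (1+\eta\lambda_1)^{2(t-1)}$ up to a constant like $e$ or so. Plugging this into the recursion and unrolling from $t=1$ (with $\beta_0 = \Tr(\vp^{\top}\bZ_0\bZ_0^{\top}\vp) = d-1 \le d$), I get a geometric-series bound: the homogeneous part contributes $d\,(1+2\eta\lambda_2+\eta^2(\Nu_0+\lambda_1^2))^n$, and the inhomogeneous part contributes $(\eta^2\Nu_0+\kappa_1)\sum_{t=1}^{n}(1+2\eta\lambda_2+\eta^2(\Nu_0+\lambda_1^2))^{n-t}(1+\eta\lambda_1)^{2(t-1)}\cdot O(1)$. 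Since $(1+\eta\lambda_1)^2 \ge (1+2\eta\lambda_2+\eta^2(\Nu_0+\lambda_1^2))$ — requires checking $2\eta\lambda_1+\eta^2\lambda_1^2 \ge 2\eta\lambda_2+\eta^2(\Nu_0+\lambda_1^2)$, i.e. $2(\lambda_1-\lambda_2)\ge \eta\Nu_0$, which holds for the chosen small $\eta$ — the sum is dominated by its last term and equals at most $(\eta^2\Nu_0+\kappa_1)(1+\eta\lambda_1)^{2n}\cdot O(1/(\eta(\lambda_1-\lambda_2)))$ after bounding the ratio of the two bases by a geometric series with ratio $\le 1 - \eta(\lambda_1-\lambda_2)$ (again using smallness of $\eta$ and the gap). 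Finally I would factor out $\exp(2\eta n\lambda_1 + \eta^2 n(\Nu_0+\lambda_1^2))$ from both pieces, using $1+x\le e^x$ to pass from $(1+2\eta\lambda_2+\eta^2(\Nu_0+\lambda_1^2))^n$ to $\exp(2\eta n\lambda_2+\eta^2 n(\Nu_0+\lambda_1^2)) = \exp(2\eta n\lambda_1+\eta^2 n(\Nu_0+\lambda_1^2))/\exp(2\eta n(\lambda_1-\lambda_2))$, and from $(1+\eta\lambda_1)^{2n}$ to $\exp(2\eta n\lambda_1) \le \exp(2\eta n\lambda_1+\eta^2 n(\Nu_0+\lambda_1^2))$, absorbing the $O(1)$ constant into the explicit constant $5$.

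The main obstacle I anticipate is twofold. First, getting the constants to work out so that the final bound is exactly of the stated form with the clean constant $5$ — this requires being careful that the $O(1)$ factor from $\exp(2p(t-1)\gamma)$ in the $\E[\|\bZ_{t-1}\|^2]$ bound, combined with the geometric-series constant $1/(1-(1-\eta(\lambda_1-\lambda_2)))$-type factor, multiplies out to something at most $5$; this is where the precise hypothesis $2n\gamma(1+2\log d)\le 1$ (rather than just $\le$ some larger constant) is used. Second, and more conceptually delicate, is verifying that the geometric ratio between the two growth bases is genuinely bounded below $1$ by $\Omega(\eta(\lambda_1-\lambda_2))$: one must confirm $(1+2\eta\lambda_2+\eta^2(\Nu_0+\lambda_1^2))/(1+\eta\lambda_1)^2 \le 1 - c\,\eta(\lambda_1-\lambda_2)$, which follows from $2\eta\lambda_1 - 2\eta\lambda_2 \ge \eta^2\Nu_0$ (i.e. the choice-of-learning-rate lemma forcing $\eta\Nu_0 \lesssim (\lambda_1-\lambda_2)$) together with $\lambda_1 \le 1$; this is routine but needs the structural relationship between $\eta$, the gap, and $\Nu_0$ to be invoked explicitly. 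Everything else — the recursion derivation and the unrolling — parallels \cite{jain2016streaming} closely, with the only genuine novelty being the replacement of their deterministic operator-norm control of $\bZ_{t-1}$ by the Schatten-norm moment bound of Proposition~\ref{lemma:Z_n_pq_norm_bound}.
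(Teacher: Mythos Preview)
Your proposal is correct and follows essentially the paper's proof: the same one-step recursion for $\beta_t$, the same appeal to Proposition~\ref{lemma:Z_n_pq_norm_bound} (equivalently Lemma~\ref{eq:Z_n_expectation_bound}) to control $\E[\|\bZ_{t-1}\|^2]$, and the same unrolling with $\beta_0\le d$. The paper sidesteps your second flagged obstacle by applying $1+x\le e^x$ to \emph{both} bases before summing the geometric series, so the ratio becomes exactly $e^{-2\eta(\lambda_1-\lambda_2)}$ with no need to verify $\eta\Nu_0\le 2(\lambda_1-\lambda_2)$; the constant $5$ then emerges from $\tfrac{2.35}{2}\exp\!\big(2\sqrt{2n\gamma\log d}\big)\le \tfrac{2.35}{2}e^{\sqrt 2}<5$ using $2n\gamma(1+2\log d)\le 1$ and the elementary bound $x\le 2.35(1-e^{-x})$ for $x\le 2$.
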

\begin{proof}
    Let $\beta_{i} := \E\bbb{\Tr\bb{\vp^{\top}\bZ_{i}\bZ_{i}^{\top}\vp}}$ for all $0 \le i \le n$. Then, for $i \in [n]$,
    \bas{
        \beta_{i} &= \E\bbb{\Tr\bb{\vp^{\top}\bb{\id + \bA_{i}}\bZ_{i-1}\bZ_{i-1}^{\top}\bb{\id + \bA_{i}^{\top}}\vp}} \\
        &= \E\bbb{\E\bbb{\Tr\bb{\vp^{\top}\bb{\id + \bA_{i}}\bZ_{i-1}\bZ_{i-1}^{\top}\bb{\id + \bA_{i}^{\top}}\vp}|\mathcal{F}_{i-}}} \\
        &= \E\bbb{\E\bbb{\Tr\bb{\vp^{\top}\bb{\id + \eta \bY_i}\bZ_{i-1}\bZ_{i-1}^{\top}\bb{\id + \eta \bY_i}\vp}|\mathcal{F}_{i-}}} \\
        & \;\; + \E\bbb{\E\bbb{\Tr\bb{\vp^{\top}\vXi_{i}\bZ_{i-1}\bZ_{i-1}^{\top}\vXi_{i}^{\top}\vp}|\mathcal{F}_{i-}}}.
    }
    The last line used $\E\bbb{\vXi_{i}|\mathcal{F}_{i-}} = \mathbf{0}$ and that $\bZ_{i-1}$ is measurable with respect to $\mathcal{F}_{i-}$. In other words, 
    \bas{
        \beta_{i} &= \E\bbb{\Tr\bb{\vp^{\top}\bb{\id + \eta\bY_i}\bZ_{i-1}\bZ_{i-1}^{\top}\bb{\id + \eta\bY_i}\vp}} + \E\bbb{\Tr\bb{\bZ_{i-1}^{\top}\E\bbb{\vXi_{i}^{\top}\vp\vp^{\top}\vXi_{i}|\mathcal{F}_{i-}}\bZ_{i-1}}}.
    }
    For the first term, following the analysis of Lemma 10 of \cite{jain2016streaming},
    \ba{
        \E\bbb{\Tr\bb{\vp^{\top}\bb{\id + \eta\bY_i}\bZ_{i-1}\bZ_{i-1}^{\top}\bb{\id + \eta\bY_i}\vp}} &\leq \bb{1 + 2\eta\lambda_{2} + \eta^{2}\bb{\Nu_0+\lambda_{1}^{2}}}\beta_{i-1} + \eta^{2}\Nu_0\norm{\E\bbb{\bZ_{i-1}\bZ_{i-1}^{\top}}}_{2} \notag \\
        &\leq \bb{1 + 2\eta\lambda_{2} + \eta^{2}\bb{\Nu_0+\lambda_{1}^{2}}}\beta_{i-1} + \eta^{2}\Nu_0\E\bbb{\norm{\bZ_{i-1}}_{2}^{2}}. \label{eq:beta_recursion_term_1}
    }
    The second term can be bounded as
    \ba{
        \E\bbb{\Tr\bb{\bZ_{i-1}^{\top}\E\bbb{\vXi_{i}^{\top}\vp\vp^{\top}\vXi_{i}|\mathcal{F}_{i-}}\bZ_{i-1}}} &= \E\bbb{\Tr\bb{\E\bbb{\vXi_{i}^{\top}\vp\vp^{\top}\vXi_{i}|\mathcal{F}_{i-}}\bZ_{i-1}\bZ_{i-1}^{\top}}} \notag \\
        &\leq \E\bbb{\E\bbb{\Tr\bb{\vXi_{i}^{\top}\vp\vp^{\top}\vXi_{i}}|\mathcal{F}_{i-}}\norm{\bZ_{i-1}\bZ_{i-1}^{\top}}_2} \notag\\
        &\leq \kappa_1\E\bbb{\norm{\bZ_{i-1}\bZ_{i-1}^{\top}}_2}. \label{eq:beta_recursion_term_2}
    }
    Combining \eqref{eq:beta_recursion_term_1} and \eqref{eq:beta_recursion_term_2}, we obtain the recurrence
    \bas{
        \beta_{i} \leq \bb{1 + 2\eta\lambda_{2} + \eta^{2}\bb{\Nu_0+\lambda_{1}^{2}}}\beta_{i-1} + \bb{\eta^{2}\Nu_0 + \kappa_1}\E[\norm{\bZ_{i-1}}_{2}^{2}].
    }
    By Lemma~\ref{eq:Z_n_expectation_bound}, we have for $\gamma := 2(\eta^2\M^2+\kappa^2)$,
    \bas{
        \beta_{i} &\leq \bb{1 + 2\eta\lambda_{2} + \eta^{2}\bb{\Nu_0+\lambda_{1}^{2}}}\beta_{i-1} + \bb{\eta^{2}\Nu_0 + \kappa_1}\exp\bb{2\sqrt{2n\gamma\log d}}\bb{1+\eta\lambda_{1}}^{2(i-1)} \\
        &\leq \exp\bb{2\eta\lambda_{2} + \eta^{2}\bb{\Nu_0+\lambda_{1}^{2}}}\beta_{i-1} + s\exp\bb{2\eta\lambda_{1} + \eta^{2}(\mathcal{V}_0 + \lambda_{1}^{2})}^{i-1},
    }
    where $s = (\eta^{2}\Nu_0 + \kappa_1)\exp\bb{2\sqrt{2n\gamma \log d}}$. Unrolling the recursion,
    \bas{
        \beta_{n} &\leq \exp\bb{2\eta n \lambda_{1}+ \eta^{2}n\bb{\Nu_0 + \lambda_{1}^{2}} }\bbb{\exp\bb{-2\eta n\bb{\lambda_{1}-\lambda_{2}}}\beta_{0} + s.\sum_{t=0}^{n-1} \bb{\frac{\exp\bb{2\eta\lambda_{2} + \eta^{2}\bb{\Nu_0+\lambda_{1}^{2}}}}{\exp\bb{2\eta\lambda_{1} + \eta^{2}\bb{\Nu_0+\lambda_{1}^{2}}}}}^{2(n-1-t)} } \\
        &\leq \exp\bb{2\eta n \lambda_{1}+ \eta^{2}n\bb{\Nu_0 + \lambda_{1}^{2}} }\bbb{\exp\bb{-2\eta n\bb{\lambda_{1}-\lambda_{2}}} \beta_0 + \frac{s}{1 - \exp(-2\eta\bb{\lambda_{1}-\lambda_{2}})}} \\
        &\leq \exp\bb{2\eta n \lambda_{1}+ \eta^{2}n\bb{\Nu_0 + \lambda_{1}^{2}} }\bbb{\exp\bb{-2\eta n\bb{\lambda_{1}-\lambda_{2}}} \beta_0 + \frac{2.35 s}{2\eta\bb{\lambda_{1}-\lambda_{2}}}} \\
        &\le \exp\bb{2\eta n \lambda_{1}+ \eta^{2}n\bb{\Nu_0 + \lambda_{1}^{2}}} \bbb{\exp\bb{-2\eta n\bb{\lambda_{1}-\lambda_{2}}}d + \frac{5(\eta^2 \Nu_0 + \kappa_1)}{\eta\bb{\lambda_{1}-\lambda_{2}}}} 
    }
    where the third inequality holds because $x \leq 2.35(1-e^{-x})$ for $x \leq 2$ and the last inequality holds because  $\beta_{0} \leq d$ and $ \frac{2.35\exp\bb{2\sqrt{2n\gamma \log d}}}{2} \le \frac{2.35\exp\bb{\sqrt{2}}}{2} < 5$.
\end{proof}
\section{Proofs of Theorems~\ref{thm:standard_quantised_oja_convergence},~\ref{thm:batch}, and~\ref{thm:nbatch}}
\label{appendix:mb_standard_oja}

\subsection{Proof of Theorem~\ref{thm:standard_quantised_oja_convergence}}
We are now ready to present the proof of Theorem~\ref{thm:standard_quantised_oja_convergence}, which follows from the following Theorem~\ref{thm:standard_quantised_oja_convergence_appendix} and setting a constant failure probability for $\theta$.
\medskip
\begin{theorem}\label{thm:standard_quantised_oja_convergence_appendix} Fix $\theta \in \bb{0,1}$. Then, for $\vw$ being the output of Algorithm~\ref{alg:quantstdoja}, under assumption~\ref{assumption:data_distribution}, learning rate $\eta = \frac{\alpha\log n}{b(\lambda_1-\lambda_2)}$ with $\alpha$ is set as in Lemma~\ref{lemma:choice_of_learning_rate}, $\kappa_1 \le 1/2$, and
\bas{
\sqrt{2e^{2}b\gamma\log\bb{{d}/{\theta}}} \leq \frac{1}{2},
}
where $\gamma := 2(\eta^2\M^2+\kappa^2)$. Then, with probability at least $1-3\theta$,
\bas{
    \sin^2(\vw,\vv_{1}) \leq \frac{24 \log\bb{1/\theta}}{\theta^{3}}\bbb{\frac{d}{\exp\bb{2\alpha\log(n)}} + \frac{5\bb{\eta^{2}\Nu_0 + \kappa_1}}{\eta\bb{\lambda_{1}-\lambda_{2}}}} + 8 \kappa_{1}.
}
\end{theorem}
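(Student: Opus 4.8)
The plan is to execute the three-part argument outlined in Section~\ref{sec:prooftech}, applied to the product $\bZ_b=\prod_{i=b}^{1}(\id+\bA_i)$ with $\bA_i=\eta\bD_i+\vXi_i$, so that by eq~\eqref{eq:adaptive_noise_struct} the pre-output iterate is $\vu_b=\bZ_b\bu_0/\norm{\bZ_b\bu_0}$ and $\sin^2(\vu_b,\vv_1)=\norm{\vp\vp^\top\bZ_b\bu_0}^{2}/\norm{\bZ_b\bu_0}^{2}$. First I would invoke the one-step power-method bound eq~\eqref{eq:one_step_power_method} (Lemma~6 of \cite{jain2016streaming}): with probability $\ge 1-\theta$ over $\bu_0$,
\[
\sin^2(\vu_b,\vv_1)\ \le\ \frac{3\log(1/\theta)}{\theta^{2}}\cdot\frac{\Tr(\vp^\top\bZ_b\bZ_b^\top\vp)}{\vv_1^\top\bZ_b\bZ_b^\top\vv_1}.
\]

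Second, bound numerator and denominator on separate high-probability events. For the numerator, apply Lemma~\ref{lemma:Zn_tr_vp} with $n\mapsto b$ (its hypothesis $2b\gamma(1+2\log d)\le 1$ follows from the standing assumption $\sqrt{2e^{2}b\gamma\log(d/\theta)}\le\tfrac12$) and then Markov's inequality, giving $\Tr(\vp^\top\bZ_b\bZ_b^\top\vp)\le\theta^{-1}\E[\Tr(\vp^\top\bZ_b\bZ_b^\top\vp)]$ with probability $\ge 1-\theta$. For the denominator, use the decomposition eq~\eqref{eq:Zbv1_lowerbound}, $\norm{\bZ_b^\top\vv_1}\ge(1+\eta\lambda_1)^{b}-\norm{\bZ_b-(\id+\eta\bSig)^{b}}$, and apply the subgaussian tail bound of Lemma~\ref{lemma:Z_n_tail_bound} at $t=\sqrt{2e^{2}b\gamma\log(\max(d,e)/\theta)}$; the hypothesis forces $t\le\tfrac12$, so $\norm{\bZ_b-(\id+\eta\bSig)^{b}}\le\tfrac12(1+\eta\lambda_1)^{b}$ and hence $\vv_1^\top\bZ_b\bZ_b^\top\vv_1\ge\tfrac14(1+\eta\lambda_1)^{2b}$, again with probability $\ge 1-\theta$. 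A union bound over these three events (total failure $\le 3\theta$) and substitution give $\sin^2(\vu_b,\vv_1)\le\frac{12\log(1/\theta)}{\theta^{3}}\,\E[\Tr(\vp^\top\bZ_b\bZ_b^\top\vp)]/(1+\eta\lambda_1)^{2b}$.

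Third, substitute the bound of Lemma~\ref{lemma:Zn_tr_vp} and simplify the prefactor. From $\log(1+x)\ge x-x^{2}/2$ one gets $(1+\eta\lambda_1)^{2b}\ge\exp(2\eta b\lambda_1-\eta^{2}b\lambda_1^{2})$, so the ratio $\exp\!\big(2\eta b\lambda_1+\eta^{2}b(\Nu_0+\lambda_1^{2})\big)/(1+\eta\lambda_1)^{2b}$ is at most $\exp(\eta^{2}b(\Nu_0+2\lambda_1^{2}))$, which the conditions of Lemma~\ref{lemma:choice_of_learning_rate} (namely $b(\eta^{2}\M^{2}+\kappa^{2})\lesssim 1/\log(d/\theta)$, with $\Nu_0,\lambda_1^{2}=O(1)$) drive below $2$. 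Plugging in and using the identity $2\eta b(\lambda_1-\lambda_2)=2\alpha\log n$ yields
\[
\sin^2(\vu_b,\vv_1)\ \le\ \frac{24\log(1/\theta)}{\theta^{3}}\Brack{\frac{d}{\exp(2\alpha\log n)}+\frac{5(\eta^{2}\Nu_0+\kappa_1)}{\eta(\lambda_1-\lambda_2)}}.
\]

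Finally, I would pass to the output $\vw=\quant(\vu_b,\q)=\vu_b+\vxi$. Since $\norm{\vu_b}=1$ and the quantization error satisfies $\norm{\vxi}\le\kappa\le\tfrac12$ (because $\gamma\ge 2\kappa^{2}$ and $\sqrt{2e^{2}b\gamma\log(d/\theta)}\le\tfrac12$), we have $\norm{\vw}\ge\tfrac12$; a direct second-moment expansion of $(\vw^\top\vv_1)^{2}/\norm{\vw}^{2}$ over the fresh quantization coins — using $\E[\vxi\mid\Fi]=0$, $\E[\norm{\vxi}^{2}\mid\Fi]\le\kappa_1$, and $\tfrac1{1+x}\ge 1-x$ — gives $\sin^2(\vw,\vv_1)\le\sin^2(\vu_b,\vv_1)+8\kappa_1$, which together with the previous display yields the claim. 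I expect the two places needing care to be (i) checking that $\exp(\cdots)/(1+\eta\lambda_1)^{2b}$ collapses to an absolute constant under the hypotheses — this is exactly where the batch-size lower bound baked into Lemma~\ref{lemma:choice_of_learning_rate} is indispensable — and (ii) the renormalization in the last step: the cruder triangle bound of Lemma~\ref{lem:sin_triangle} would inflate the leading term by a factor of $2$, so one must expand $(\vw^\top\vv_1)^{2}/\norm{\vw}^{2}$ directly and exploit that the quantization noise is conditionally centered. All of the heavier analytic inputs — Proposition~\ref{prop:Z_n_pq_norm_bound_main}, Lemma~\ref{lemma:Z_n_tail_bound}, and Lemma~\ref{lemma:Zn_tr_vp} — are already in hand, so the theorem is essentially an assembly of these pieces.
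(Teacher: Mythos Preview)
Your outline for the first three parts matches the paper's proof essentially verbatim: the one-step power-method bound, Markov on the numerator via Lemma~\ref{lemma:Zn_tr_vp}, the tail bound on the denominator via Lemma~\ref{lemma:Z_n_tail_bound}, and a union bound over three events. The divergence is only in the passage from $\vu_b$ to the quantized output $\vw$.

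There the paper does exactly what you call the ``cruder'' route: it uses the deterministic bound $\norm{\vxi}\le\kappa$ together with Lemma~\ref{lem:unit_quantization} to obtain $\sin^2(\vw,\vu_b)\le 4\kappa^2$, and then applies Lemma~\ref{lem:sin_triangle}, doubling the main term and adding $8\kappa^2$ (so the paper's own argument actually yields $8\kappa^{2}$; the $8\kappa_1$ in the displayed statement is a slip). To absorb the doubling, the paper tightens the constants upstream: it takes the power-method constant $2.5$ rather than $3$, and under the numerics of Lemma~\ref{lemma:choice_of_learning_rate} one has $\eta^{2}b(\Nu_0+3\lambda_1^{2})\ll 1$, so the exponential ratio is essentially $1$ (not merely $\le 2$). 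This gives $\sin^{2}(\vu_b,\vv_1)\le \tfrac{12\log(1/\theta)}{\theta^{3}}[\cdots]$ \emph{before} doubling. Your proposed alternative for the last step --- a conditional second-moment expansion over the final quantization coins --- does not deliver the required pointwise inequality: the hypotheses $\E[\vxi\mid\Fi]=0$ and $\E[\norm{\vxi}^{2}\mid\Fi]\le\kappa_1$ only control $\E[\sin^{2}(\vw,\vv_1)\mid\vu_b]$, and promoting that to a high-probability statement costs another Markov factor and another $\theta$ of failure probability, breaking both the constant $24$ and the $1-3\theta$ budget. (If instead you expand $(\vw^{\top}\vv_1)^{2}/\norm{\vw}^{2}$ deterministically, a cross term of order $\kappa\cdot|\sin(\vu_b,\vv_1)|$ survives, and after AM--GM you recover precisely the factor-of-$2$ loss you were trying to avoid.) The fix is simply to sharpen the constants in your step~3 as above and then follow the paper's deterministic triangle-inequality route.
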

\begin{proof}
    Note that by Algorithm~\ref{alg:quantstdoja} and the definition of $\bZ$ in \eqref{def:Z_i}, 
    \bas{
        \vu_{b} = \frac{\bZ_{b}\vu_0}{\norm{\bZ_{b}\vu_0}}.
    }
    Since $\vv_1 \vv_1^{\top} + \vp \vp^{\top} = \id_d$,
    {
    \bas{
       \sin^{2}\bb{\vu_{b}, \vv_{1}} = 1 - \bb{\vu_{b}^{\top}\vv_{1}}^{2} &= \norm{\frac{\vp\vp^{\top}\bZ_{b}\vu_{0}}{\norm{\bZ_{b}\vu_{0}}}}^{2}.
    }
    }
    By Lemma 6 from \cite{jain2016streaming}, with probability at least $1-\theta$,
    \bas{
        \sin^{2}\bb{\vu_{b}, \vv_{1}} \leq \frac{2.5 \log\bb{1/\theta}}{\theta^{2}}\frac{\Tr\bb{\vp^{\top}\bZ_{b}\bZ_{b}^{\top}\vp}}{\vv_{1}^{\top}\bZ_{b}\bZ_{b}^{\top}\vv_{1}}.
    }
    By Lemma~\ref{eq:Z_n_expectation_bound} with $q = 2$ and $p = 2\bb{1+\log\bb{d}}$,
    \ba{
        \E\bbb{\|\bZ_b-(\id+\eta\bSig)^b\|} \leq \|\bZ_b-(\id+\eta\bSig)^b\|_{p,2} \leq  \sqrt{e^{2}b\gamma\bb{1+2\log\bb{d}}}\bb{1+\eta\lambda_{1}}^{b}. \label{eq:expectation_bound}
    }
    For the numerator, we use Lemma~\ref{lemma:Zn_tr_vp} and Markov's inequality to get
    \ba{\Tr\bb{\vp^{\top}\bZ_{b}\bZ_{b}^{\top}\vp} \leq \frac{1}{\theta} {\exp\bb{2\eta b \lambda_{1}+ \eta^{2}b\bb{\Nu_0 + \lambda_{1}^{2}} }}\bbb{\frac{d}{\exp\bb{2\eta b\bb{\lambda_{1}-\lambda_{2}}}} + \frac{5\bb{\eta^{2}\Nu_0 + \kappa_1}}{\eta\bb{\lambda_{1}-\lambda_{2}}}}.
    \label{eq:numerator_ub}
    }
    with probability at least $1-\theta$. 
    
    The denominator can be bounded as
    \bas{
        \norm{\bZ_{b}^{\top}\vv_{1}}
        &\geq \norm{\bb{\id+\eta\bSig}^{b}\vv_{1}} - \norm{\bb{\bZ_{b} - \bb{\id+\eta\bSig}^{b}}^{\top}\vv_1} \geq \bb{1+\eta\lambda_{1}}^{b} - \norm{\bZ_{b} - \bb{\id+\eta\bSig}^{b}}.
    }
    
    Using Lemma~\ref{lemma:Z_n_tail_bound}, with probability atleast $1-\theta$, 
    \ba{
        \norm{\bZ_{b}\vv_{1}} 
        &\geq \bb{1+\eta\lambda_{1}}^{b} - \sqrt{2e^{2}b\gamma\log\bb{{d}/{\theta}}}\bb{1+\eta\lambda_{1}}^{b}\notag \\
        &= \bb{1+\eta\lambda_{1}}^{b}\bb{1 - \sqrt{2e^{2}b\gamma\log\bb{{d}/{\theta}}} } \notag \\
        &\geq \exp\bb{\eta\lambda_{1}b - \eta^{2}\lambda_{1}^{2}b}\bb{1 - \sqrt{2e^{2}b\gamma\log\bb{{d}/{\theta}}} }
        \label{eq:denominator_lb}.
    }
    where the last line follows since $\bb{1+x} \geq \exp\bb{x - x^{2}}$ for all $x \geq 0$. From equations~\eqref{eq:numerator_ub},~\eqref{eq:denominator_lb}, and the assumption $\sqrt{2e^{2}b\gamma\log\bb{{d}/{\theta}}} \leq 1/2$, it follows that with probability $1-3\theta$,
    \ba{
        \sin^{2}\bb{\vu_{b}, \vv_{1}} \leq  \frac{12 \log\bb{1/\theta}}{\theta^{3}}\bbb{\frac{d}{\exp\bb{2\alpha\log(n)}} + \frac{5\bb{\eta^{2}\Nu_0 + \kappa_1}}{\eta\bb{\lambda_{1}-\lambda_{2}}}}.
        \label{eq:sin2error_bound_1}
    } 
    
    Since $\vw \leftarrow \quant(\bu_{b},\mathcal{Q})$, by Lemma~\ref{lem:nlquantnorm} and using $\norm{\vxi} \le \kappa \le 0.5$,
    \ba{
        \sin^{2}\bb{\vw, \vu_{b}} &\leq \frac{\norm{\vxi}^{2}}{\norm{\vu_{b} + \vxi}^{2}} \leq \frac{\norm{\vxi}^{2}}{(\norm{\vu_b} - \norm{\vxi})^2} \le \frac{\kappa^2}{0.5^2} \le 4\kappa^2. \label{eq:sin2error_bound_2}
    }
    The result follows by using equations~\eqref{eq:sin2error_bound_1},~\eqref{eq:sin2error_bound_2}, and Lemma~\ref{lem:sin_triangle}.
\end{proof}

\subsection{Proofs of Theorems~\ref{thm:batch} and~\ref{thm:nbatch}}
Next, we apply Theorem~\ref{thm:standard_quantised_oja_convergence_appendix} to analyze the quantized version of Oja's algorithm as described in Algorithm~\ref{alg:quantstdoja}. The idea is to show that the error from the rounding operation can be incorporated into the noise in the iterates of Oja's algorithm, which have mean zero. For this subsection, we will use:
\bas{
\bD_i=\sum_{j\in B_i} \frac{\bX_j\bX_j^T}{n/b},
}
where $\bA_i = \eta \bb{\bD_i+\vxi_{a,i}\vu_{i-1}^T}+\vxi_{2,i} \vu_{i-1}^T+(\id+\eta \bD_i)\vxi_{1,i}\vu_{i-1}^T$.

We first state and prove some intermediate results needed to prove Theorems~\ref{thm:batch} and Theorems~\ref{thm:nbatch}.

\begin{theorem}\label{thm:quantstdbatchoja}
 Let $d,n,b \in \mathbb{N}$, and let $\{\bX_i\}_{i \in [n]}$ be a set of $n$ IID vectors in $\R^d$ satisfying assumption~\ref{assumption:data_distribution}. Let $\eta \defeq \frac{\alpha\log n}{b(\lambda_1-\lambda_2)}$ be the learning rate set as in Lemma~\ref{lemma:choice_of_learning_rate}.
Suppose the quantization grid $\mathcal{Q} = \ql$, and $\sqrt{4e^{2}b (4\eta^2 + 9\delta^2 d) \log\bb{{d}/{\theta}}} \leq \frac{1}{2}$. Then, with probability at least $0.9$, the output $\vw$ of Algorithm~\ref{alg:quantstdoja} satisfies

\bas{
\sin^2(\vw, \vv_1) &\le \frac{24 \log\bb{1/\theta}}{\theta^{3}}\bbb{\frac{d}{n^{2\alpha}} + \frac{5 \alpha \mathcal{V} \log n}{n \bb{\eigengap}^2} + \frac{30 b \delta^2 d}{\alpha \log n}} + 48 \delta^2 d.
}    
\end{theorem}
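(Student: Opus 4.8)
The plan is to recognise the output $\vw$ of Algorithm~\ref{alg:quantstdoja} (run with $\mathcal{Q}=\ql$ and $b$ batches) as a special case of the abstract noisy Oja iteration of Theorem~\ref{thm:standard_quantised_oja_convergence_appendix}, and then to specialise its five parameters. Write $\bD_i\defeq\tfrac{b}{n}\sum_{j\in B_i}\bX_j\bX_j^{\top}$, $\vb_i\defeq(\id+\eta\bD_i)\vxi_{1,i}+\eta\vxi_{a,i}+\vxi_{2,i}$, and $\vXi_i\defeq\vb_i\vu_{i-1}^{\top}$. Unrolling the five lines of the inner loop and using that $\vu_{i-1}$ is renormalised to unit norm at the end of the previous pass — which is exactly what makes the factor $\norm{\vu_{i-1}}^2=1$ rewrite $\vb_i$ as $\vXi_i\vu_{i-1}$ — shows that $\vu_i=(\id+\eta\bD_i+\vXi_i)\vu_{i-1}/\norm{(\id+\eta\bD_i+\vXi_i)\vu_{i-1}}$, i.e.\ precisely~\eqref{eq:adaptive_noise_struct}. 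Since the batches are disjoint and the $\bX_j$ are i.i.d., the $\bD_i$ are i.i.d.\ with $\E[\bD_i]=\bSig$; and since each of the three quantizations in pass $i$ is conditionally unbiased by~\eqref{eq:unbiased}, a tower-rule argument over the nested conditioning gives $\E[\vXi_i\mid\Fi]=\E[\vb_i\mid\Fi]\vu_{i-1}^{\top}=\mathbf{0}$.

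Next I would estimate the parameters in~\eqref{eq:adaptivevarbound}--\eqref{eq:di}. Because $\bX$ lies in the unit ball, $\mathbf{0}\preceq\bX\bX^{\top}\preceq\id$, hence $\mathbf{0}\preceq\bD_i\preceq\id$, giving $\norm{\bD_i}\le1$ and $\norm{\bD_i-\bSig}\le2=:\M$. Expanding $\bD_i-\bSig=\tfrac{b}{n}\sum_{j\in B_i}(\bX_j\bX_j^{\top}-\bSig)$ and discarding cross terms by independence,
\bas{
\norm{\E\bbb{(\bD_i-\bSig)(\bD_i-\bSig)^{\top}}}=\tfrac{b}{n}\,\norm{\E\bbb{(\bX\bX^{\top}-\bSig)^{2}}}\le\tfrac{b}{n}\,\E\bbb{\norm{\bX\bX^{\top}-\bSig}^{2}}\le\tfrac{b\mathcal{V}}{n}=:\mathcal{V}_0,
}
and identically with the two factors transposed. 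Each of $\vxi_{1,i},\vxi_{a,i},\vxi_{2,i}$ is a linear-quantization residual, so its coordinates lie in $[-\delta,\delta]$ and its Euclidean norm is at most $\delta\sqrt d$; with $\norm{\id+\eta\bD_i}\le1+\eta$, $\eta\le\tfrac12$ (which holds under Lemma~\ref{lemma:choice_of_learning_rate}, whose hypotheses force $b$ large), and $\norm{\vu_{i-1}}=1$ this yields $\norm{\vXi_i}=\norm{\vb_i}\le3\delta\sqrt d=:\kappa$. Finally, $\vXi_i^{\top}\vXi_i=\norm{\vb_i}^2\vu_{i-1}\vu_{i-1}^{\top}$, and since the three residuals are conditionally unbiased given their predecessors, the cross terms in $\E[\norm{\vb_i}^2\mid\Fi]$ vanish, so
\bas{
\normf{\E[\vXi_i^{\top}\vXi_i\mid\Fi]}=\E[\norm{\vb_i}^2\mid\Fi]\le(1+\eta)^2\delta^2 d+\eta^2\delta^2 d+\delta^2 d\le6\delta^2 d=:\kappa_1.
}

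With these values $\gamma=2(\eta^2\M^2+\kappa^2)=2(4\eta^2+9\delta^2 d)$, so the standing hypothesis $\sqrt{2e^2 b\gamma\log(d/\theta)}\le\tfrac12$ of Theorem~\ref{thm:standard_quantised_oja_convergence_appendix} is exactly the assumption $\sqrt{4e^2 b(4\eta^2+9\delta^2 d)\log(d/\theta)}\le\tfrac12$ made here (which also forces $\kappa_1\le\tfrac12$), and $\eta$ is set as in Lemma~\ref{lemma:choice_of_learning_rate} by hypothesis. Invoking Theorem~\ref{thm:standard_quantised_oja_convergence_appendix} gives, with probability at least $1-3\theta$,
\bas{
\sin^2(\vw,\vv_1)\le\frac{24\log(1/\theta)}{\theta^3}\bbb{\frac{d}{\exp(2\alpha\log n)}+\frac{5(\eta^2\mathcal{V}_0+\kappa_1)}{\eta(\eigengap)}}+8\kappa_1,
}
and it remains to substitute $\eta=\tfrac{\alpha\log n}{b(\eigengap)}$: then $\exp(2\alpha\log n)=n^{2\alpha}$, $\tfrac{5\eta^2\mathcal{V}_0}{\eta(\eigengap)}=\tfrac{5\eta\mathcal{V}_0}{\eigengap}\le\tfrac{5\eta b\mathcal{V}}{n(\eigengap)}=\tfrac{5\alpha\mathcal{V}\log n}{n(\eigengap)^2}$, $\tfrac{5\kappa_1}{\eta(\eigengap)}=\tfrac{5\kappa_1 b}{\alpha\log n}\le\tfrac{30 b\delta^2 d}{\alpha\log n}$, and $8\kappa_1\le48\delta^2 d$, which yields the claimed bound. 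The argument is almost entirely bookkeeping; the only mildly delicate ingredients are the averaged-variance estimate $\mathcal{V}_0\le b\mathcal{V}/n$ and the chain of conditional-unbiasedness (tower) arguments, used both to show $\E[\vXi_i\mid\Fi]=\mathbf{0}$ and to kill the cross terms in $\E[\norm{\vb_i}^2\mid\Fi]$ — it is the latter that keeps $\kappa_1$ at $O(\delta^2 d)$, and hence makes the quantization-error term scale as $d\,4^{-\beta}$ once $\delta=2^{2-\beta}$.
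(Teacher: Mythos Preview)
Your proposal is correct and follows essentially the same route as the paper: both identify $\vXi_i=\vb_i\vu_{i-1}^{\top}$, verify conditional unbiasedness via the tower rule, bound $\mathcal{V}_0=b\mathcal{V}/n$ by averaging, set $\M=2$, $\kappa=3\delta\sqrt d$, $\kappa_1=6\delta^2 d$ (using that the cross terms in $\E[\norm{\vb_i}^2\mid\Fi]$ vanish), and then invoke Theorem~\ref{thm:standard_quantised_oja_convergence_appendix} and substitute. If anything, you are slightly more explicit than the paper about the nested-conditioning argument that kills the cross terms and about why $\eta\le\tfrac12$ (the paper simply writes $2(1+\eta)\delta\sqrt d\le3\delta\sqrt d$).
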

\begin{proof}
    
    In order to apply Theorem~\ref{thm:standard_quantised_oja_convergence}, we come up with valid choices of $\Nu_0$, $\kappa$, and $\kappa_1$.
   
    Since each $\bD_i$ is symmetric and $\{\bX_i\}_{i \in [n]}$ are independent,
    \ba{\label{eq:batch_var_reduction}
    \norm{\E[(\bD_i-\bSig)(\bD_i-\bSig)^T]}=\norm{\frac{1}{n/b}\E[(\bX_1\bX_1^T-\bSig)^2]}\leq \frac{b\Nu}{n}=:\Nu_0.
    }
Next,
\bas{
\vXi_i=\eta\vxi_{a,i}\vu_{i-1}^T+\vxi_{2,i}\vu_{i-1}^T+(\id+\eta \bD_i)\vxi_{1,i}\bu_{i-1}^T.
}
Also observe that
\ba{\label{eq:conditionalUncorrelated}
\E[\vxi_{1,i}|\Fi]=0,
\qquad 
\E[\vxi_{a,i}|\Fi]=0, \qquad 
\E[\vxi_{2,i}|\vxi_{a,i},\vxi_{1,i},\Fi]=0,
}
By equation~\ref{eq:conditionalUncorrelated},
\bas{
&\E[\vXi_i^T\vXi_i|\Fi] = \E[\eta^2\bu_{i-1} \vxi_{a,i}^T\vxi_{a,i}\bu_{i-1}^T + \bu_{i-1} \vxi_{2,i}^T\vxi_{2,i}\bu_{i-1}^T + \bu_{i-1} \vxi_{1,i}^T(I+\eta\bD_i)(I+\eta\bD_i)^T\vxi_{2,i}\bu_{i-1}^T|\Fi]\\
&\implies \norm{\E[\vXi_i^T\vXi_i|\Fi]}_F \leq \eta^2 \delta^2 d+\delta^2 d+(1+\eta)^2\delta^2 d \leq 6\delta^2 d =: \kappa_1.
}
As for $\kappa$, we have 
\bas{
\norm{\vXi_i}\leq 2(1+\eta) \delta\sqrt{d}\leq 3\delta\sqrt{d} =:\kappa
}

We are now ready to obtain the sin-squared error.  Note that $\M \leq 2$, since $\| \bX_i \| \le 1$ almost surely, for all $i \in [n]$. By Theorem~\ref{thm:standard_quantised_oja_convergence_appendix}, with probability at least $1-3\theta$,
\bas{
\sin^2(\vw, \vv_1) \le \frac{24 \log\bb{1/\theta}}{\theta^{3}}\bbb{\frac{d}{\exp\bb{2\alpha\log(n)}} + \frac{5\bb{\eta^{2}\Nu_0 + \kappa_1}}{\eta\bb{\lambda_{1}-\lambda_{2}}}} + 8 \kappa_{1}.
}
as long as $\sqrt{2e^{2}b\gamma\log\bb{{d}/{\theta}}} \leq \frac{1}{2}$. Our parameter choices are $\mathcal{V}_0 = \frac{b\mathcal{V}}{n}, \kappa = 3\delta \sqrt{d},$ and $\kappa_1 = 6\delta^2 d$.
\bas{
\sin^2(\vw, \vv_1) &\le \frac{24 \log\bb{1/\theta}}{\theta^{3}}\bbb{\frac{d}{n^{2\alpha}} + \frac{5 \alpha \mathcal{V} \log n}{n \bb{\eigengap}^2} + \frac{30 b \delta^2 d}{\alpha \log n}} + 48 \delta^2 d.
}
\end{proof}

\begin{lemma}\label{lem:nlquantnorm}
    Let $\vu=\quant(\vw,\qnl)$, where $\vu\in\mathbb{R}^d$ and $\qnl$ is defined in equation~\ref{eq:quantNLspace}. Then,
    $$\|\vw-\quant(\vw,\qnl)\|\leq \delta_0\sqrt{d}+\|\vw\|\zeta$$
\end{lemma}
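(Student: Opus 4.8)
The plan is to reduce the vector estimate to a per-coordinate bound and then reassemble it with the triangle inequality for the $\ell_2$ norm. Since $\quant(\cdot,\qnl)$ is applied coordinate-wise, the first step is to control $\Abs{\vw(i) - \quant(\vw(i),\qnl)}$ for each $i \in [d]$, after which the identity $\norm{\vw - \quant(\vw,\qnl)}^2 = \sum_{i} \Abs{\vw(i)-\quant(\vw(i),\qnl)}^2$ does the rest.

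For the per-coordinate bound I would fix a coordinate $i$, use the symmetry of $\qnl$ about $0$ to assume $\vw(i)\ge 0$, and (as holds in all our applications) assume $\vw(i)$ lies in the representable range, so that $q_k\le \vw(i)< q_{k+1}$ for some $k\ge 0$. Lemma~\ref{lemma:stochastic_quantization_bound}(ii) then gives $\Abs{\vw(i) - \quant(\vw(i),\qnl)}\le q_{k+1}-q_k$, and the defining recursion $q_{k+1}=(1+\zeta)q_k+\delta_0$ from~\eqref{eq:quantNLspace} yields $q_{k+1}-q_k = \zeta q_k + \delta_0 \le \zeta\,\vw(i) + \delta_0$ because $q_k\le \vw(i)$. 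Undoing the sign normalization, this shows $\Abs{\vw(i)-\quant(\vw(i),\qnl)}\le \zeta\Abs{\vw(i)}+\delta_0$ for every $i$ (the case $\vw(i)=0$ is the subcase $k=0$).

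To finish, set $e_i := \Abs{\vw(i)-\quant(\vw(i),\qnl)}$, so $0\le e_i\le \zeta\Abs{\vw(i)}+\delta_0$. Monotonicity of the $\ell_2$ norm on nonnegative coordinate vectors gives $\norm{\vw-\quant(\vw,\qnl)} \le \norm{(\zeta\Abs{\vw(i)}+\delta_0)_{i\in[d]}}$, and a single application of the triangle inequality bounds the right-hand side by $\zeta\,\norm{(\Abs{\vw(i)})_{i\in[d]}} + \norm{(\delta_0)_{i\in[d]}} = \zeta\norm{\vw} + \delta_0\sqrt{d}$, which is exactly the claim.

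I do not expect a genuine obstacle here: the only point that needs care is the tacit assumption that each $\vw(i)$ lies in the representable range $[-q_N, q_{N-1}]$ of $\qnl$ (otherwise truncation could make the per-coordinate error arbitrarily large), but this always holds in our usage since the quantized vectors have norm close to $1$ while Section~\ref{sec:bit_budget} picks the grid parameters so that $\qnl$ covers $(-2,2)$. The rest is just bookkeeping of signs and reading off the geometric spacing of the grid; the same argument applies verbatim to FPQ and other logarithmic schemes.
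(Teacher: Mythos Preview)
Your proposal is correct and follows essentially the same approach as the paper: a per-coordinate bound $|\vxi_i|\le \zeta|\vw_i|+\delta_0$ obtained from the grid recursion $q_{k+1}-q_k=\zeta q_k+\delta_0$, followed by the $\ell_2$ triangle inequality to get $\|\vxi\|\le \zeta\|\vw\|+\delta_0\sqrt{d}$. The paper's version is just terser (it jumps straight from the coordinate bound to the norm bound without spelling out the monotonicity and triangle-inequality step), and it likewise leaves implicit the assumption that each coordinate lies in the representable range.
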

\begin{proof}
    Let $\vxi=\quant(\vw,\qnl)-\vw$. Say $\vw_i>0$. Let $k$ be the unique integer such that $\vw_i\in [q_k,q_{k+1}]$. Equivalently for negative $\vw_i$, say the bin is $[-q_{k+1},-q_{k}]$.
    We have:
    \bas{
    |\vxi_i|\leq q_{k+1}-q_k\leq \delta_0+\zeta  q_k\leq |\vw_i|\zeta+\delta_0
    }
    Thus we have:
    \bas{
    \|\vxi\|\leq \delta_0\sqrt{d}+\|\vw\|\zeta.
    }
\end{proof}

\begin{theorem}\label{thm:quantlogbatchoja}
 Fix $\theta \in \bb{0,1}$. Let the initial vector $\vu_{0} \sim \mathcal{N}\bb{0,\id}$. Let the number of batches $b$ and quantization scale $\delta$ be such that $\sqrt{4e^{2}b (4\eta^2 + 32\delta_0^2 d + 98\zeta^2) \log\bb{{d}/{\theta}}} \le 1/2$.
 Then, under assumption~\ref{assumption:data_distribution}with $\eta$ set as $\frac{\alpha\log n}{b(\lambda_1-\lambda_2)}$, where $\alpha$ is set as in Lemma~\ref{lemma:choice_of_learning_rate}, $\delta_0\sqrt{d} \le 0.25$, and $\zeta \le 0.25$, with probability at least $1-3\theta$, the output $\vw_b$ of Algorithm~\ref{alg:quantstdoja} gives:

\bas{
\sin^2(\vw, \vv_1) &\le \frac{24 \log\bb{1/\theta}}{\theta^{3}}\bbb{\frac{d}{n^{2\alpha}} + \frac{5 \alpha \mathcal{V} \log n}{n \bb{\eigengap}^2} + \frac{5 b (4\delta_0 \sqrt{d} + 7\zeta)^2}{\alpha \log n}} + 8(4\delta_0 \sqrt{d} + 7\zeta)^2.
}
\end{theorem}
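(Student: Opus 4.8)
The plan is to follow the proof of Theorem~\ref{thm:quantstdbatchoja} step for step, replacing only the linear-quantization norm estimates by the logarithmic ones from Lemma~\ref{lem:nlquantnorm}, and then invoke Theorem~\ref{thm:standard_quantised_oja_convergence_appendix} with the given $\theta$. Writing $\bD_i=\sum_{j\in B_i}\bX_j\bX_j^\top/(n/b)$ and $\bA_i=\eta\bb{\bD_i+\vxi_{a,i}\vu_{i-1}^\top}+\vxi_{2,i}\vu_{i-1}^\top+(\id+\eta\bD_i)\vxi_{1,i}\vu_{i-1}^\top$ as in Section~\ref{appendix:mb_standard_oja}, one needs admissible values of the parameters in~\eqref{eq:di}--\eqref{eq:adaptivevarbound}. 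The variance parameter is unchanged from the linear case: the $\bX_j$ are i.i.d.\ and the batches disjoint, so $\norm{\E[(\bD_i-\bSig)(\bD_i-\bSig)^\top]}=\norm{\tfrac{b}{n}\E[(\bX_1\bX_1^\top-\bSig)^2]}\le b\Nu/n=:\Nu_0$ exactly as in~\eqref{eq:batch_var_reduction}, and $\M\le 2$ since $\norm{\bX_j}\le1$ a.s.

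The new ingredient is bounding the three families of quantization vectors via Lemma~\ref{lem:nlquantnorm}, which gives $\norm{\vw-\quant(\vw,\qnl)}\le\delta_0\sqrt d+\zeta\norm{\vw}$ in place of the coordinatewise bound $\delta$. I would track the norms of the vectors that get quantized, being careful that the quantizations in Algorithm~\ref{alg:quantstdoja} are \emph{nested} (quantize $\vu_{i-1}$ to get $\w_i$, then quantize $\bX_j(\bX_j^\top\w_i)$, average, quantize again). Since $\norm{\vu_{i-1}}=1$ after normalization, $\norm{\vxi_{1,i}}\le\delta_0\sqrt d+\zeta$; hence $\norm{\w_i}\le 1+\delta_0\sqrt d+\zeta\le 3/2$ using $\delta_0\sqrt d,\zeta\le1/4$; hence $\norm{\bX_j(\bX_j^\top\w_i)}\le\norm{\w_i}\le 3/2$ since $\norm{\bX_j}\le1$, so $\norm{\vxi_{a,j,i}}\le\delta_0\sqrt d+\tfrac32\zeta$ and, by averaging, $\norm{\vxi_{a,i}}\le\delta_0\sqrt d+\tfrac32\zeta$ and $\norm{\vz_i}\le2$; and finally $\norm{\vxi_{2,i}}\le\delta_0\sqrt d+\eta\zeta\norm{\vz_i}\le\delta_0\sqrt d+2\zeta$ using $\eta\le1$. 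Combining with the triangle inequality, $\norm{\bD_i}\le1$, and $\eta\le1$,
\[
\norm{\vXi_i}\le\eta\norm{\vxi_{a,i}}+\norm{\vxi_{2,i}}+(1+\eta)\norm{\vxi_{1,i}}\le 4\delta_0\sqrt d+7\zeta=:\kappa,
\]
the constant $7$ absorbing slack in the intermediate bounds. For $\kappa_1$ I would use that $\vXi_i=v_i\vu_{i-1}^\top$ is rank one, with $v_i:=\eta\vxi_{a,i}+\vxi_{2,i}+(\id+\eta\bD_i)\vxi_{1,i}$ and $\vu_{i-1}$ being $\Fi$-measurable, so $\E[\vXi_i^\top\vXi_i\mid\Fi]=\E[\norm{v_i}^2\mid\Fi]\,\vu_{i-1}\vu_{i-1}^\top$, whence $\normf{\E[\vXi_i^\top\vXi_i\mid\Fi]}=\E[\norm{v_i}^2\mid\Fi]\le\kappa^2=(4\delta_0\sqrt d+7\zeta)^2=:\kappa_1$; alternatively, expand the cross terms, which vanish conditionally by~\eqref{eq:conditionalUncorrelated}, and bound the three diagonal terms by $\eta^2\norm{\vxi_{a,i}}^2$, $\norm{\vxi_{2,i}}^2$, $(1+\eta)^2\norm{\vxi_{1,i}}^2$ (this is what produces the $32\delta_0^2d+98\zeta^2$ in the hypothesis).

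It then remains to check the hypotheses of Theorem~\ref{thm:standard_quantised_oja_convergence_appendix} and substitute. With $\gamma=2(\eta^2\M^2+\kappa^2)$, $\M\le2$, and $\kappa^2\le 32\delta_0^2d+98\zeta^2$, the stated condition $\sqrt{4e^2b(4\eta^2+32\delta_0^2d+98\zeta^2)\log(d/\theta)}\le1/2$ implies $\sqrt{2e^2b\gamma\log(d/\theta)}\le1/2$ and, since $b\ge1$ and $d\ge2$, forces $\kappa$ (hence $\kappa_1\le\kappa^2$) small enough that $\kappa_1\le1/2$; the learning rate $\eta=\alpha\log n/(b\,\eigengap)$ is as required by Lemma~\ref{lemma:choice_of_learning_rate}. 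Plugging $\Nu_0=b\Nu/n$, $\kappa$, $\kappa_1$ into the bound of Theorem~\ref{thm:standard_quantised_oja_convergence_appendix} and simplifying with $\eta^2\Nu_0/(\eta\,\eigengap)=\alpha\Nu\log n/(n\,\eigengap^2)$ and $\kappa_1/(\eta\,\eigengap)=b\kappa_1/(\alpha\log n)$ yields the claimed bound with probability $1-3\theta$. The only delicate point is the numerical bookkeeping of the constants through the nested norm estimates, i.e.\ ensuring the quantization-of-a-quantized-quantity structure does not let norms escalate; there is no new probabilistic content, since conditional unbiasedness and $\Fi$-measurability of $\vXi_i$ for Algorithm~\ref{alg:quantstdoja} were already established in Section~\ref{sec:standard_oja}.
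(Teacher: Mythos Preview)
Your proposal is correct and follows essentially the same approach as the paper: bound each of $\vxi_{1,i},\vxi_{a,i},\vxi_{2,i}$ via Lemma~\ref{lem:nlquantnorm} by tracking norms through the nested quantizations, combine to get $\kappa=4\delta_0\sqrt d+7\zeta$ and $\kappa_1=\kappa^2$, then substitute $\Nu_0=b\Nu/n$ into Theorem~\ref{thm:standard_quantised_oja_convergence_appendix}. The only slip is the claim $\|\vz_i\|\le 2$ (with $\delta_0\sqrt d,\zeta\le 1/4$ one actually gets $\le 2.125$), but the slack in the constant $7$ absorbs this and the final bound is unaffected.
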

\begin{proof}
In order to apply Theorem~\ref{thm:standard_quantised_oja_convergence} we need to bound $\Nu$, $\kappa$ and $\kappa_1$.
We start with the first. For us, $\bD_i$ is defined in Eq~\ref{eq:di}.  Let $\mathcal{R}_i$ denote the random variables in the quantization up to and including the $i^{th}$ update.

Our analysis is analogous to the previous theorem. Note that the $\Nu_0$ parameter is as in Eq~\ref{eq:batch_var_reduction}.

Now we will work out $\kappa$ and $\kappa_1$ since those are the only quantities that change for the nonlinear quantization. Recall that we have,
\bas{
\vXi_i=\eta\vxi_{a,i}\vu_{i-1}^T+\vxi_{2,i}\vu_{i-1}^T+(\id+\eta \bD_i)\vxi_{1,i}\bu_{i-1}^T.
}
We have,
\bas{
&\E[\vXi_i^T\vXi_i|\Fi]\\
&=\eta^2\E[\bu_{i-1} \vxi_{a,i}^T\vxi_{a,i}\bu_{i-1}^T|\Fi]+\E[\bu_{i-1} \vxi_{2,i}^T\vxi_{2,i}\bu_{i-1}^T|\Fi]+\E[\bu_{i-1} \vxi_{1,i}^T(I+\eta\bD_i)(I+\eta\bD_i)^T\vxi_{2,i}\bu_{i-1}^T|\Fi]\\
}
Now we obtain the Frobenius norm of $\vxi_{a,i}$, $\vxi_1$, and $\vxi_2$ under the nonlinear quantization.
We start with the norm of $\w_i$, a quantized version of a unit vector $\vu_{i-1}$. 

By Lemma~\ref{lem:nlquantnorm}, $\|\w_i\|\leq 1+\delta_0\sqrt{d}+\zeta$. Let $\vs_j=\bX_j(\bX_j^T \w_{i}).$ Then,
\bas{
\|\vs_j\|\leq \|\w_i\|\leq 1+\nlerr.
}
Another application of Lemma~\ref{lem:nlquantnorm} gives:
\bas{
\|\vxi_{a,j,i}\|=\norm{\quant(\vs_j,\qnl)-\vs_j}\leq \delta_0\sqrt{d}+(1+\nlerr)\zeta \le \delta_0 \sqrt{d} + 1.5 \zeta
}
which implies $\|\vxi_{a,i}\| \le \delta_0 \sqrt{d} + 1.5 \zeta$. Next, we bound $\vxi_{1,i} = \quant(\vu_{i-1},\qnl)-\vu_{i-1}$. By Lemma~\ref{lem:nlquantnorm},
\bas{
\|\vxi_{1,i}\|\leq \delta_0\sqrt{d} + \zeta \norm{\vu_{i-1}} = \nlerr.
}
Finally we bound $\vxi_{2,i}$. Recall that:
\bas{
\by_i&=\frac{\sum_{j\in \mathcal{B}_j}\bX_j(\bX_j^T \w_{i})}{n/b}+\vxi_{a,i} \notag\\
\vxi_{2,i}&=\quant\bb{\by_i,\delta}-\by_i 
}
Since each $\norm{\bX_j \bX_j^{\top} \vw_i} \le 1 + \nlerr$,
\bas{
\|\by_i\|\leq 1+\nlerr+\|\vxi_{a,i}\| \leq 1 + 2\delta_0 \sqrt{d} + 2.5\zeta \le 3.25.
}
By Lemma~\ref{lem:nlquantnorm},
\bas{
\|\vxi_{2,i}\|\leq \delta_0\sqrt{d}+\zeta \|\by_i\| \le \delta_0 \sqrt{d} + 3.25 \zeta .
}
In all, it follows that
\bas{
\|\vXi_i\|\leq \eta\|\vxi_{a,i}\|+\|\vxi_{2,i}\|+(1+\eta)\|\vxi_{1,i}\| \leq (\delta_0\sqrt{d} + 1.5\zeta) + (\delta_0\sqrt{d} + 3.25\zeta) + 2(\delta_0\sqrt{d} + \zeta) \le 4 \delta_0 \sqrt{d} + 7 \zeta =: \kappa.
}

We are ready to obtain the sin-squared error.  Note that $\M \leq 2$, since $\| \bX_i \| \le 1$ almost surely, for all $i \in [n]$. By Theorem~\ref{thm:standard_quantised_oja_convergence_appendix}, with probability at least $1-3\theta$,
\bas{
\sin^2(\vw, \vv_1) \le \frac{24 \log\bb{1/\theta}}{\theta^{3}}\bbb{\frac{d}{\exp\bb{2\alpha\log(n)}} + \frac{5\bb{\eta^{2}\Nu_0 + \kappa_1}}{\eta\bb{\lambda_{1}-\lambda_{2}}}} + 8 \kappa_{1}.
}
as long as $\sqrt{2e^{2}b\gamma\log\bb{{d}/{\theta}}} \leq \frac{1}{2}$. Our parameter choices are $\mathcal{V}_0 = \frac{b\mathcal{V}}{n}, \kappa = 4\delta_0 \sqrt{d} + 7\zeta,$ and $\kappa_1 = (4\delta_0 \sqrt{d} + 7\zeta)^2$. Therefore,
\bas{
\sin^2(\vw, \vv_1) &\le \frac{24 \log\bb{1/\theta}}{\theta^{3}}\bbb{\frac{d}{n^{2\alpha}} + \frac{5 \alpha \mathcal{V} \log n}{n \bb{\eigengap}^2} + \frac{5 b (4\delta_0 \sqrt{d} + 7\zeta)^2}{\alpha \log n}} + 8(4\delta_0 \sqrt{d} + 7\zeta)^2.
}
\end{proof}

\subsubsection{Finishing the Proofs of Theorems~\ref{thm:batch} and~\ref{thm:nbatch}}

\begin{proof}[Proof of Theorem~\ref{thm:batch}]
For the linear quantization scheme, we apply Theorem~\ref{thm:quantstdbatchoja} with $\theta = 1/30$ and $b = {\Theta}\bb{\frac{\alpha^2 \log^2 n \log d}{\bb{\eigengap}^2}}$. Moreover, since $\delta = \tilde{O}\bb{\frac{{\eigengap}}{\alpha \sqrt{d}}}$, the condition $\sqrt{4e^{2}b (4\eta^2 + 9\delta^2 d) \log\bb{{d}/{\theta}}} \leq \frac{1}{2}$ holds. The Theorem follows by substituting these values into the bound of Theorem~\ref{thm:quantstdbatchoja}.

The proof of the logarithmic scheme follows analogously from Theorem~\ref{thm:quantlogbatchoja}.
\end{proof}

\begin{proof}[Proof of Theorem~\ref{thm:nbatch}] We set $\theta = 1/30$. For the linear quantization scheme, we apply Theorem~\ref{thm:quantstdbatchoja} with $b = n$. Moreover, since $\delta = 2^{2-\beta} =O\bb{ \min\bb{\frac{\eigengap}{\alpha \sqrt{d} \log(n)}, \frac{1}{\sqrt{dn}}}}$, the condition $\sqrt{4e^{2}b (4\eta^2 + 9\delta^2 d) \log\bb{{d}/{\theta}}} \leq \frac{1}{2}$ holds. The Theorem follows by substituting these values into the bound of Theorem~\ref{thm:quantstdbatchoja}.

For the non-linear scheme, the proof follows analogously from Theorem~\ref{thm:quantlogbatchoja}. 
\end{proof}

\subsection{Optimal Choice of Parameters}
\label{appendix:optimal_parameters}

We want to minimize the quantity
\[
\kappa_1 \defeq \zeta^2 + \delta_0^2 d,
\]
where $\zeta = 2^{-\beta_m}$ and $\delta_0 = 4 \cdot 2^{-2^{\beta_e-1}}$. Here, $\beta_m$ and $\beta_e$ are the number of bits used by the mantissa and the exponent, respectively, and satisfy the constraint
\[
\beta_m + \beta_e = \beta.
\]
Then,
\[
\zeta^2 + \delta_0^2 d = 2^{-2(\beta-\beta_e)} + 16d2^{-2^{\beta_e}} =: f(\beta_e).
\]
To find $\beta_e$ that minimizes $f(\beta_e)$ we differentiate with respect to $\beta_e$ and set it to $0$. 
\bas{
f'(\beta_e) &= 2^{-2(\beta-\beta_e)} \cdot  2\ln 2 + 16d \cdot (2^{-2^{\beta_e}} \ln 2) \cdot (-2^{\beta_e} \ln 2) \\
&= \bb{\frac{2^{\beta_e}}{4^{\beta}} - 8d 2^{-2^{\beta_e}} \ln 2} 2^{\beta_e} \cdot 2 \ln 2.
}
It is optimal to take $\beta_e$ such that
\[
2^{\beta_e} 2^{2^{\beta_e}} = 8d \cdot 4^{\beta} \ln 2.
\]
Equivalently, $\beta_e + 2^{\beta_e} = 2\beta + \log_2(8d \ln 2)$. This in particular implies
\[
2^{\beta_e} < 2\beta + \log_2(8d \ln 2) < 2^{\beta_e+1},
\]
so
\[
2\beta + \log_2(8d \ln 2) - 1 < \beta_e < \log_2\bb{2\beta + \log_2(8d \ln 2)}.
\]
Therefore, we choose
\[
\beta_e^{*} = \left \lceil \log_2\bb{2\beta + \log_2(8d \ln 2)} \right \rceil, \;\;\; \beta_m^{*} = \beta-\beta_e^{*}.
\]
This choice of $\beta_e^{*}$ is valid as long as it does not make $\beta_m^{*}$ non-positive. This is true as long as $\beta \ge \max(8, \log_2(d))$. With these values of $\beta_e^{*}$ and $\beta_m^{*}$,
\[
\zeta = 2^{\beta_e^{*}-\beta} < \frac{2^{(1+ \log_2\bb{2\beta + \log_2(8d \ln 2)})}}{2^{\beta}} = \frac{ 2(2\beta + \log_2(8d \ln 2))}{2^{\beta}}
\]
and
\[
\delta_0^2 = \bb{4 \cdot 2^{-2^{\beta_e^{*}-1}}}^2 = 16 \cdot 2^{-2^{\beta_e^{*}}} \leq 16 \cdot 2^{-\bb{2\beta + \log_2(8d \ln 2)}} = \frac{2}{4^{\beta} d \ln 2}.
\]

\section{Proof of Boosting Lemma (Lemma~\ref{lem:success_boosting})
}\label{appendix:boosting}

In this section, we present the proof of the boosting procedure. Our boosting procedure requires a modest assumption that the number of bits $\beta\geq 4$, which is already assumed in Section~\ref{sec:bit_budget} while optimizing the parameters.

\textbf{Proof of Lemma~\ref{lem:success_boosting}}
\begin{proof}
For each $i \in [r]$, define the indicator random variable
\[\chi_{i} := \mathbbm{1}\bb{\sin^2(\vu_i, \vv) \leq \epsilon}.\]
Then, by the guarantees of $\mathcal{A}$, $\Pr(\chi_i = 1) \ge 1-p$, where $p = 0.1$. Let $\mathcal{S} := \left\{i \in [r] : \chi_{i} = 1\right\}$, and define the event
\bas{
    \mathcal{E} \defeq \{|\mathcal{S}| > 0.6r \}.
}
The Chernoff bound for the sum of independent Bernoulli random variables gives
    \bas{
       \mathbb{P}\bb{|\mathcal{S}| \leq \bb{1-\theta}\E\bbb{|\mathcal{S}|}} \leq \exp\bb{-\frac{\theta^{2}\E\bbb{|\mathcal{S}|}}{2}} \,\forall\, \theta \in (0,1).
    }
By linearity of expectation, $\E\bbb{|\mathcal{S}|} \ge (1-p)r$. Setting $\theta = 1/3$,
    \bas{
        \mathbb{P}\bb{\mathcal{E}^c} \le \mathbb{P}\bb{|\mathcal{S}| \leq 0.6 r} \leq e^{-r/20} \le \delta.
    }
It suffices to show that if the event $\mathcal{E}$ holds, then  $\bar{\vu}$ is well-defined and has small sin-squared error with $\vv$. Recall,
    \bas{
        \bar{\vu} \defeq \vu_{i} \text{ such that } \left|\left\{j \in [r] : \qsin \bb{\vu_{i},\vu_{j}} \leq 5\epsilon \right\}\right| \geq 0.5 r,
    }
Conditioned on $\mathcal{E}$, any $i$ that belongs to the set $\mathcal{S}$ satisfies $c_i \ge 0.6r$. Indeed, Lemma~\ref{lem:sin_triangle} gives for any $i,j \in \mathcal{S}$
\bas{
\sin^2 \bb{\vu_i, \vu_j} \le 2\sin^2 \bb{\vu_i, \vv} + 2 \sin^2 \bb{\vv, \vu_j} \le 4\eps,
}
which implies
\[
\Abs{\qsin \bb{\vu_i, \vu_j}} \le \sin^2 \bb{\vu_i, \vu_j} + \eps \le 5\eps
\]
because $4\eps$ is within the range of the quantization grid $\mathcal{Q}_L(\eps)$. Therefore, the algorithm does not return $\bot$ and $\bar{\vu}$ is well-defined.

Now, $|\qsin(\bar{\vu}, \vu_j)| \le 5\eps$ for at least $0.5r$ indices $j \in [r]$ and $|\mathcal{S}| \ge 0.6r$.
In particular, there exists an index $j^* \in S$ for which 
$|\qsin(\bar{\vu}, \vu_{j^*})| \le 5\eps$. Since $5\eps$ is strictly inside the grid $\mathcal{Q}_L(\eps)$, we get $\sin^2(\bar{\vu}, \vu_{j^*}) \le 6\eps$.
We conclude
\bas{
\sin^2(\bar{\vu}, \vv) \le 2\sin^2(\bar{\vu}, \vu_j) +  2\sin^2(\vu_j, \vv) \le 2(6\eps)+2\eps = 14\eps.
}
\end{proof}

\begin{theorem}\label{thm:boostedOja}
Suppose $\mathcal{A}$ is the Oja's algorithm with the setting of Theorem~\ref{thm:batch} or~\ref{thm:nbatch}. Let $\eps$ be the probability $0.9$ error bound guaranteed by Theorem~\ref{thm:batch}, $r = \lceil 20 \log (1/\theta) \rceil$, and $m = nr$. Let $\{\bX_i\}_{i \in [m]}$ be $n$ IID data drawn from a distribution satisfying assumption~\ref{assumption:data_distribution}, and $\vu_j \gets \mathcal{A}(\{\bX_i\}_{(j-1)n+1 \le i \le jn}$ for all $j \in [r]$. Then, the output
of algorithm~\ref{alg:boostedOja} satisfies
\bas{
\sin^2(\bar{\vu}, \vv_1) \le 14 \eps
}
with probability at least $1-\theta$.
\end{theorem}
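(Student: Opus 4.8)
The plan is to derive Theorem~\ref{thm:boostedOja} as a straightforward composition of the single-run guarantee (Theorem~\ref{thm:batch} or Theorem~\ref{thm:nbatch}) with the boosting guarantee of Lemma~\ref{lem:success_boosting}: first establish that the $r$ candidate estimates passed into $\SuccessBoost$ are independent and each individually accurate, then verify the hypotheses of Lemma~\ref{lem:success_boosting} and quote it.

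\textbf{Step 1: independence and per-block accuracy.} Algorithm~\ref{alg:boostedOja} partitions the $m = nr$ samples into $r$ disjoint blocks of size $n$ and sets $\vu_j \gets \mathcal{A}\bb{\{\bX_i\}_{i \in B_j}}$ for $j \in [r]$. Because the $\bX_i$ are i.i.d.\ and the blocks are disjoint, the data feeding the $r$ invocations of $\mathcal{A}$ are independent; together with the fact that each invocation of Algorithm~\ref{alg:quantstdoja} draws its own fresh initialization $\vu_0 \sim \mathcal{N}\bb{0,\id}$, the outputs $\vu_1, \dots, \vu_r$ are mutually independent random vectors. Each block has exactly $n$ samples, which is the regime covered by Theorem~\ref{thm:batch} (resp.\ Theorem~\ref{thm:nbatch}), so that theorem applied to block $j$ gives $\Pr\bb{\sin^2(\vu_j, \vv_1) \le \eps} \ge 0.9$, where $\eps$ is the stated probability-$0.9$ error bound and is precisely the value passed to Algorithm~\ref{alg:boostedOja} to build the grid $\mathcal{Q}_L(\eps)$.

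\textbf{Step 2: boosting.} If $\eps \ge 1/14$ then $14\eps \ge 1 \ge \sin^2(\bar{\vu}, \vv_1)$ deterministically and there is nothing to prove, so we may assume $\eps < 1/14 < 0.75$. Combined with $d > 1$, $\theta \in (0,1)$, $\beta \ge 4$ (already assumed when fixing the quantization parameters in Section~\ref{sec:bit_budget}), and $r = \lceil 20 \log(1/\theta) \rceil$, all hypotheses of Lemma~\ref{lem:success_boosting} hold with the unit vector $\vv = \vv_1$. Applying Lemma~\ref{lem:success_boosting} to the independent vectors $\vu_1, \dots, \vu_r$ yields that $\bar{\vu} = \SuccessBoost\bb{\{\vu_j\}_{j \in [r]}, \qsin, \eps}$ satisfies $\Pr\bb{\sin^2(\bar{\vu}, \vv_1) \le 14\eps} \ge 1 - \theta$, which is the assertion of the theorem.

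The argument is essentially bookkeeping, and the only ingredient requiring genuine attention is the independence claim of Step 1 — one must be sure that reusing the length-$n$ learning-rate schedule on each block does not secretly correlate the runs, which holds precisely because the blocks are disjoint and $\mathcal{A}$ is fed fresh randomness each time. A secondary point, that the error level $\eps$ and the multiples $4\eps, 5\eps, 6\eps$ appearing in the $\SuccessBoost$ analysis lie strictly inside the truncated grid $\mathcal{Q}_L(\eps, \beta)$ so that truncation never distorts the comparisons, is already absorbed into the proof of Lemma~\ref{lem:success_boosting}, so nothing beyond invoking that lemma is needed here. I therefore expect the independence verification to be the (mild) main obstacle.
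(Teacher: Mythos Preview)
Your proposal is correct and follows essentially the same approach as the paper: verify that the $r$ outputs $\vu_1,\dots,\vu_r$ are independent and each satisfies $\Pr\bb{\sin^2(\vu_j,\vv_1)\le\eps}\ge 0.9$ via Theorem~\ref{thm:batch}/\ref{thm:nbatch}, then invoke Lemma~\ref{lem:success_boosting}. The paper's proof is in fact terser---it simply asserts independence and cites the two results---so your additional handling of the trivial case $\eps\ge 1/14$ and your explicit justification of the independence claim only make the argument more complete.
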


\begin{proof}
The vectors $\vu_1, \dots, \vu_r$ are mutually independent. By Theorem~\ref{thm:batch}, $\Pr\bb{\sin^2(\vu_i, \vv_1) > \eps} \le 0.1 \,\forall\, i \in [r]$.
Therefore, Lemma~\ref{lem:success_boosting} applies and the theorem follows.
\end{proof}

\section{Experimental Details}
\label{appendix:experimental_details}

\subsection{Additional Synthetic Experiments}
\label{appendix:additional_synthetic_exp}
\begin{figure}[htb]
  \centering
  \begin{subfigure}[t]{0.32\textwidth}
    \centering
    \includegraphics[width=\textwidth]{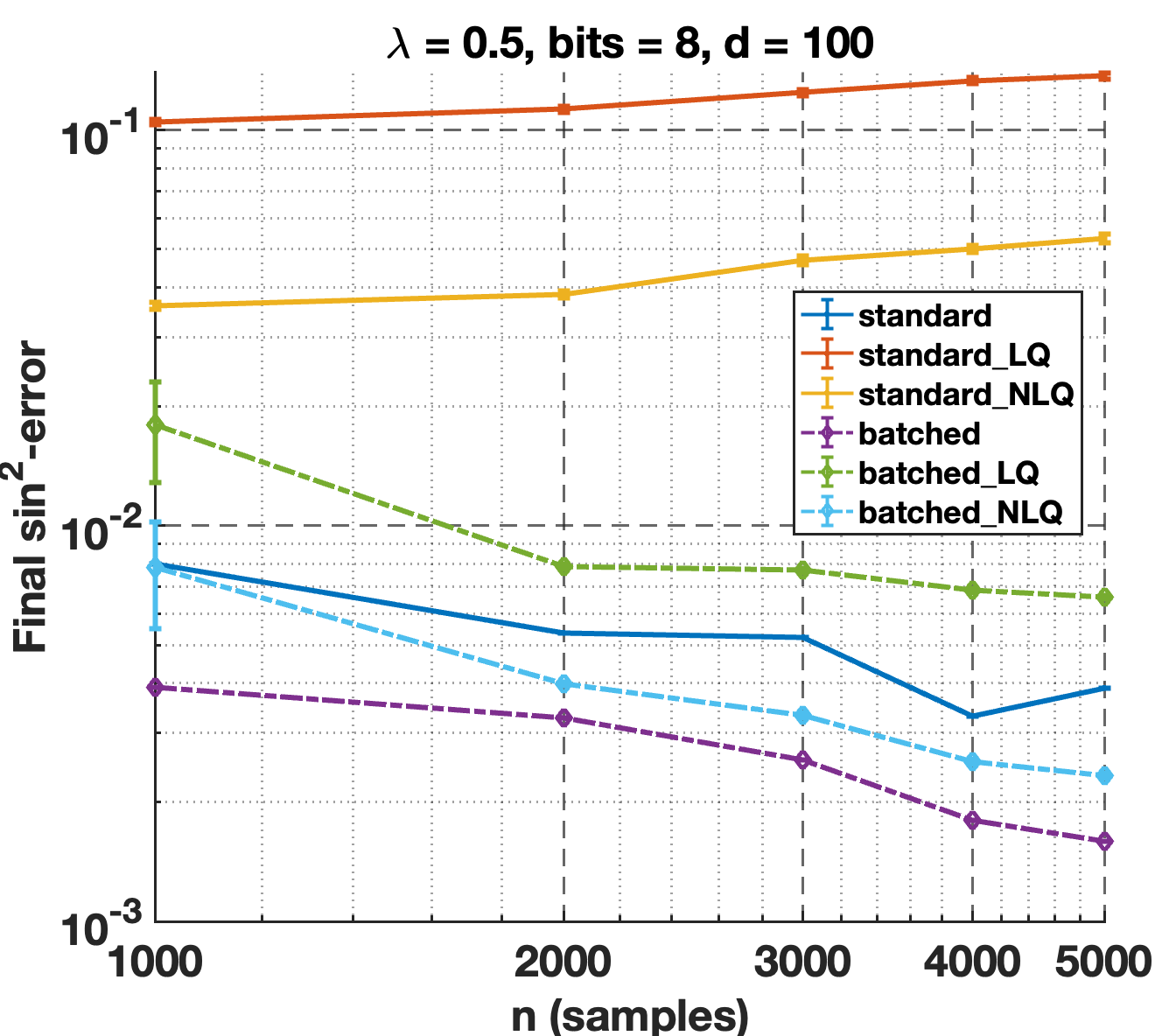}
    \caption{Varying sample size \(n\), fixed \(d=100\), bits \(=8\).}
    \label{fig:app_exp1}
  \end{subfigure}
  \hfill
  \begin{subfigure}[t]{0.32\textwidth}
    \centering
    \includegraphics[width=\textwidth]{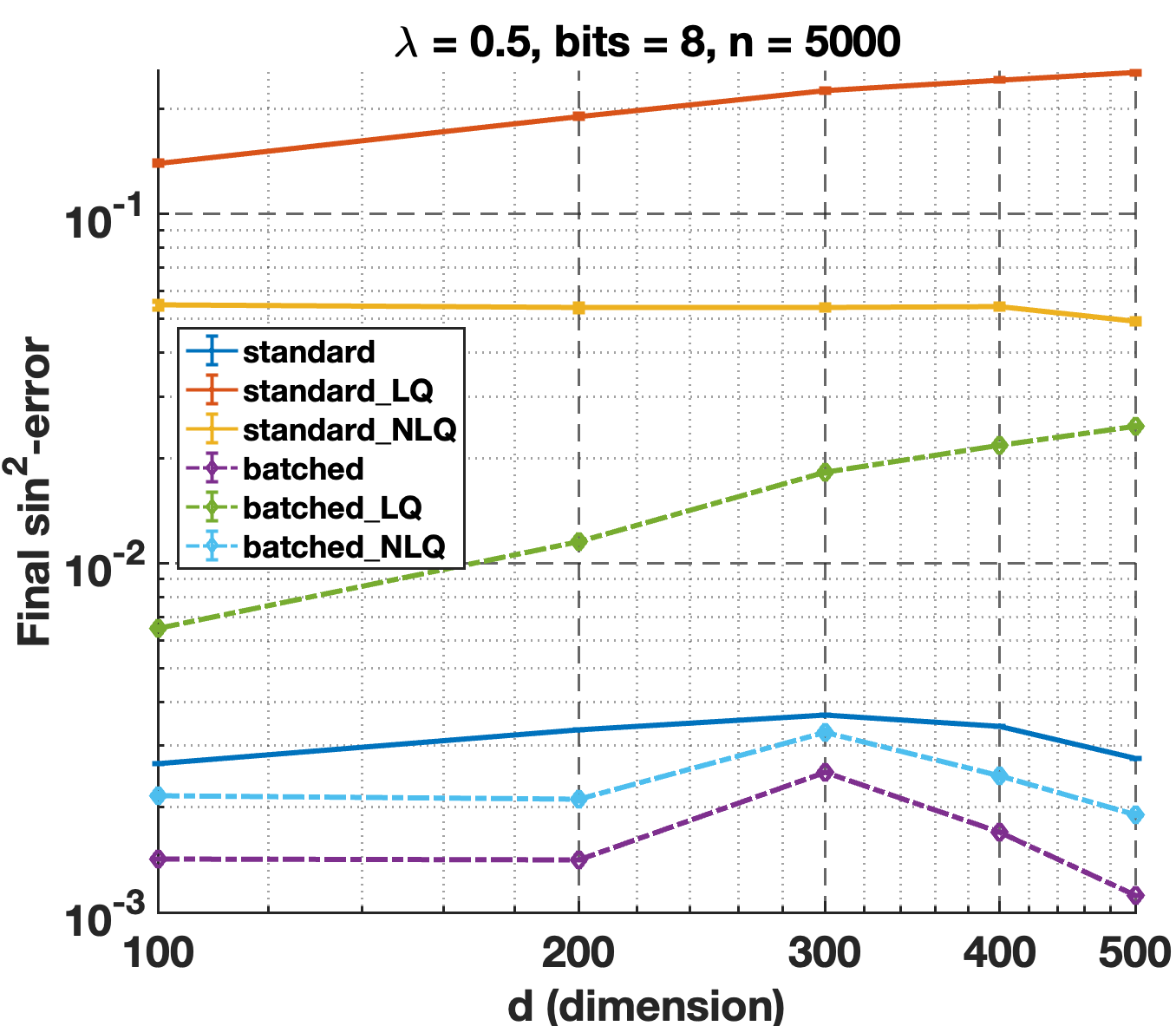}
    \caption{Varying dimension \(d\), fixed \(n=5000\), bits \(=8\).}
    \label{fig:app_exp2}
  \end{subfigure}
  \hfill
  \begin{subfigure}[t]{0.32\textwidth}
    \centering
    \includegraphics[width=\textwidth]{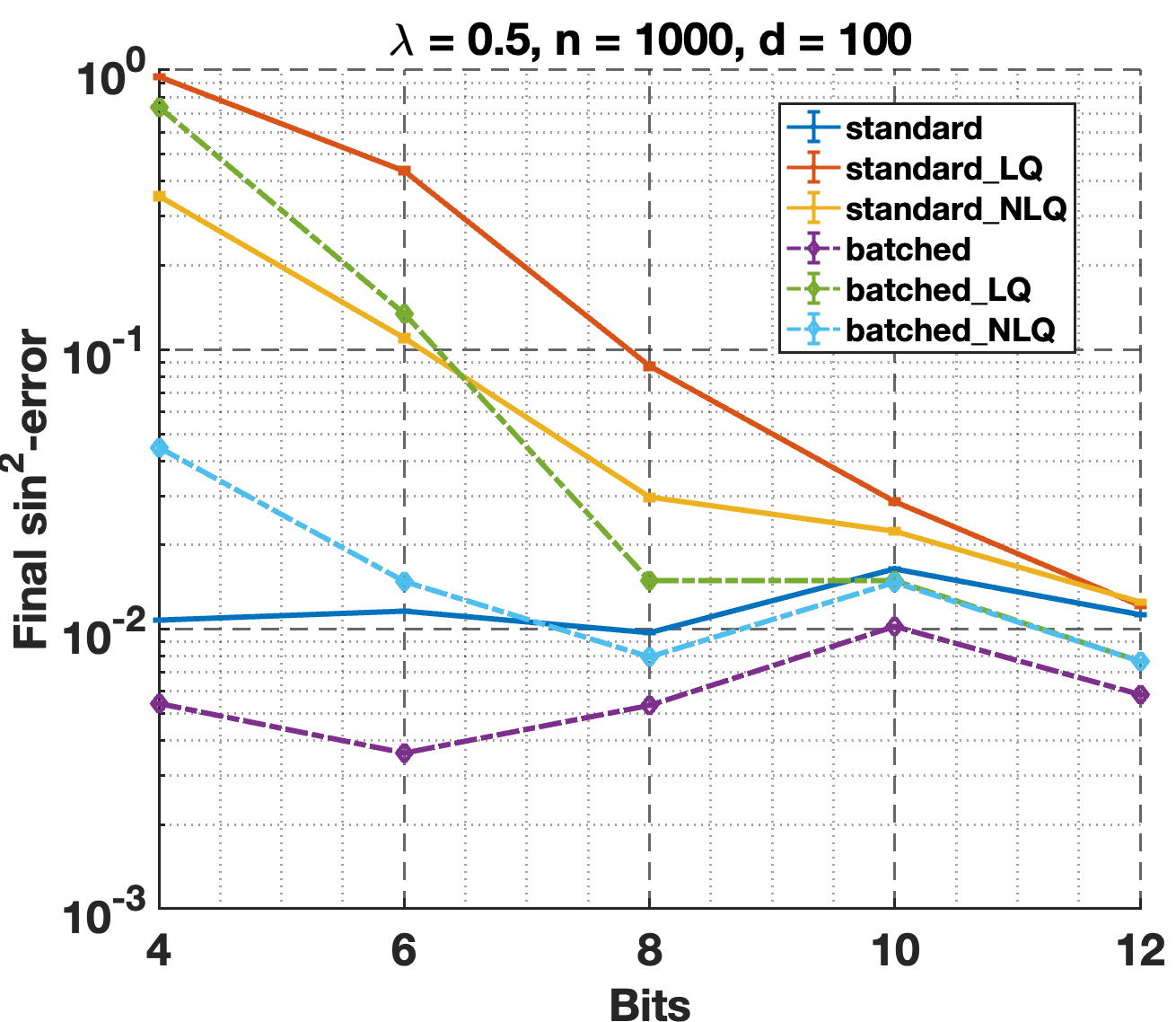}
    \caption{Varying bits \(\beta\), fixed \(n=1000\), \(d=100\).}
    \label{fig:app_exp3}
  \end{subfigure}
  \caption{Variation of \(\sin^2\)-error with: (a) sample size, (b) dimension, and (c) quantization bits.}
  \label{fig:all_experiments_new_synth}
  \vspace{-5pt}
\end{figure}

We generate synthetic datasets via the procedure described in \cite{lunde2022bootstrapping}. The generation process takes as input the number of samples, $n$, the dimension $d$ and an eigenvalue decay parameter $\lambda$. We defer the details of the generation process to the Appendix Section~\ref{appendix:experimental_details}.
Given the sample size $n$, dimension $d$, and decay exponent $\lambda$ in the eigenvalues, we first draw an $n\times d$ matrix $Z$ with independent entries uniformly distributed on $[-\sqrt{3},\sqrt{3}]$ so that each coordinate has unit variance.  We then build a kernel matrix $K\in\mathbb{R}^{d\times d}$ with entries $K_{ij}=\exp\bigl(-|i-j|^{0.01}\bigr)$ and define a variance profile $\sigma_{i}=5\,i^{-\lambda}$ for $i=1,\dots,d$.  The population covariance is formed as $\Sigma=(\sigma\sigma^{\top})\circ K$, where $\circ$ denotes the Hadamard product.  Computing the eigendecomposition of $\Sigma$ yields its square root $\Sigma^{1/2}$, and the observed data matrix is taken as $X=\bigl(\Sigma^{1/2}Z^{\top}\bigr)^{\top}$.  We then extract the largest two eigenvalues $\lambda_{1} > \lambda_{2}$ of $\Sigma$ and the associated top eigenvector $v_{1}$ for evaluation. Figure~\ref{fig:all_experiments_new_synth} shows the results for this dataset, which shows similar trends as the experiments described in Figure~\ref{fig:all_experiments}.

\subsection{Real data experiments} 

This section presents experiments on two real-world datasets. For each dataset, we show $\sin^2$ error with respect to the true offline eigenvector, used as a proxy for the ground truth, varying with the number of bits. The results are plotted in Figure~\ref{fig:mnist_har_experiments}.

The goal of this section is to determine whether real-world experiments reflect the behavior of batched vs. standard methods with linear and logarithmic quantization. Therefore, we use the eigengap computed offline as a proxy of the true eigengap. If we wanted to compute the eigengap in an online manner, we could split the dataset randomly into a holdout set $\mathcal{S}$ and a training set $[n]\setminus \mathcal{S}$; run Oja's algorithm with quantization on a range of eigengaps with outputs $\bu_1,\dots,\bu_m$, and select the one with the largest $\arg\max_i \bu_i^T (\sum_{j\in S} \bD_{j} \bD_{j}^T) \bu_i$ for a held out set $\mathcal{S}$.
\bk

\begin{figure}[!htb]
\begin{tabular}{cc}
     \includegraphics[width=0.45\textwidth]{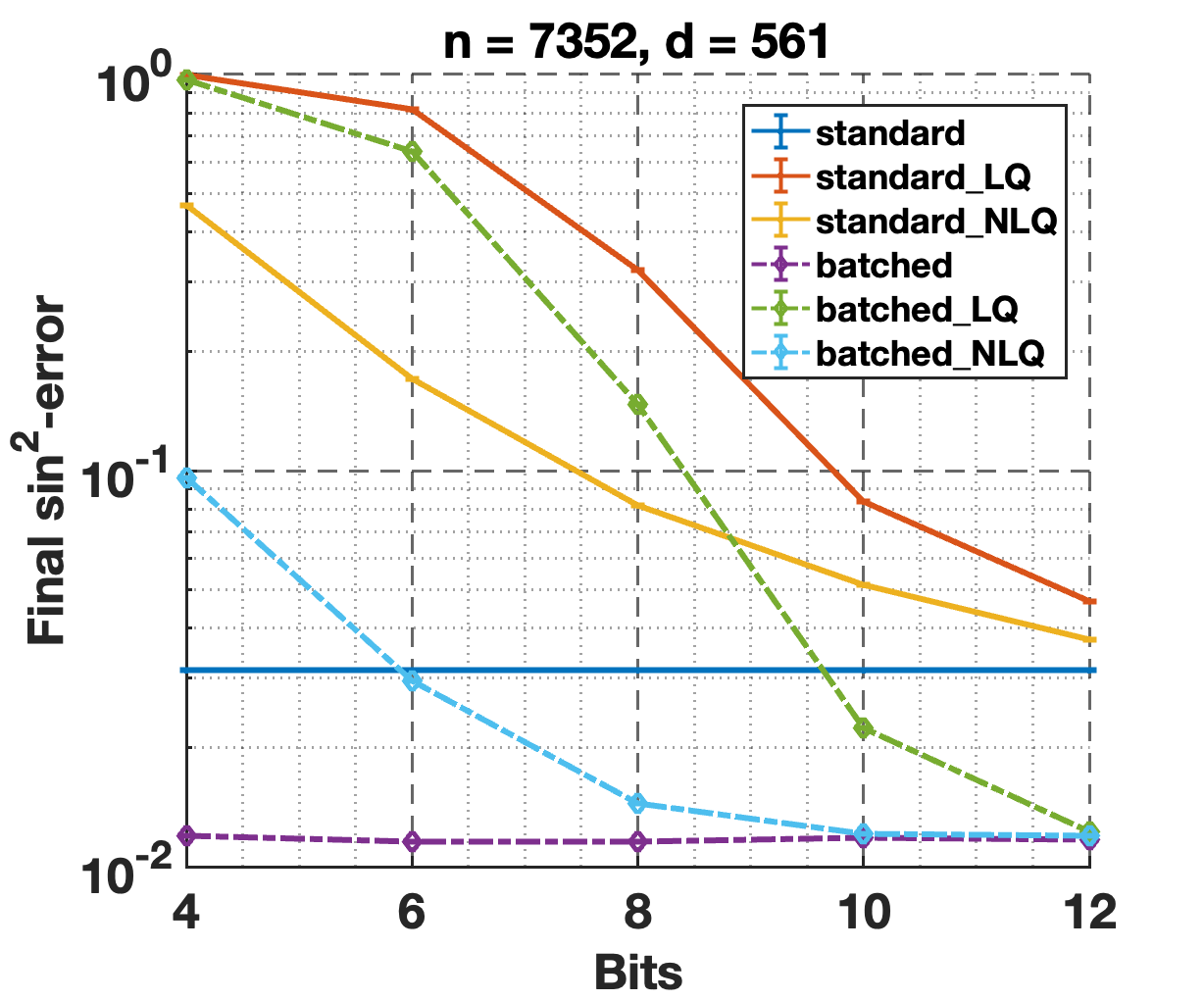}&  \includegraphics[width=0.45\textwidth]{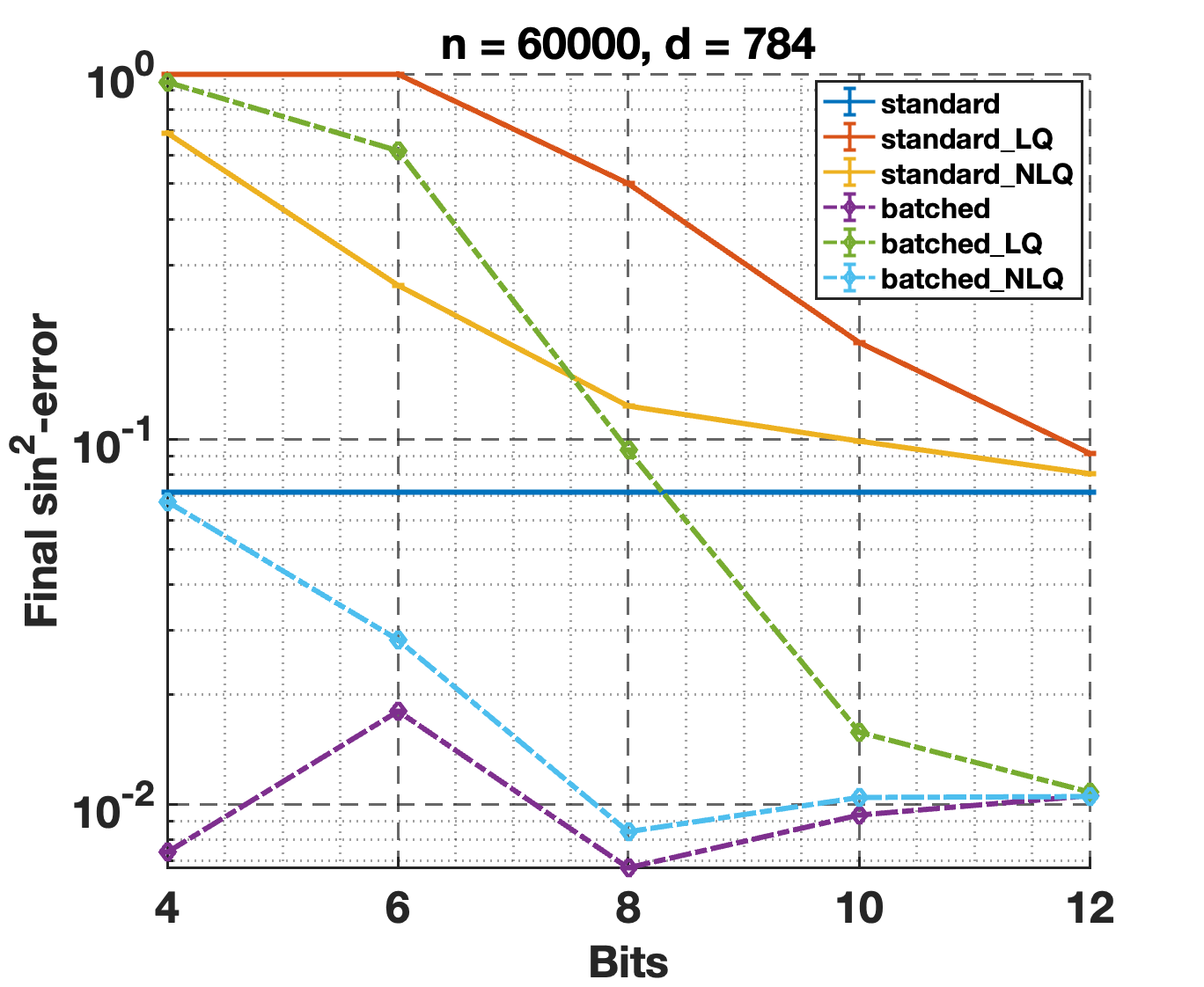}\\
     (a) &(b)
\end{tabular}  
\caption{\label{fig:mnist_har_experiments}Variation of \(\sin^2\)-error with bits for (a) HAR dataset (b) MNIST dataset.}
\label{fig:mnist_exp1}
\end{figure}


\textbf{Time series + missing data}: The Human Activity Recognition (HAR) Dataset \citep{anguita2013public} contains smartphone sensor readings from 30 subjects performing daily activities (walking, sitting, standing, etc.). Each data instance is a 2.56-second window of inertial sensor signals represented as a feature vector. Here, $n=7352$ and $d=561$. For each datum, we also replace 10\% of features randomly by zero to simulate missing data.


\textbf{Image data}: We use the MNIST dataset~\citep{lecun1998gradient} of images of handwritten digits (0 through 9). Here, $n=60,000, d=784$, with each image normalized to a $28 \times 28$ pixel resolution.



These results collectively highlight that using the true offline eigengap (i) under stochastic rounding, batching provides a significant boost in performance since the quantization error does not depend linearly on $n$, 
and (ii) the logarithmic quantization attains a nearly dimension-free quantization error in comparison to linear quantization across a wide range of number of bits.
\section{Related Work}
\label{sec:related_work}

In this section, we provide some more related work on low-precision optimization.~\cite{dettmers2023qlora} introduced QLoRA, which back-propagates through a frozen 4-bit quantized LLM into LoRA modules, enabling efficient finetuning of 65B-parameter models on a single 48 GB GPU with full 16-bit performance retention. Earlier works~\cite{xia2023influencestochasticroundofferrors} examined the impact of stochastic round-off errors and their bias on gradient descent convergence under low-precision arithmetic.~\cite{yu2024collage} propose \emph{Collage}, a lightweight low-precision scheme for LLM training in distributed settings, combining block‐wise quantization with feedback error to stabilize large-scale pretraining.  
Finally, communication‐efficient distributed SGD techniques, such as 1-bit SGD with error feedback \cite{seide2014onebit} and randomized sketching primitives (e.g., Johnson–Lindenstrauss projections \cite{johnson1984extensions}), further underscore the broad efficacy of low-precision computation.

\textbf{Low-Precision Optimization}:
Reducing the bit‐width of model parameters and gradient updates has proven effective for alleviating communication and memory bottlenecks in large‐scale learning.  
QSGD~\cite{QAlistarh2017QSGD} uses randomized rounding to compress each coordinate to a few bits while preserving unbiasedness, incurring only an \(O(\sqrt{d}/2^\beta)\) increase in gradient noise for \(\beta\) bits.  
~\cite{wen2017terngrad} maps gradients to \(\{-1,0,+1\}\) plus a shared scale and demonstrates negligible accuracy loss on ImageNet and CIFAR benchmarks.~\cite{suresh2017distributed} achieve optimal communication–accuracy trade‐offs via randomized rotations and scalar quantization. More recently, ``dimension‐free'' analyses such as~\cite{li2019dimensionfree} avoid scaling the required error rate with model dimension, instead depending on a suitably defined smoothness parameter.


\end{document}